\setlist[enumerate]{leftmargin=.5in}
\setlist[itemize]{leftmargin=.5in}
\newcommand\x{\bm{x}}
\renewcommand\a{\bm{a}}
\newcommand\w{\bm{w}}
\renewcommand\c{\bm{c}}
\newcommand\y{\bm{y}}
\newcommand\z{\bm{z}}
\newcommand\q{\bm{q}}
\renewcommand\v{\bm{v}}
\newcommand\h{\bm{h}}
\newcommand\bz{\bm{0}}
\newcommand\bepsilon{\bm{\epsilon}}
\newcommand\bdelta{\bm{\delta}}
\newcommand\bbeta{\bm{\beta}}
\newcommand\bomega{\bm{\omega}}
\newcommand\bsigma{\bm{\sigma}}
\newcommand\R{\bm{\mathrm{R}}}
\newcommand{\norm}[1] {\left \| #1 \right \|} 
\newcommand{\abrack}[1] {\left \langle #1 \right \rangle} 
\newtheorem{agglomeration}{Agglomeration Property}
\newtheorem{theorem}{Theorem}[section]
\newtheorem{lemma}[theorem]{Lemma}
\newtheorem{corollary}[theorem]{Corollary}
\title{Certifying clusters from sum-of-norms clustering}
\author{Tao Jiang\thanks{School of Operations Research and Information Engineering, Cornell University, Ithaca, New York 14850, USA, {\tt tj293@cornell.edu}.} \and
Stephen Vavasis\thanks{Department of Combinatorics \&  Optimization, University of Waterloo, Waterloo, Ontario,  Canada N2L 3G1, {\tt vavasis@uwaterloo.ca}. }}
\begin{document}
\maketitle

\begin{abstract}
  Sum-of-norms clustering is a clustering formulation based on convex optimization that automatically induces hierarchy. Multiple algorithms have been proposed to solve the optimization problem: subgradient descent by Hocking et al.\ \cite{hocking}, ADMM and ADA by Chi and Lange\ \cite{Chi}, stochastic incremental algorithm by Panahi et al.\ \cite{Panahi} and semismooth Newton-CG augmented Lagrangian method by Sun et al.\ \cite{dsun1}. All algorithms yield approximate solutions, even though an exact solution is demanded to determine the correct cluster assignment. The purpose of this paper is to close the gap between the output from existing algorithms and the exact solution to the optimization problem. We present a clustering test that identifies and certifies the correct cluster assignment from an approximate solution yielded by any primal-dual algorithm.

  Our certification validates clustering for both unit and multiplicative weights. The test may not succeed if the approximation is inaccurate. However, we show the correct cluster assignment is guaranteed to be certified
  by a primal-dual path following algorithm after sufficiently many iterations, provided that the model parameter $\lambda$ avoids a finite number of bad values.
  Numerical experiments are conducted on Gaussian mixture and half-moon data, which indicate that carefully chosen multiplicative weights increase the recovery power of sum-of-norms clustering.

\end{abstract}

\textbf{keywords:}
  Sum-of-norms clustering, second-order cone programming, finite termination, stopping criteria, duality, strict complementarity


\section{Introduction}
\label{sec:intro}
Clustering is a fundamental problem in unsupervised learning. The goal of clustering is to seek a partition of $n$ points, $\a_1,\a_2, \ldots,\a_n \in \mathbb R^d$, such that points in the same subset are closer to each other than those that are not. Clustering is usually formulated as a discrete optimization problem, which is combinatorially hard to solve and beset by nonoptimal local minimizers. Classical methods such as k-means and hierarchical clustering are prone to these issues.

Meanwhile, issues of hardness and suboptimality of many nonconvex optimization problems are resolved by convex relaxation. At an affordable computational cost, convex relaxation yields a good solution to the original problem. Pelckmans et al.\ \cite{pelckmans}, Hocking et al.\ \cite{hocking}, and Lindsten et al.\ \cite{lindsten} propose the following convex formulation for the clustering problem:
\begin{equation}
    \min_{\x_1,\ldots,\x_n\in\R^d} f'(\x) = \frac{1}{2}\sum_{i=1}^n \norm{\x_i-\a_i}^2 +\lambda\sum_{1 \le i < j \le n}\norm{\x_i-\x_j},
    \label{eq:son-clustering}
\end{equation}
where $\a_1, \a_2, \ldots, \a_n$ denote the given data and $\lambda$ denotes the tuning parameter. The formulation \eqref{eq:son-clustering} is best known as sum-of-norms (SON) clustering, convex clustering, or clusterpath clustering. The clusters are read from the optimizer of \eqref{eq:son-clustering}. Let $\x_1^*, \x_2^*, \ldots, \x_n^*$ denote the optimizer. Points $i, i'$ are assigned to the same cluster if $\x_i^* = \x_{i'}^*$, and they are assigned to different clusters otherwise. In an iterative algorithm, $\x_i^*$ is never known precisely, so an approximation to this test is required,
that is, a means to certify that $\x_i^* = \x_j^*$ given only an approximate solution. We provide more details below.

The first term of the objective function ensures $\x^*$ is a good approximation of the original data $\a$, while the second term penalizes the differences $\x^*_i - \x^*_{i'}$. As a result, the second term tends to make $\x_i^*$ equal to each other for many $i$. In this paper, we only consider the $l_2$ norm.

The purpose of this paper is to present our clustering test which certifies the clusters from sum-of-norms clustering. We also justify the test rigorously. Our clustering test takes a primal and dual feasible solution for the second-order cone formulation of sum-of-norms clustering and a clustering determined by the user. The test may report `success' or `failure'.
If the test reports `success', all the clusters are correctly identified and a certificate is produced.

Our clustering test applies to sum-of-norms clustering with both unit and multiplicative weights.
Given positive parameters $r_1,\ldots,r_n$,
the formulation of multiplicative weights is as follows:
\begin{equation}
    \min_{\x_1,\ldots,\x_n\in\R^d} \frac{1}{2} \sum_{i =1}^n r_i \|\x_i - \a_i\|^2 + \lambda \sum_{1 \le i<j \le n} r_i r_j \|\x_i - \x_j\|\quad \text{where } r_i \ge 0 \; \forall i = 1. \dots, n.
\label{eq:multweight}
\end{equation}
The
utility of multiplicative weights is illustrated in Section~\ref{sec:exper}.
If $r_1,\ldots,r_n$ are integral, then multiplicative weighting is equivalent to
repeating data points.

The general case of weighted clustering allows an arbitrary nonnegative
weights to appear before each term in
\eqref{eq:son-clustering}.
Since unit weights are special case of multiplicative weights, we present all theorems, tests and certificates for the multiplicative weights in the rest of the paper.

Our paper is structured as follows. The test for the unit weights \eqref{eq:son-clustering}, multiplicative weights \eqref{eq:multweight} and the proof of correctness are stated in Section \ref{sec:test}. The proof heavily relies on the sufficient condition for clustering, which is presented in Section \ref{sec:suff_cond}. The test requires the knowledge of a primal and dual feasible solution for the conic formulation of sum-of-norms clustering, which can be constructed from the output of any primal-dual algorithm. The conic formulation and algorithms are stated in Section \ref{sec:feasibility_CS}. If a primal-dual path following algorithm is used,
the test is guaranteed to report `success'
after a finite number of iterations except the test may never report `success' when $\lambda$ is at a fusion value. We include the definition of fusion values in Section \ref{sec:related_work}.
These results are shown in Section \ref{sec:guarantee}. The proof of the theoretical guarantee is a result of the properties of the central path, which are stated in Section \ref{sec:central_path}. In Section \ref{sec:exper}, we present a few computational experiments to verify our test in practice.

\section{Related work}
\label{sec:related_work}
In this paper, we only consider the $l_2$ norm. Nonetheless, the reader should be aware that many other norms such as $l_1, l_\infty$, or the general $l_p$ norms are also extensively studied in the literature of sum-of-norms clustering. Furthermore, the tuning parameter $\lambda$ controls the number of clusters indirectly. When $\lambda=0$, each point is assigned to a cluster of its own. When $\lambda$ is sufficiently large, all points are assigned to the same cluster.
This may lead one to conjecture that
as $\lambda$ increases, clusters may fuse but never break apart, i.e., that the following property holds.

\begin{agglomeration}
\label{def:agglomeration}
  The trajectory $\x^*(\lambda)=[\x_1^*(\lambda),\ldots,\x_n^*(\lambda)]\in\R^{nd}$,
  $\lambda\in[0,\infty)$  of optimizers to
    a weighted formulation of \eqref{eq:multweight} has
    the {\em agglomeration property} if
    whenever $i,j\in n$ and $\lambda\in [0,\infty)$ satisfy
    $\x_i^*(\lambda)=\x_j^*(\lambda)$,
    then $\x_i^*(\lambda')=\x_j^*(\lambda')$
    for all $\lambda'\ge \lambda$.
\end{agglomeration}

Let \textit{fusion values} denote the values of $\lambda$ at which clusters fuse to form a larger cluster. If the agglomeration property holds, there are at most $n$ fusion values. However,  the agglomeration property does not necessarily hold for arbitrary weights. Hocking et al.\ \cite{hocking} observe the occurrence of splits in the $l_2$-norm clusterpath. They conjecture that sum-of-norms clustering is agglomerative when the weights are unit or exponentially decaying (see below). Chiquet et al.\ \cite{chiquet} prove the conjecture for the family of multiplicative weights,
of which unit weights are special case.

\begin{theorem}[Chiquet et al\ \cite{chiquet}]\label{thm:agglomeration}
Sum of norms clustering with multiplicative weights \eqref{eq:multweight} admits the agglomeration property.
\end{theorem}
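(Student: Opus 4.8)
The plan is to prove the agglomeration property by exploiting the structure of the optimality conditions for the strongly convex objective in \eqref{eq:multweight}. The key monotonicity fact I would try to establish is that the trajectory $\x^*(\lambda)$ is nonexpansive with respect to an appropriate recentering, and more specifically that once two points fuse they remain fused as $\lambda$ increases. Since the objective in \eqref{eq:multweight} is strongly convex in $\x$ (the first term is a positive-definite quadratic whenever all $r_i>0$), the optimizer $\x^*(\lambda)$ is unique and varies continuously in $\lambda$, so the trajectory is well-defined and it makes sense to speak of fusion as a permanent merging.

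First I would write down the subdifferential optimality condition: $\x^*(\lambda)$ is optimal if and only if there exist dual variables $\bm{\beta}_{ij}$ (for $i<j$) with $\bm{\beta}_{ij}$ in the subdifferential of $\norm{\cdot}$ at $\x_i^*-\x_j^*$, i.e.\ $\bm{\beta}_{ij}=(\x_i^*-\x_j^*)/\norm{\x_i^*-\x_j^*}$ when the difference is nonzero and $\norm{\bm{\beta}_{ij}}\le 1$ otherwise, such that the stationarity equation $r_i(\x_i^*-\a_i)+\lambda\sum_{j\ne i} r_i r_j\bm{\beta}_{ij}=\bz$ holds for every $i$. The natural strategy is to fix a cluster $C$ that is present at some $\lambda_0$ (meaning all $\x_i^*(\lambda_0)$ for $i\in C$ coincide at a common value $\v$), and to show that for $\lambda'>\lambda_0$ one can construct a feasible dual certificate keeping $C$ fused. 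Concretely, I would consider the restricted problem in which the points of $C$ are forced to share a common center, solve that reduced problem, and then argue that the internal dual variables $\bm{\beta}_{ij}$ for $i,j\in C$ can be chosen with norm at most one so that the full optimality conditions are satisfied. The agglomeration property is equivalent to saying this internal certificate remains feasible as $\lambda$ grows.

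The technical heart is controlling how the internal subgradients scale with $\lambda$. When $C$ is collapsed to a single point, the stationarity conditions for the individual members of $C$ impose a system of the form $\sum_{j\in C} r_i r_j \bm{\beta}_{ij}=\bm{q}_i/\lambda$ where $\bm{q}_i$ collects the external forces (the data-fitting residual plus the pull from points outside $C$), and I must exhibit $\bm{\beta}_{ij}$ with $\norm{\bm{\beta}_{ij}}\le 1$ solving this. As $\lambda$ increases, the right-hand side $\bm{q}_i/\lambda$ tends to shrink in the relevant sense, which is what should make the feasibility constraint $\norm{\bm{\beta}_{ij}}\le 1$ easier to satisfy; the argument that the feasibility of the internal system is preserved under increasing $\lambda$ is exactly the monotonicity statement one needs. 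I would most likely formalize this by showing that the quantity governing feasibility is monotone in $\lambda$, perhaps via a variational characterization of the minimal-norm internal flow.

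The step I expect to be the main obstacle is precisely this internal feasibility argument: showing that a valid choice of the within-cluster subgradients persists as $\lambda$ increases, because the external forcing $\bm{q}_i$ itself depends on $\lambda$ through the center $\v(\lambda)$ and through the interactions with other, possibly also-merging, clusters. Handling the interactions between several simultaneously evolving clusters—and ensuring the recentered external force does not grow faster than $\lambda$—is the delicate part. I anticipate that Chiquet et al.\ circumvent a direct construction by instead differentiating the optimality conditions along the path (where the path is differentiable) and proving that the derivative of $\x_i^*-\x_j^*$ vanishes once the difference vanishes, using the multiplicative structure $r_ir_j$ crucially to decouple the cluster's internal dynamics from the external pull; the multiplicative form is what makes the collapsed cluster behave like a single weighted point with weight $\sum_{i\in C} r_i$, so that agglomeration reduces to the same statement one level up. Establishing that reduction cleanly, and dealing with the nondifferentiable fusion instants, is where the real work lies.
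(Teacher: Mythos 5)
Your sketch assembles the right ingredients---a dual certificate for fusion, monotonicity in $\lambda$, and the observation that multiplicative weights make a fused cluster act like one point of weight $\sum_{i\in C}r_i$---but it leaves precisely the load-bearing step unexecuted, and it misjudges what that step requires. (Note that the paper itself contains no proof of Theorem~\ref{thm:agglomeration}; it is quoted from Chiquet et al.~\cite{chiquet}. The paper does, however, state the tool that makes the argument short, namely Theorem~\ref{thm:clust_suff}.) The decisive fact your plan never isolates is that the left-hand side of the certificate system \eqref{eq:zstar1}, $\a_i-\frac{1}{r'}\sum_{l\in C}r_l\a_l$, is independent of $\lambda$, while $\lambda$ enters only through the constraint $\norm{\q_{ij}^*}\le\lambda$. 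Hence feasibility of \eqref{eq:zstar1} is \emph{trivially} monotone in $\lambda$: the same $\q_{ij}^*$ that work at $\lambda_0$ work at every $\lambda'\ge\lambda_0$, and Theorem~\ref{thm:clust_suff} then keeps $C$ fused. The whole proof therefore reduces to showing that fusion at $\lambda_0$ forces \eqref{eq:zstar1} to be solvable with bound $\lambda_0$.

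That implication is exact, not delicate, and this is where your sketch goes astray. Let $C$ be the \emph{maximal} set fused at $\lambda_0$, with common value $\v$. In the stationarity condition \eqref{eq:KKTcond} for $i\in C$, every external subgradient $\bdelta^*_{\abrack{ij}}$ with $j\notin C$ equals $-\lambda_0(\v-\x_j^*)/\norm{\v-\x_j^*}$ (well defined since maximality gives $\x_j^*\ne\v$), which is the \emph{same vector for every} $i\in C$. Subtracting the $r_i$-weighted average of \eqref{eq:KKTcond} over $i\in C$ from the equation for a fixed $i$, the internal terms telescope by antisymmetry of $\bdelta^*_{\abrack{ij}}$ and the external terms cancel \emph{identically}---they do not merely grow slower than $\lambda$, they vanish---leaving exactly $\a_i-\frac{1}{r'}\sum_{l\in C}r_l\a_l=\sum_{j\in C}r_j\q^*_{\abrack{ij}}$ with $\q^*_{ij}=-\bdelta^*_{ij}$ and $\norm{\q^*_{ij}}\le\lambda_0$. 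This exact cancellation is what the multiplicative structure $r_ir_j$ buys, and it is why your two stated worries are moot: the ``recentered external force'' is zero (not something to be bounded by $\lambda$), and other simultaneously evolving clusters never enter the within-$C$ system at all. By contrast, your unrecentered claim that the right-hand side ``$\bm{q}_i/\lambda$ tends to shrink'' is false as stated (its external part stays of order one as $\lambda$ grows), and your fallback of differentiating the optimality conditions along the path is both unnecessary and ill-suited to the nondifferentiable fusion instants you yourself flag. As written, the proposal is a plan whose central step remains open; filled in as above, it becomes the short certificate argument of Chiquet et al.
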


Exponentially decaying weights have the following formulation.
\begin{equation}
    \min_{\x_1,\ldots,\x_n\in\R^d}\frac{1}{2}\sum_{i=1}^n\norm{\x_i-\a_i}^2+
\lambda\sum_{1 \le i<j \le n}\exp(-\phi\norm{\a_i-\a_j}^2)\norm{\x_i-\x_j}\quad \text{where } \phi \ge 0,
\label{eq:expweight}
\end{equation}
The family of exponentially decaying weights demonstrates promising performance in both simulated and real datasets \cite{ChiLange}.
However, the agglomeration property for \eqref{eq:expweight} is yet to be established. Chi and Steinerberger\ \cite{ChiSteinerberger} partially address the conjecture by proving the existence of some decaying weights which inherit the agglomeration property. They further argue that exponentially decaying weights share certain features of the decaying weights that guarantee agglomeration.

Many algorithms, both primal-only and primal-dual methods, have been proposed to solve \eqref{eq:son-clustering}. Primal-only algorithms include subgradient descent by Hocking et al.\ \cite{hocking} and stochastic incremental algorithm by Panahi et al.\ \cite{Panahi}. Primal-dual algorithms are also widely considered such as ADMM and ADA by Chi and Lange\ \cite{ChiLange}, and semismooth Newton-CG augmented Lagrangian method by Yuan et al.\ \cite{dsun1}. These algorithms solve both \eqref{eq:son-clustering} and its dual problem 
(refer to Section~\ref{sec:feasibility_CS}).
All these iterative algorithms yield only approximate solutions, even though exact knowledge of the optimizer is demanded to determine the clusters.

To identify the correct clusters from an approximate solution, authors in practice propose two methods. The first clustering method applies to all algorithms and makes use of an artificial tolerance, $\epsilon > 0$. If the approximate solution satisfies $\norm{\x_i - \x_{i'}} \le \epsilon$, $i, i'$ are assigned to the same cluster. Otherwise, $i, i'$ are assigned to different clusters. Hence, the value of artificial tolerance is critical.
The other method is specific for primal-dual algorithms such as ADMM and AMA
\cite{ChiLange} and is described at the beginning of
Section~\ref{sec:test}.

The first method is not robust in the following sense.
Since the relation $\norm{\x_i - \x_{i'}} \le \epsilon$ is not transitive, it is not clear how the test would cluster points $i,j,k$ if $\norm{\x_i-\x_j} \le \epsilon, \norm{\x_j-\x_k} \le \epsilon$, and $\norm{\x_i-\x_k} > \epsilon$.
Neither method is associated with an accuracy certificate.
The clusters obtained by these tests could deviate from the clusters corresponding to the optimizer of \eqref{eq:son-clustering}. The inaccuracy may lead to the failure of known properties of sum-of-norms clustering such as the recovery of a mixture of Gaussians and the agglomeration property. It has been established that for the appropriate choice of $\lambda$, \eqref{eq:son-clustering} exactly recovers a mixture of Gaussians due to Panahi et al.\ \cite{Panahi}, Sun et al.\ \cite{dsun1}, and Jiang, Vavasis and Zhai \cite{jiangvavasiszhai}. However, it is unknown if the recovery result still holds when the approximate test is applied. Moreover, Chiquet et al\ \cite{chiquet} prove the agglomeration conjecture (Theorem \ref{thm:agglomeration}) with some techniques which may not be applicable when the approximate test is implemented. Thus the agglomeration property may no longer hold.

To resolve the issue of inaccuracy, Hocking et al.\ \cite{hocking} develop a two-step method based on the first clustering test as described above. The first step is detecting potential fusions using the first clustering test. The artificial tolerance is chosen to be some fraction of the minimum distance between two data points, $\min_{1 \le i < i' \le n}\norm{\a_i - \a_{i'}}$. The second step is verifying potential fusions by checking if the detected fusions improve the objective value. Friedman et al.\ \cite{friedman2007} present a similar approach to detect fusions for a fused-lasso problem with coordinate descent algorithms. The algorithm includes a descent cycle, a fusion cycle and a smoothing cycle. The descent cycle employs coordinate descent to solve a fused-lasso problem. When the coordinate descent gets stuck, the algorithm enters the fusion cycle. The fusion cycle merges any adjacent pairs if the fusion of the pair decreases the objective value. However, it only examines the potential fusions of pairs, but it does not consider the fusions of three points or more. When the fusion cycle fails to merge any adjacent pairs, there may still exist a fusion of three points or more that improves the objective value. To resolve the issue, Friedman et al.\ \cite{friedman2007} introduce a smoothing cycle. The smoothing cycle varies some parameters in the fused lasso problem, which allows fusions of more than two in the long run. Both methods by Friedman et al.\ \cite{friedman2007} and Hocking et al.\ \cite{hocking} guarantee a correct solution. Unfortunately, they are both very slow as they investigate all possible fusion events.
Furthermore, none of the above-mentioned results produces a checkable certificate.

\section{Sufficient conditions on clustering}
\label{sec:suff_cond}
Let $C \subseteq \{1,2,...,n\}$ denote a subset. To draw meaningful conclusions about $C$, we use the sufficient condition in this section to develop our test. Theorem \ref{thm:clust_suff} is due to Chiquet et al.\ \cite{chiquet}, which is a sufficient condition for clustering. The reader may refer to the work by Jiang, Vavasis and Zhai\ \cite{jiangvavasiszhai} for an exposition of Theorem \ref{thm:clust_suff}.

Let $\x^*$ denote the optimal solution to \eqref{eq:multweight}, and let $\x$ denote the output of some primal-dual algorithm which solves \eqref{eq:multweight}. The optimizer $\x^*$ must satisfy the following condition:
\[
  \x_i^*-\a_i-\sum_{j = i+1}^n r_j\bdelta_{ij}^* + \sum_{j = 1}^{i-1} r_j\bdelta_{ji}^* =\bz \qquad\forall i=1,\ldots,n,
\]
where $\bdelta_{ij}^*$
is the subgradient of the Euclidean norm $-\lambda \norm{\x_i - \x_j}$ with respect to $\x_i$
satisfying
$$\bdelta_{ij}^* = \left\{
\begin{array}{ll}
  -\lambda\frac{\x_i^*-\x_j^*}{\norm{\x_i^*-\x_j^*}}, & \mbox{for $\x_i^*\ne \x_j^*$}, \\
  \mbox{arbitrary point in $B(\bz,\lambda)$}, & \mbox{for $\x_i^*=\x_j^*$},
\end{array}
\right.
$$
for all $1 \le i < j \le n$.
We have adopted the following subscripting notation, which
will continue for the remainder of the paper.
Suppose there exists a
sequence of $n(n-1)/2$ vectors
$\{\bdelta_{ij}\in\R^d: 1\le i<j\le n\}$.
Then
\begin{equation}
  \bdelta_{\abrack{ij}} :=\left\{
  \begin{array}{ll}
    \bdelta_{ij}, & i < j, \\
    -\bdelta_{ji}, & i > j, \\
    \bz, & i = j.
  \end{array}
  \right.
\label{eq:abrack_notation}
\end{equation}
This notation renders the first-order conditions more
compact by avoiding separate summations for the
$i<j$ and $i>j$ cases
as shown below:
\begin{equation}
  \x_i^*-\a_i-\sum_{j=1}^n r_j\bdelta_{\langle ij \rangle}^*=\bz \qquad\forall i=1,\ldots,n,
\label{eq:KKTcond}
\end{equation}

We use $B(\c,\rho)$ to denote a closed Euclidean ball centered at $\c$ of radius $\rho$. The condition described above is the key to the proof of Theorem \ref{thm:clust_suff}.


\begin{theorem}
Suppose there exist
$\q_{ij}^*$ for all $i, j \in C, i < j$
solving the following system \eqref{eq:zstar1}. Then there exists some $\hat\x \in \R^d$ such that the minimizer $\x^*$ of \eqref{eq:multweight} satisfies $\x_i^*=\hat\x$ for $i\in C$, hence $C$ is a cluster or part of a larger cluster.
\begin{equation}
\begin{aligned}
\a_i-\frac{1}{r'}\sum_{l\in C}r_l\a_l&=
\sum_{j\in C}r_j \q_{\abrack{ij}}^*, \quad \forall i\in C,\\
\norm{\q_{ij}^*} &\leq \lambda, \quad \forall i,j\in C, i < j 
\end{aligned}
\label{eq:zstar1}
\end{equation}
\label{thm:clust_suff}
Here, $r'=\sum_{i\in C}r_i$
\end{theorem}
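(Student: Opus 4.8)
The plan is to exhibit a primal–dual pair that satisfies the first-order optimality conditions \eqref{eq:KKTcond} and in which all coordinates indexed by $C$ coincide. Since the quadratic term of \eqref{eq:multweight} makes the objective strictly convex (the weights $r_i$ are positive), its minimizer is unique and the KKT conditions are sufficient for optimality; hence the constructed point must equal $\x^*$, and we conclude $\x_i^*=\hat\x$ for all $i\in C$.

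To produce the location $\hat\x$ and the subgradients on edges leaving $C$, I would first pass to the \emph{reduced} instance obtained by merging the points of $C$ into a single weighted super-point placed at the weighted centroid $\bar\a:=\frac{1}{r'}\sum_{l\in C}r_l\a_l$ with weight $r'$, while every $k\notin C$ keeps its data $\a_k$ and weight $r_k$. The key point that makes this reduction exact is that multiplicative weights are consistent under contraction: the total edge weight between $C$ and an external point $k$ is $\sum_{i\in C}r_ir_k=r'r_k$, exactly the edge weight produced by the super-point. The reduced instance is itself a strictly convex multiplicative-weight problem, hence has a unique minimizer; let $\hat\x$ be the optimal position of the super-point and let $\y_k^*$ be the optimal positions of the external points. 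From the first-order condition of the reduced problem at the super-point I extract external subgradients $\bdelta_{Ck}^*$ with $\norm{\bdelta_{Ck}^*}\le\lambda$ satisfying $\hat\x-\bar\a-\sum_{k\notin C}r_k\bdelta_{Ck}^*=\bz$ (the common factor $r'$ cancels).

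Next I would lift this to a candidate solution of the original problem: set $\x_i=\hat\x$ for $i\in C$ and $\x_k=\y_k^*$ for $k\notin C$, assign every edge $(i,k)$ with $i\in C,\,k\notin C$ the common subgradient $\bdelta_{\abrack{ik}}^*=\bdelta_{Ck}^*$ (legitimate because all points of $C$ share the location $\hat\x$, so these edges have identical direction), and assign each internal edge $(i,j)$ with $i,j\in C$ the subgradient $\bdelta_{ij}^*=-\q_{ij}^*$, extended antisymmetrically via \eqref{eq:abrack_notation}. Verifying \eqref{eq:KKTcond} then splits into two cases. For an external point $k$ the condition reduces, after collecting the terms contributed by the $|C|$ edges running into $C$ into a single aggregate $r'\bdelta_{Ck}^*$, to exactly the reduced-problem condition at $k$, which holds by construction. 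For $i\in C$, substituting the super-point relation $\sum_{k\notin C}r_k\bdelta_{Ck}^*=\hat\x-\bar\a$ cancels the $\hat\x$ and external terms and leaves precisely $\a_i-\bar\a=\sum_{j\in C}r_j\q_{\abrack{ij}}^*$, which is the first line of the hypothesis \eqref{eq:zstar1}; the admissibility $\norm{\bdelta_{ij}^*}=\norm{\q_{ij}^*}\le\lambda$ is the second line of \eqref{eq:zstar1}. Invoking strict convexity then identifies this KKT point as the unique minimizer.

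The step I expect to require the most care is the subgradient bookkeeping: one must track the antisymmetric $\abrack{\cdot\,\cdot}$ convention so that each internal edge is counted consistently from both endpoints, and so that $\bdelta_{Ck}^*$ appears with the correct sign and the correct aggregate weight $r'$ in the external-point equations. Checking the admissibility of $\bdelta_{Ck}^*$ in the degenerate case $\hat\x=\y_k^*$ (which occurs when $C$ is only part of a larger cluster) also needs the subgradient to be read as an arbitrary element of $B(\bz,\lambda)$ rather than a normalized difference, which is precisely why the conclusion is phrased as ``$C$ is a cluster or part of a larger cluster.''
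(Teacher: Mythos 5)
Your proof is correct and follows essentially the same route as the proof this paper relies on (Chiquet et al., as exposited in Jiang--Vavasis--Zhai): merge $C$ into a single super-point of weight $r'$ at the weighted centroid, solve that reduced problem to obtain $\hat\x$ and the external subgradients, then lift back using the hypothesized $\q_{ij}^*$ (with the sign flip $\bdelta_{ij}^*=-\q_{\abrack{ij}}^*$) as the internal subgradients and verify \eqref{eq:KKTcond}, concluding by uniqueness of the minimizer. Your handling of the two delicate points --- the antisymmetric bookkeeping that makes the aggregate weight $r'$ appear in the external equations, and the degenerate case $\hat\x=\y_k^*$ where $\bdelta_{Ck}^*$ must be read as an arbitrary element of $B(\bz,\lambda)$ --- matches the cited argument.
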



\section{Feasibility and complementary slackness}
\label{sec:feasibility_CS}
In this section, we consider a second-order cone (SOCP) formulation of \eqref{eq:multweight}. Both feasibility and complementary slackness are stated. A second-order cone program can be directly solved by a feasible interior-point method. For infeasible algorithms such as the ADMM proposed by Chi and Lange \cite{ChiLange}, we construct a feasible solution for the SOCP from the outputs of such algorithms.

We first present the equivalent SOCP formulation to \eqref{eq:multweight}, which will be derived in this section,
\begin{subequations}
\label{eq:mwsocp_primal}
\begin{align}
    \underset{\x, \y, \z, s, u, t}{\text{min}}
        & \quad f(\x, \y, \z, s, u, t) = \sum_{i=1}^n r_i s_i + \lambda \sum_{1 \le i < j \le n} r_i r_j t_{ij}\label{eq:mwp_obj}\\
    \text{s.t}
        & \quad r_ir_j(\x_i - \x_j - \y_{ij}) = \bz\;, \quad \forall 1 \le i<j \le n\;, \label{eq:mwp_constr1}\\
        & \quad r_i(\x_i - \z_i - \a_i)=\bz\;, \quad \forall i = 1,\ldots,n\;, \label{eq:mwp_constr2}\\
        & \quad r_i(s_i - u_i - 1)=0 \;, \quad \forall i = 1,\ldots,n\;, \label{eq:mwp_constr3}\\
        & \quad t_{ij} \ge \norm{\y_{ij}} \;, \quad \forall 1 \le i<j \le n \;, \label{eq:mwp_constr4}\\
        & \quad s_i \ge \norm{\begin{pmatrix}
    \z_i\\
    u_i
  \end{pmatrix}}\;, \quad\forall i=1,\ldots,n\;. \label{eq:mwp_constr5}
\end{align}
\end{subequations}
The scaling factors $r_ir_j$ in
\eqref{eq:mwp_constr1} and $r_i$ in
\eqref{eq:mwp_constr2}--\eqref{eq:mwp_constr3} do not alter the feasible region but
cause a rescaling of the dual variables, which simplifies later notation.
The SOCP formulation of the dual problem has variables $\bbeta_i\in\R^d,\gamma_i\in\R$
for $i=1,\ldots,n$ and $\bdelta_{ij}\in\R^d$ for $1\le i<j\le n$ and
is as follows.
\begin{subequations}
\label{eq:mwsocp_dual}
\begin{align}
    \underset{\bdelta, \bbeta, \gamma}{\text{max}}
        & \quad h(\bdelta, \bbeta, \gamma) = \sum_{i = 1}^n r_i \a_i^T \bbeta_i + \sum_{i = 1}^n r_i \gamma_i \label{eq:mwd_obj} \\
    \text{s.t}
    & \quad \sum_{j=1}^n r_j \bdelta_{\abrack{ij}} + \bbeta_i=\bz\;, \quad \forall i=1,\ldots,n
    \;, \label{eq:mwd_constr1} \\
        & \quad \lambda \ge \norm{\bdelta_{ij}}\;, \quad \forall 1 \le i<j \le n\;, \label{eq:mwd_constr2}\\
        & \quad 1-\gamma_i \ge \norm{\begin{pmatrix}
        \bbeta_i\\
        \gamma_i
      \end{pmatrix}} \;, \quad \forall i = 1,\ldots,n\;. \label{eq:mwd_constr3}
\end{align}
\end{subequations}
Both primal and dual problems are feasible, and Slater condition holds for both of them. Consider the following primal and dual feasible solution:
\begin{displaymath}
\x_i = \a_i,\; \z_i = \bz, \; s_i = 1, \; u_i = 0, \; \bbeta_i = \bz, \; \gamma_i = 0 \qquad \forall i =1,\ldots,n;
\end{displaymath}
\begin{displaymath}
\y_{ij} = \a_i - \a_j, \; t_{ij} = \norm{\a_i - \a_j} + 1, \; \bdelta_{ij} = \bz \qquad \forall 1\le i<j \le n;
\end{displaymath}
which is a also primal and dual Slater point. Hence, strong duality holds since the problem is formulated as convex optimization.

We derive the SOCP \eqref{eq:mwsocp_primal} as follows. Introduce auxiliary variables $\y_{ij}$ and $\z_i$ and constraints \eqref{eq:mwp_constr1} and \eqref{eq:mwp_constr2}.
Define variables $t_{ij}$ and constraint \eqref{eq:mwp_constr4}.
Introduce variables $s_i$ and $u_i$ satisfying \eqref{eq:mwp_constr3} and
\[
s_i \ge \frac{\norm{\x_i - \a_i}}{2} + \frac{1}{2}, \quad \forall i= 1,\dots, n.
\]
Multiply the constraint in the previous line by 2, and add $s_i^2 - 2 s_i$ to both sides. Simplify the inequality, and substitute $u_i$ to obtain constraint \eqref{eq:mwp_constr5}.

The objective function has the following upper bound using auxiliary variables and new constraints:
\begin{equation}
    \begin{aligned}
    f'(\x) &= \frac{1}{2}\sum_{i=1}^n r_i\norm{\x_i-\a_i}^2 +\lambda\sum_{1 \le i<j \le n}r_ir_j\norm{\x_i-\x_j}\\
    &= \frac{1}{2}\sum_{i=1}^n r_i\|\x_i - \a_i\|^2 + \lambda \sum_{1 \le i < j \le n} r_ir_j\|\y_{ij}\|\\
    &\le \sum_{i=1}^n r_is_i - \frac{1}{2}\sum_{i=1}^nr_i + \lambda \sum_{1 \le i < j \le n} r_ir_jt_{ij} \\
    &= f(\x, \y, \z, s, u, t) - \frac{1}{2}\sum_{i=1}^nr_i \label{eq:ff'}.\\
    \end{aligned}
\end{equation}
Notice that \eqref{eq:multweight} is a minimization problem. For every feasible solution $\x$ to \eqref{eq:multweight}, we can construct a feasible solution $(\x', \y', \z', s', u', t')$ to \eqref{eq:mwsocp_primal} such that $\x = \x'$ and the upper bound \eqref{eq:ff'} is achieved. Hence, we can replace the objective function $f'(\x)$
with the linear function $f(\x, \y, \z, s, u, t)$ as shown in \eqref{eq:mwp_obj}. The original problem \eqref{eq:multweight} and the SOCP \eqref{eq:mwsocp_primal} are indeed equivalent. Since we omit the constant term $\frac{1}{2}\sum_{i=1}^nr_i$ in the objective function of the SOCP, the objective values of \eqref{eq:multweight} at $\x$ and \eqref{eq:mwsocp_primal} at the corresponding solution $(\x', \y', \z', s', u', t')$ differ by this constant.


For the clustering test, we require a primal-dual feasible pair for
\eqref{eq:mwsocp_primal}--\eqref{eq:mwsocp_dual}.  Not all interior-point
methods maintain feasibility.  Furthermore, other primal-dual algorithms
such as the Chi-Lange ADMM \cite{ChiLange} do not compute a full set
of SOCP variables.  We assume that the iterative
algorithm is able to provide at least
an approximately optimal primal solution $\x$ and an
approximately optimal, approximately feasible dual vector $\bdelta$.
This $\bdelta$ appears in \eqref{eq:mwsocp_primal} and also corresponds to the Lagrange
multipliers in the Lagrangian dual that would be produced by ADMM.
From such a pair $(\x,\bdelta)$, it
is possible to round $\bdelta$ to feasibility and produce a full list of the
variables of both \eqref{eq:mwsocp_primal} and \eqref{eq:mwsocp_dual}.  The
derivation of the Lagrangian dual and the formulas for obtaining all SOCP
variables are provided in Section~\ref{sec:SM_A}.

Let $(\x, \y, \z, s, u, t, \bdelta, \bbeta, \gamma)$ be a primal and dual feasible solution for the SOCP formulation of sum-of-norms clustering.
Let us define
residuals of the complementarity conditions
$\bepsilon^{ij} = \begin{pmatrix}
\epsilon_1^{ij}\\
\bepsilon_2^{ij}
\end{pmatrix}$ for all $1 \le i < j \le n$ and
$\bsigma^{i} = \begin{pmatrix}
\sigma_1^{i}\\
\bsigma_2^{i}\\
\sigma_3^{i}\\
\end{pmatrix}$ for all $i= 1, \dots, n$ as follows:

%
\begin{align}
t_{ij} \lambda + \y_{ij}^T \bdelta_{ij} &= \epsilon_1^{ij}, \quad \forall 1\le i<j \le n, \label{eq:mwcs_a1}\\
t_{ij} \bdelta_{ij} + \lambda \y_{ij} &= \bepsilon_2^{ij}, \quad \forall 1\le i<j \le n, \label{eq:mwcs_a2}\\
s_{i} (1 - \gamma_i) + \z_{i}^T \bbeta_{i} + u_i \gamma_i &= \sigma_1^{i}, \quad \forall i = 1, \dots, n\label{eq:mwcs_b1},\\
s_{i} \bbeta_{i} + (1 - \gamma_{i}) \z_{i} &= \bsigma_2^i, \quad \forall i = 1, \dots, n \label{eq:mwcs_b2},\\
s_{i} \gamma_i + (1 - \gamma_{i}) u_i &= \sigma_3^i, \quad \forall i = 1, \dots, n \label{eq:mwcs_b3}.
\end{align}

Define $\mu:= f'(\x) - h'(\bdelta)$ to be the duality gap at the feasible solution. We have $\sigma_1^i, \epsilon_1^{ij}$ upper bounded by $\mu$, and $\norm{\bepsilon^{ij}_2}$ and $\norm{\begin{pmatrix}
\bsigma_2^i\\
\sigma_3^i
\end{pmatrix}}$ upper bounded by $O(\sqrt{\mu})$. Specifically, there hold

%
\begin{equation}
    \norm{\bepsilon_2^{ij}} \le \sqrt{\frac{1}{r_i r_j} \left (\sum_{l=1}^n r_l \norm{\bar \a - \a_l}^2 \mu+ 2\mu^2 \right)}.
\label{eq:mw_bound_epsilon2}
\end{equation}
for all $1 \le i < j \le n$ and
\begin{equation}
\norm{\begin{pmatrix}
\bsigma_2^i\\
\sigma_3^i
\end{pmatrix}} \\
\le \sqrt{\left (\frac{1}{r_i}\sum_{l=1}^n r_l \norm{\bar \a - \a_l}^2 + \frac{2 \mu}{r_i} + 1 \right) \cdot \left (\frac{1}{2} + \frac{1}{r_i} \left (\sum_{l=1}^n r_l (r' - r_i) \lambda \norm{\a_l} + \mu \right ) \right ) \cdot \mu}.
\label{eq:mw_bound_sigma23}
\end{equation}
for all $i = 1, ..., n$. The derivation is also included in Section \ref{sec:SM_B_mw}.

\section{Clustering test}
\label{sec:test}
In this section, we present a clustering test that applies to sum-of-norms clustering with the family of multiplicative weights 
stated in \eqref{eq:multweight}.
Given a primal and dual feasible solution $(\x, \y, \z, s, u, t, \bdelta, \bbeta, \gamma)$ with a duality gap $\mu$,
we first find candidate clusters using either method described in Section \ref{sec:intro}. To employ the first method, we select an index $i$ from $[n]$ arbitrarily. Construct a ball of radius $\mu^{0.75}$ about $\x_i$. Create a candidate cluster with all indices $k$ such that $\x_k$ is located in the ball about $\x_i$ (i.e. $\{k:\norm{\x_i-\x_k}\le\mu^{3/4}\}$). Now find an index $j$ that is not in any candidate cluster and construct a ball about $\x_j$. Repeat until all data points are used up. To implement the second method, we compute
$\v_{ij} = \mathrm{prox}_{\lambda/\nu \norm{\cdot}} (\x_i - \x_j - \nu^{-1} \bdelta_{ij})$ for all $i, j$
for the current iterate,
where $\nu$ is a chosen parameter (it corresponds to the augmented Lagrangian parameter in ADMM). We assign
candidate
clusters based on the graph induced by $\v_{ij}$'s (see Chi and Lange \cite{ChiLange}). In the graph, each node corresponds to one data point. An edge connects nodes $i,j$ if and only if $\v_{ij} = \bz$. Apply breadth-first search to identify the connected components. Then each connected component is a cluster.

If the output of the primal-dual algorithm is not feasible for the SOCP, we construct a feasible solution as described in the previous section. With the feasible solution, we define
\begin{equation}
  \bomega_i := \frac{\sigma_3^i}{s_i} \z_i + \frac{1}{s_i} \bsigma_2^i,
  \quad \forall i.
  \label{eq:subgrad_omega_def}
\end{equation}

For any candidate cluster $C$, we compute
\begin{equation}
  \q_{ij}:= -\bdelta_{ij}
  + \frac{1}{r'} \cdot (\x_i - \x_j - \bomega_i + \bomega_j) - \frac{1}{r'} \sum_{k \notin C} r_k (\bdelta_{\abrack{ik}} - \bdelta_{\abrack{jk}})
  \label{eq:q_def}
\end{equation}
for all $i, j \in C, i < j$.
We use $r'=\sum_{i\in C}r_i$ to denote the sum of weights for a cluster $C$.
For any distinct pair of candidate clusters $C_k \cup C_{k'}$, define $\bar \x:= \sum_{l\in C_k \cup C_{k'}}\frac{r_l}{\sum_{l'\in C_k \cup C_{k'}} r_{l'}} \x_l$
to be the
weighted
centroid of $C_k \cup C_{k'}$. Compute
 $D_{k,k'}:=
\sum_{l \in C_k \cup C_{k'}} r_l \norm{\x_l - \bar \x}^2$.
Check if the following two conditions hold:

\textbf{\textit{CGR subgradient condition}:} All CGR subgradients
$\q_{ij}$
satisfy the CGR inequality
$\norm{\q_{ij}} \le \lambda$.

\textbf{\textit{Separation condition}:} All distinct pairs of candidate clusters $C_k, C_{k'}$ satisfy $D_{k,k'} > 2\mu$.
If both conditions hold for all candidate clusters, then the test terminates and reports `success'. Each candidate cluster is a real cluster given by the optimal solution, thus all clusters are correctly identified. The
$\q_{ij}$'s
serve as certificates. If either condition fails for any candidate cluster, the test reports `failure'. One has to run more iterations of the algorithm to decrease the duality gap $\mu$. Repeat the process until the test reports `success'. Note that this test is algorithm-independent, but it does require the algorithm to be of primal-dual type.

The CGR subgradient test is validated by Theorem \ref{thm:clust_suff} as presented in Section \ref{sec:suff_cond}. The CGR subgradients condition certifies that each cluster we identify is indeed a cluster or part of a larger cluster. This is presented in Section \ref{sec:cgr_subgradient}. The separation condition certifies that there is no super-cluster with more than one cluster we identify by Theorem \ref{thm:noclust_suff}, as shown in Section \ref{sec:CGR_duality_gap}. Therefore, we determine all clusters correctly when the test succeeds.






\subsection{CGR subgradients and clustering corollary}
\label{sec:cgr_subgradient}
Let $C \subseteq [n]$ denote a subset of points. Let $m:= |C|$ denote the cardinality of $C$.

\begin{lemma}
For all $i, j \in C, i < j$,
$\q_{ij}$
as defined in \eqref{eq:q_def} satisfies
\begin{align}
  \a_i-\frac{1}{r'} \sum_{l\in C}r_l \a_l &= \sum_{j\in C} r_j \q_{\abrack{ij}}, \quad \forall i \in C
\end{align}
where $r' = \sum_{i \in C} r_i$.
\label{cluster_lemma_subgradient}
\end{lemma}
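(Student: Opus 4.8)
The plan is to verify the identity by direct substitution of the definitions, exploiting the antisymmetry of the bracket notation and then collapsing the dual-feasibility and complementary-slackness relations. Note first that optimality is never used: the argument rests only on primal--dual feasibility of the SOCP pair together with the residual definitions, which is precisely what makes the $\q_{ij}$'s behave like exact subgradients even for an approximate solution.

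First I would record that the defining formula \eqref{eq:q_def} for $\q_{ij}$ extends consistently to the bracket notation for all $i,j\in C$: writing out the $i>j$ case and using $\bdelta_{\abrack{ij}}=-\bdelta_{ji}$ shows that
\[
\q_{\abrack{ij}}=-\bdelta_{\abrack{ij}}+\frac{1}{r'}(\x_i-\x_j-\bomega_i+\bomega_j)-\frac{1}{r'}\sum_{k\notin C}r_k(\bdelta_{\abrack{ik}}-\bdelta_{\abrack{jk}})
\]
holds uniformly, and that this expression also vanishes at $j=i$, matching $\q_{\abrack{ii}}=\bz$. This lets me sum over all $j\in C$ without splitting cases. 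Expanding $\sum_{j\in C}r_j\q_{\abrack{ij}}$ and using $\sum_{j\in C}r_j=r'$, the ``diagonal'' part collapses to $\x_i-\bomega_i-\frac{1}{r'}\sum_{j\in C}r_j(\x_j-\bomega_j)$; the piece $-\sum_{j\in C}r_j\bdelta_{\abrack{ij}}$ recombines with the $-\sum_{k\notin C}r_k\bdelta_{\abrack{ik}}$ coming out of the outside-$C$ sum into $-\sum_{j=1}^n r_j\bdelta_{\abrack{ij}}$, which by dual feasibility \eqref{eq:mwd_constr1} equals $\bbeta_i$, leaving a leftover cross term $\frac{1}{r'}\sum_{j\in C}r_j\sum_{k\notin C}r_k\bdelta_{\abrack{jk}}$.

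The crux of the argument, and the step I expect to be the main obstacle, is the identity $\bbeta_i-\bomega_i=-\z_i$, equivalently $\bbeta_i+\x_i-\bomega_i=\a_i$. To establish it I would substitute $\bbeta_i=(\bsigma_2^i-(1-\gamma_i)\z_i)/s_i$ from \eqref{eq:mwcs_b2} and $\bomega_i$ from \eqref{eq:subgrad_omega_def} into $\bbeta_i-\bomega_i$; the $\bsigma_2^i$ terms cancel, leaving $-((1-\gamma_i)+\sigma_3^i)\z_i/s_i$. I would then insert the primal relation $u_i=s_i-1$ (from \eqref{eq:mwp_constr3}) into the residual \eqref{eq:mwcs_b3} to get $(1-\gamma_i)+\sigma_3^i=s_i$, whence $\bbeta_i-\bomega_i=-\z_i$, and finally \eqref{eq:mwp_constr2} gives $\z_i=\x_i-\a_i$. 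This is where feasibility of the SOCP pair is genuinely consumed. Applying the same identity at index $j$ yields $\x_j-\bomega_j=\a_j-\bbeta_j=\a_j+\sum_{k=1}^n r_k\bdelta_{\abrack{jk}}$.

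Finally I would assemble the pieces. The diagonal part becomes $\a_i-\frac{1}{r'}\sum_{j\in C}r_j(\a_j+\sum_{k=1}^n r_k\bdelta_{\abrack{jk}})$. Splitting the inner sum over $k\in C$ and $k\notin C$, the $C\times C$ double sum $\sum_{j,k\in C}r_jr_k\bdelta_{\abrack{jk}}$ vanishes by antisymmetry of $\bdelta_{\abrack{\cdot}}$, while the surviving $C\times C^{c}$ contribution exactly cancels the leftover cross term from the $\bdelta$ group. What remains is $\a_i-\frac{1}{r'}\sum_{l\in C}r_l\a_l$, which is the claimed identity. The only real care needed is bookkeeping of the index ranges and signs in the two cancellations.
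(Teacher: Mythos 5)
Your proof is correct and takes essentially the same route as the paper's: your central identity $\bbeta_i-\bomega_i=-\z_i$ (obtained by feeding $u_i=s_i-1$ into \eqref{eq:mwcs_b3} and then into \eqref{eq:mwcs_b2}) is exactly the paper's equation \eqref{eq:mwopt_condp1}, and the rest of the bookkeeping—dual feasibility \eqref{eq:mwd_constr1}, antisymmetry of $\bdelta_{\abrack{\cdot}}$ killing the $C\times C$ double sum, and cancellation of the outside-$C$ cross terms—coincides with the paper's computation. The only difference is direction: you expand $\sum_{j\in C}r_j\q_{\abrack{ij}}$ and simplify it down to $\a_i-\frac{1}{r'}\sum_{l\in C}r_l\a_l$, whereas the paper sums its pointwise identity over $C$ and subtracts, then recognizes the result as the sum of the $\q_{\abrack{ij}}$'s.
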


The proof of this lemma is deferred to the supplementary material \ref{sec:SM_2}.


\begin{corollary}
If
$\norm{\q_{ij}} \le \lambda$
holds for all $i,j \in C, i < j$, where $C$ is a candidate cluster, then $C$ is a cluster or part of a larger cluster.
\end{corollary}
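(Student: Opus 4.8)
The plan is to obtain this corollary as an immediate consequence of the two results established earlier: Lemma~\ref{cluster_lemma_subgradient} and the sufficient condition of Theorem~\ref{thm:clust_suff}. The strategy is simply to verify that the vectors $\q_{ij}$ defined in \eqref{eq:q_def} satisfy both constraints of the system \eqref{eq:zstar1}, and then invoke Theorem~\ref{thm:clust_suff} verbatim.

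First I would recall that Lemma~\ref{cluster_lemma_subgradient} asserts precisely that the $\q_{ij}$ arising from the approximate solution satisfy the equality
\[
  \a_i-\frac{1}{r'} \sum_{l\in C}r_l \a_l = \sum_{j\in C} r_j \q_{\abrack{ij}}, \quad \forall i \in C,
\]
with $r'=\sum_{i\in C}r_i$. This is exactly the first line of \eqref{eq:zstar1} with the identification $\q_{ij}^* = \q_{ij}$. So the equality constraint of the sufficient condition is met by construction, with no further work.

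Next, the hypothesis of the corollary is that $\norm{\q_{ij}} \le \lambda$ holds for all $i,j \in C$ with $i<j$. This is verbatim the second line of \eqref{eq:zstar1}. Hence both constraints of the system \eqref{eq:zstar1} are satisfied by the choice $\q_{ij}^* = \q_{ij}$, so this choice constitutes a valid solution to \eqref{eq:zstar1}.

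Finally I would apply Theorem~\ref{thm:clust_suff} directly: since a solution to \eqref{eq:zstar1} exists for the subset $C$, the theorem guarantees an $\hat\x\in\R^d$ with $\x_i^*=\hat\x$ for all $i\in C$, which is the statement that $C$ is a cluster or part of a larger cluster. The proof is therefore essentially a bookkeeping step matching hypotheses; there is no genuine obstacle, provided the equality of Lemma~\ref{cluster_lemma_subgradient} has been established. The only point deserving care is ensuring that the $\q_{ij}$ computed from the \emph{approximate} primal--dual data still satisfy the exact equality demanded by \eqref{eq:zstar1}, but this is exactly the content of Lemma~\ref{cluster_lemma_subgradient} and may be assumed here.
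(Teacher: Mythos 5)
Your proof is correct and matches the paper's argument exactly: the paper also derives the corollary by combining Lemma~\ref{cluster_lemma_subgradient} (which supplies the equality constraint of \eqref{eq:zstar1}) with the hypothesized norm bound, then invoking Theorem~\ref{thm:clust_suff}. The paper states this in one line ("follows trivially by Theorem~\ref{thm:clust_suff}"); your write-up simply makes the same bookkeeping explicit.
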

The proof of the corollary follows trivially by Theorem \ref{thm:clust_suff}.

\subsection{Duality gap and distinct clustering theorem}
\label{sec:CGR_duality_gap}

We prove of the separation condition as follows.

\begin{lemma}
  If a distinct pair of candidate clusters $C_k, C_{k'}$ satisfy $D_{k,k'} > 2\mu$, then there does not exist a super-cluster that
  contains both $C_k \cup C_{k'}$.
\label{thm:noclust_suff}
\end{lemma}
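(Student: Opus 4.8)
The plan is to argue by contraposition: I will show that if a super-cluster containing all of $C_k \cup C_{k'}$ does exist, then necessarily $D_{k,k'} \le 2\mu$, contradicting the hypothesis $D_{k,k'} > 2\mu$. Write $S := C_k \cup C_{k'}$. The existence of such a super-cluster means that the optimizer $\x^*$ of \eqref{eq:multweight} satisfies $\x_i^* = \hat\x$ for all $i \in S$ and some common vector $\hat\x \in \R^d$.

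The first and main step is a strong-convexity estimate tying the primal iterate $\x$ to the optimizer $\x^*$. Decompose the objective of \eqref{eq:multweight} as $f'(\x) = \phi(\x) + g(\x)$, where $\phi(\x) = \frac{1}{2} \sum_i r_i \norm{\x_i - \a_i}^2$ is a quadratic with block-diagonal Hessian $\diag(r_i I)$ and $g(\x) = \lambda \sum_{i<j} r_i r_j \norm{\x_i - \x_j}$ is convex. Since $\x^*$ minimizes $f'$, we have $-\nabla\phi(\x^*) \in \partial g(\x^*)$; combining the exact quadratic identity for $\phi$ with the subgradient inequality for $g$ yields $f'(\x) - f'(\x^*) \ge \frac{1}{2} \sum_i r_i \norm{\x_i - \x_i^*}^2$. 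On the other hand, weak duality gives $h'(\bdelta) \le f'(\x^*)$, hence $f'(\x) - f'(\x^*) \le f'(\x) - h'(\bdelta) = \mu$. Chaining these produces the key estimate $\sum_i r_i \norm{\x_i - \x_i^*}^2 \le 2\mu$.

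The remaining steps are short. Restricting the sum to $S$ and using $\x_i^* = \hat\x$ for $i \in S$ gives $\sum_{i\in S} r_i \norm{\x_i - \hat\x}^2 \le 2\mu$. The weighted centroid $\bar\x$ minimizes the map $\y \mapsto \sum_{i\in S} r_i \norm{\x_i - \y}^2$, so $D_{k,k'} = \sum_{i\in S} r_i \norm{\x_i - \bar\x}^2 \le \sum_{i\in S} r_i \norm{\x_i - \hat\x}^2 \le 2\mu$. This contradicts $D_{k,k'} > 2\mu$, completing the contrapositive.

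The step I expect to require the most care is the strong-convexity bound: it hinges on separating the objective into its quadratic and convex parts and invoking the first-order optimality condition $-\nabla\phi(\x^*) \in \partial g(\x^*)$, and on confirming that the duality gap $\mu = f'(\x) - h'(\bdelta)$ of the original formulation indeed upper-bounds the primal suboptimality $f'(\x) - f'(\x^*)$ via weak duality. Everything afterward — the restriction to $S$ and the centroid-minimality observation — is routine.
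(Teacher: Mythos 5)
Your proof is correct and takes essentially the same route as the paper's: the paper likewise proves the contrapositive by combining the exact quadratic expansion of the fidelity term with the subgradient inequality for the sum-of-norms term at the optimizer (written there as $f(\x^*+\h)\ge f(\x^*)+\frac{1}{2}\h^T R\h$ with $\h=\x-\x^*$), then invoking weak duality to get $f'(\x)-f'(\x^*)\le\mu$, and finally using the minimality of the weighted centroid to conclude $D_{k,k'}\le 2\mu$. Your decomposition $f'=\phi+g$ with $-\nabla\phi(\x^*)\in\partial g(\x^*)$ is just a notational repackaging of the paper's optimality condition $R(\x^*-\a)+\lambda\w^*=\bz$, so the two arguments are identical in substance.
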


\begin{proof}
  We prove the contrapositive.
  Assume $C_k, C_{k'}$ are two
  disjoint
  subsets of a larger cluster, which implies that $\x^*_i = \x^*_j$ for all $i, j \in C_k \cup C_{k'}$. First, we rewrite \eqref{eq:multweight} in a matrix form as follows:
\[
    \min_{\x\in\R^{nd}} f(\x) := \frac{1}{2} (\x - \a) R (\x-\a) + \lambda q(\x),
\]
where $R = \mathop{\rm Diag}([r_1 \mathbf 1_d, \cdots, r_n \mathbf 1_d])$ denotes the weight matrix and $q(\x) := \sum_{1 \le i<j \le n} r_i r_j \|\x_i - \x_j\|$ denotes the regularization term.
By optimality, it holds that
\[
R (\x^* - \a) + \lambda \w^* =
\bz,
\]
where $\w^*$ is a subgradient of $q$ at $\x^*$.
Notice that $q$ is a convex function by definition. By the subgradient inequality,
\[
q(\x^*+\h) \ge q(\x^*) + \w^{*T}\h.
\]
for all $\h \in \mathbb R^{nd}$
Hence.
\begin{align*}
    f(\x^* + \h) &= \frac{1}{2} (\x^* - \a + \h)^T R (\x^* - \a + \h) + \lambda q(\x^* + \h)\\
    &\ge \frac{1}{2} (\x^* - \a)^{T} R (\x^* - \a) + (\x^* - \a)^{T} R \h + \frac{1}{2} \h^T R \h +
    \lambda q(\x^*) + \lambda \w^{*T} \h\\
    &= f(\x^*) + (\x^* - \a)^{T} R \h + \frac{1}{2} \h^T R \h + \lambda \w^{*T} \h\\
    &= f(\x^*) + ((\x^* - \a)^{T} R + \lambda \w^{*T}) \h + \frac{1}{2} \h^T R \h \\
    &= f(\x^*) + \frac{1}{2} \h^T R \h.\\
\end{align*}
By weak duality, we have
\[
\mu \ge f(\x^* + \h) - f(\x^*) \ge \frac{1}{2} \h^T R \h.
\]
Set $\h:= \x - \x^*$ to be the difference of the current iterate $\x$ and the optimal solution $\x^*$. Since $R$ is positive definite, we have
\[
\frac{1}{2} \h^T R \h \ge \frac{1}{2} \sum_{i \in C_k \cup C_{k'}} r_i \norm{\x_i - \x_i^*}^2.
\]
Under our assumption, $\x_i^*$'s share the same value. The RHS achieves it minimum at $\x_i^*=\bar \x:= \sum_{l\in C_k \cup C_{k'}}\frac{r_l}{\sum_{l'\in C_k \cup C_{k'}} r_{l'}} \x_l$. Substituting the result to the previous two inequalities yields
\[
2\mu \ge \sum_{i \in C_k \cup C_{k'}} r_i \norm{\x_i - \bar \x}^2 =: D_{k, k'}.
\]
thus establishing the contrapositive of the lemma.
\end{proof}

\section{Properties of the central path}
\label{sec:central_path}
In this section, we explore the properties of the central path for a primal-dual path following algorithm. These properties play a fundamental role in the proof of our main theorem in Section \ref{sec:guarantee}. In the main theorem, we state that if a primal-dual path following algorithm is used, our clustering test will eventually succeed after a finite number of iterations when $\lambda$ is not at any fusion value. The proof of the ultimate success relies on the linear convergence to the optimal primal-dual pair, which will be shown to be satisfied in the remainder of this section.

Even though there are very few theorems about the central path of second-order-cone programming in literature, there are established theorems from semidefinite programming (SDP). SDP specializes to SOCP. With some standard techniques, we can easily rewrite our SOCP problem as SDP and apply the primal-dual path following algorithm to solve the new SDP. The following theorem states that the $\mu'$-centered iterates converge to the analytic center superlinearly.

\begin{theorem}[Luo et al \ \cite{Luo}]
Assume the semidefinite program has a strictly complementary solution and the iterates of the algorithm converge tangentially to the central path. Let $(X(\mu'), Z(\mu'))$ denote a $\mu'$-centered primal-dual pair. Let $(X^a, Z^a)$ denote the analytic centers of the primal and dual optimal sets. Let $\mu' \in (0,1)$ be the central path parameter. There holds
\[
\norm{X(\mu')-X^a} = O(\mu'), \quad \norm{Z(\mu')-Z^a} = O(\mu').
\]
\label{thm:Luo}
\end{theorem}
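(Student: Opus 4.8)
The plan is to prove the estimate for the exact central path and then transfer it to the tangentially converging iterates. A $\mu'$-centered pair $(X(\mu'),Z(\mu'))$ lies (up to the proximity tolerance of the neighborhood) on the curve defined by primal feasibility $\mathcal{A}(X)=b$, dual feasibility $\mathcal{A}^*(y)+Z=C$, and the centering equation $XZ+ZX=2\mu' I$, with $X,Z\succeq 0$. As $\mu'\downarrow 0$ this system degenerates to $XZ+ZX=0$ together with feasibility, whose solution set is the pair of optimal faces, and it is classical that the central path selects the analytic centers $(X^a,Z^a)$, i.e.\ the unique maximizers of $\log\det$ over those faces. So the goal is a first-order expansion $X(\mu')=X^a+O(\mu')$ and $Z(\mu')=Z^a+O(\mu')$.

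The structural input is strict complementarity. I would fix an orthonormal basis aligned with the optimal partition: let $B$ index $\mathrm{range}(X^a)$ and $N$ index $\mathrm{range}(Z^a)$, which are orthogonal complements precisely because $X^a+Z^a\succ0$. Writing each iterate in block form $X=\bigl(\begin{smallmatrix}X_{BB}&X_{BN}\\X_{NB}&X_{NN}\end{smallmatrix}\bigr)$ and likewise for $Z$, only $X_{BB}=\Lambda_B\succ0$ and $Z_{NN}=\Omega_N\succ0$ survive at $\mu'=0$. Reading $XZ+ZX=2\mu'I$ block by block immediately yields the \emph{small} blocks: the $NN$ and $BB$ relations give $X_{NN}=\mu'\Omega_N^{-1}+o(\mu')$ and $Z_{BB}=\mu'\Lambda_B^{-1}+o(\mu')$, while the mixed relations force $X_{BN},Z_{BN}=O(\mu')$. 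Thus the only block not already visibly $O(\mu')$ is the dominant block $X_{BB}$ (resp.\ $Z_{NN}$), for which I must prove $X_{BB}(\mu')-\Lambda_B=O(\mu')$.

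This dominant block is the crux, and it is also where the naive approach fails: applying the implicit function theorem directly to the full centering system is impossible, because when the optimal face is positive dimensional every tangent direction $\dot X$ lying in the face produces a nontrivial null vector $(\dot X,\bz,\bz)$ of the limiting Jacobian. The remedy is to restrict to the face. I would parametrize the primal optimal face (equivalently the $BB$-block subject to the active affine constraints) and characterize $X^a$ by the first-order optimality of $\max\log\det X_{BB}$ on it; this matches the limiting centering condition, and the \emph{strict} concavity of $\log\det$ supplies a negative-definite reduced Hessian. On this reduced system the Jacobian is nonsingular, so the implicit function theorem applies to the reduced map and yields a differentiable branch $\mu'\mapsto X_{BB}(\mu')$ with $X_{BB}(\mu')=\Lambda_B+\mu'\dot X_{BB}+o(\mu')$; an identical argument handles $Z_{NN}$. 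Collecting the block estimates gives $\norm{X(\mu')-X^a}=O(\mu')$ and $\norm{Z(\mu')-Z^a}=O(\mu')$.

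The hard part is exactly this reduction: one must verify that after projecting out the small blocks (which are slaved to $\mu'$ through complementarity) and confining the dominant block to the optimal face, the residual centering equations have a nonsingular Jacobian, with invertibility coming from strong concavity of the barrier rather than from any constraint nondegeneracy. Once the central-path estimate is in hand, the tangential-convergence hypothesis finishes the theorem: an iterate lies within the narrow neighborhood of $(X(\mu'),Z(\mu'))$ and approaches along the path's tangent, so its deviation from the path is of higher order and it inherits the same $O(\mu')$ distance to the analytic center.
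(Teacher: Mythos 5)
This theorem is nowhere proved in the paper: it is imported verbatim from Luo, Sturm, and Zhang \cite{Luo} and used as a black box (the paper's complexity discussion even notes that the constants hidden in these $O(\mu')$ bounds would have to be re-derived from \cite{Luo}). So there is no internal proof to compare against, and your sketch must be judged as a reconstruction of the literature argument. Its skeleton is reasonable --- optimal partition $B/N$ from strict complementarity, block analysis of the centering condition, characterization of $(X^a,Z^a)$ by $\log\det$ optimality over the optimal faces, and an implicit-function-theorem argument on a face-reduced system --- but it has a genuine gap at exactly the step you label as immediate. What the centering and feasibility conditions give cheaply is a trace bound on the small \emph{diagonal} blocks: pairing $X(\mu')$ with a fixed strictly complementary dual optimal $Z^*$ and using orthogonality of primal and dual feasible displacements yields $\mathrm{trace}(X(\mu')Z^*)+\mathrm{trace}(X^*Z(\mu'))=n\mu'$, hence $X_{NN}(\mu')=O(\mu')$ and $Z_{BB}(\mu')=O(\mu')$. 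But positive semidefiniteness then bounds the mixed block only by $\norm{X_{BN}}\le\norm{X_{BB}}^{1/2}\norm{X_{NN}}^{1/2}=O(\sqrt{\mu'})$, and since $X^a_{BN}=0$, the claimed conclusion $\norm{X(\mu')-X^a}=O(\mu')$ \emph{requires} $X_{BN}=O(\mu')$. Closing this square-root gap --- showing the distance to the optimal face is $O(\mu')$ rather than $O(\sqrt{\mu'})$ --- is the real technical content of \cite{Luo}; it cannot be read off the block structure of $XZ+ZX=2\mu' I$, and generic error bounds for the intersection of an affine space with the semidefinite cone are too weak to supply it.

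A second, related problem is your appeal to the ``classical'' fact that the central path selects the analytic centers. In SDP, unlike LP, this is not classical: it is itself part of what Luo et al.\ proved, it is known to \emph{fail} without strict complementarity (Halick\'a, de Klerk, and Roos exhibited SDPs whose central path converges to a non-analytic-center point of the optimal face), and quantifying that convergence at rate $O(\mu')$ is precisely the theorem you are trying to prove, so invoking it as known is circular. Your reduced-IFT strategy is a legitimate route to making this rigorous --- later analyticity results for the SDP central path proceed in essentially this way under strict complementarity --- but as written the two crucial ingredients, nonsingularity of the reduced Jacobian and the $O(\mu')$ rate for the mixed blocks, are asserted rather than proved. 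The final transfer from the exact path to tangentially converging iterates is the unproblematic part.
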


Assume a primal-dual path following algorithm satisfying the assumptions of Luo et al., and it is applied to solve the SOCP as SDP. To employ Theorem \ref{thm:Luo}, we show that our SOCP has a strictly complementary optimizer provided $\lambda$ is not a fusion value $\lambda^*$.
The failure at fusion values is not surprising since any arbitrarily small negative perturbation $\lambda^* + \epsilon$ yields a different clustering. In other words, complete cluster identification for these fusion values is ill-posed. Thus it is unreasonable to expect an algorithm that satisfies a guarantee for such a problem. There are at most $n$ fusion values as a result of Theorem \ref{thm:agglomeration}.

It is worth remarking that Theorem \ref{thm:Luo} does not directly apply to a primal-dual SOCP interior-point method. SOCP is a special case of SDP, yet the central path of SOCP is not just a simple projection of the SDP central path. The reason is that the log-barrier function for SOCP is not a specialization of the log-barrier function for SDP. Let $\x$ denote a primal feasible solution for an SOCP where $\x = \begin{pmatrix}
x_0\\
\bar \x
\end{pmatrix} \in \mathbb R^{d+1}$ satisfies $x_0 \ge \norm{\bar \x}$. Then the log-barrier function inherited from SDP reformulation would be
\[
\phi_{SDP}(\x) = -\ln(x_0^2 - \norm{\bar \x}^2) - (d-1) \ln x_0,
\]
while the log-barrier function inherited from the original SOCP would be
\[
\phi_{SOCP}(\x) = -\ln(x_0^2 - \norm{\bar \x}^2).
\]
The removal of the second term accelerates the convergence.

We suspect that an SOCP interior-point method should also satisfy a bound analogous to Theorem \ref{thm:Luo}, but we are not aware of
a
proof in the literature.

\subsection{Strict complementarity}
\label{sec:strict_complementarity}
By specializing the definition of strict complementarity in SDP to SOCP\ \cite{goldfarb}, a primal and dual optimal solution satisfies strict complementarity if and only if
\begin{align}
  t_{ij} + \lambda &> \|\y_{ij} + \bdelta_{ij}\|, \quad \forall 1 \le i < j \le n, \label{eq:strict_cs_a}\\
    s_{i} + 1 - \gamma_i &> \norm{
    \begin{pmatrix}
    \z_{i} + \bbeta_i\\
    u_i + \gamma_i
    \end{pmatrix}}, \quad \forall i = 1, ..., n\label{eq:strict_cs_b}
\end{align}

The following theorem is a sufficient condition for strict complementarity of \eqref{eq:mwsocp_primal} and \eqref{eq:mwsocp_dual}.

\begin{theorem}
If $\lambda > 0$ is a parameter value at which fusion does not occur, then there exists a strictly complementary primal-dual optimal solution to SOCP \eqref{eq:mwsocp_primal} and \eqref{eq:mwsocp_dual} at $\lambda$.
\label{thm:strict_complementarity_socp}
\end{theorem}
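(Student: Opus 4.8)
The plan is to exhibit one optimal primal–dual pair and verify the two strict inequalities \eqref{eq:strict_cs_a}--\eqref{eq:strict_cs_b} block by block. First I would pin down the optimal primal variables from the unique SON optimizer $\x^*$: at optimality \eqref{eq:mwp_constr1}--\eqref{eq:mwp_constr5} give $\y_{ij}=\x_i^*-\x_j^*$, $t_{ij}=\norm{\y_{ij}}$, $\z_i=\x_i^*-\a_i$, and the prescribed $s_i,u_i$. On the dual side, feasibility \eqref{eq:mwd_constr1} together with stationarity \eqref{eq:KKTcond} forces $\bbeta_i=-\z_i$, and complementarity then forces $\gamma_i=-u_i$. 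With these values $\z_i+\bbeta_i=\bz$ and $u_i+\gamma_i=0$, so the node block \eqref{eq:strict_cs_b} reads $s_i+1-\gamma_i=2s_i>0$ and holds automatically. For an inter-cluster edge ($\x_i^*\ne\x_j^*$) the subgradient is the forced vector $\bdelta_{ij}=-\lambda(\x_i^*-\x_j^*)/\norm{\x_i^*-\x_j^*}$, so $\y_{ij}+\bdelta_{ij}=(1-\lambda/t_{ij})\y_{ij}$ and $\norm{\y_{ij}+\bdelta_{ij}}=|t_{ij}-\lambda|<t_{ij}+\lambda$; thus \eqref{eq:strict_cs_a} also holds automatically. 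Everything therefore reduces to the intra-cluster edges, where $\y_{ij}=\bz$, $t_{ij}=0$, and \eqref{eq:strict_cs_a} becomes the requirement that the intra-cluster subgradients can be chosen with $\norm{\bdelta_{ij}}<\lambda$ strictly.

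The key step is to show these intra-cluster subgradients are governed by a quantity intrinsic to the cluster. Fix a maximal cluster $C$ with common value $\hat\x$. For $i\in C$ and $k\notin C$ the forced subgradient $\bdelta_{\abrack{ik}}=-\lambda(\hat\x-\x_k^*)/\norm{\hat\x-\x_k^*}$ does not depend on $i$, so the external pull $\e_C:=\sum_{k\notin C}r_k\bdelta_{\abrack{ik}}$ is the same constant vector for every $i\in C$. Summing \eqref{eq:KKTcond} over $i\in C$ with weights $r_i$ (the intra-$C$ terms cancel by antisymmetry) yields $\hat\x=\bar\a_C+\e_C/r'$ with $\bar\a_C=\frac{1}{r'}\sum_{l\in C}r_l\a_l$, so the constant $\e_C$ cancels and \eqref{eq:KKTcond} restricted to $C$ collapses to the intrinsic system $\sum_{j\in C}r_j\bdelta_{\abrack{ij}}=\bar\a_C-\a_i$ for $i\in C$ — exactly the system of Theorem~\ref{thm:clust_suff}, and free of both $\lambda$ and the external geometry. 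I would then define the intrinsic threshold
\[
\Phi_C:=\min\Big\{\,\max_{i<j\in C}\norm{\bdelta_{ij}}\ :\ \textstyle\sum_{j\in C}r_j\bdelta_{\abrack{ij}}=\bar\a_C-\a_i\ \ \forall i\in C\,\Big\},
\]
which is attained since the objective is continuous and coercive over a nonempty affine set. Feasibility of the capacity-$\lambda$ intrinsic system is then precisely $\lambda\ge\Phi_C$, and any minimizer $\bdelta^0$ certifies strict feasibility whenever $\lambda>\Phi_C$, because $\norm{\bdelta^0_{ij}}\le\Phi_C<\lambda$.

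This reduces the theorem to a dichotomy. Since $C$ is a cluster at $\lambda$, \eqref{eq:KKTcond} supplies a feasible point of the intrinsic system, so $\lambda\ge\Phi_C$; hence strict complementarity on $C$ can fail only when $\lambda=\Phi_C$. To see $\lambda=\Phi_C$ forces a fusion value I would invoke the agglomeration property (Theorem~\ref{thm:agglomeration}): for $\lambda'<\lambda$ the optimal partition refines the one at $\lambda$, so no point outside $C$ can join $C$'s points, and therefore if all of $C$ were fused at $\lambda'$ its cluster would be exactly $C$; the same cancellation would apply and \eqref{eq:KKTcond} at $\lambda'$ would force $\lambda'\ge\Phi_C=\lambda$, a contradiction. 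Thus $C$ splits for every $\lambda'<\lambda$, so its parts fuse precisely at $\lambda$, making $\lambda$ a fusion value. Finally I would assemble the intrinsic minimizers $\bdelta^0$ over all clusters together with the forced inter-cluster subgradients; since distinct clusters have disjoint internal edge sets and the assembled vector satisfies \eqref{eq:mwd_constr1} and complementarity, this single optimal dual is strictly complementary, establishing the contrapositive.

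I expect the main obstacle to be the cancellation/identification step: recognizing that the external pull is uniform over a cluster, so that the intra-cluster subgradient system and the critical value $\Phi_C$ are intrinsic and $\lambda$-independent, and then using agglomeration to guarantee that $\lambda=\Phi_C$ is genuinely a fusion value rather than merely a parameter where the dual happens to be non-strict. By comparison, the blockwise reductions and the attainment of $\Phi_C$ are routine.
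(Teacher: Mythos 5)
Your proposal is correct, but it reaches the theorem by a genuinely different route than the paper. The paper sandwiches $\lambda$ between successive fusion values $\lambda_1<\lambda<\lambda_2$, forms a reduced problem on the cluster centroids $\bar\a_k$ with weights $r_k'$, proves (via agglomeration, Lemma~\ref{lemma_pkpk'}) that its optimizer has pairwise distinct points, and then assembles the dual by \emph{importing} the intra-cluster subgradients $\bdelta_{ij}'$ from the optimal dual at $\lambda_1$; strictness then comes for free from $\|\bdelta_{ij}'\|\le\lambda_1<\lambda$ (Lemmas~\ref{lemma_optimality_socp} and~\ref{lemma_strict_complementarity}). You instead work directly at $\lambda$ with the unique optimizer $\x^*$: you observe that the forced inter-cluster subgradients exert a pull that is uniform over each cluster, so dual feasibility restricted to a cluster $C$ collapses to the $\lambda$-independent intrinsic system $\sum_{j\in C}r_j\bdelta_{\abrack{ij}}=\bar\a_C-\a_i$, you introduce the variational threshold $\Phi_C$ (the min-max norm over solutions), and you show via agglomeration that $\lambda=\Phi_C$ forces $\lambda$ to be a fusion value, so off fusion values $\lambda>\Phi_C$ and the $\Phi_C$-minimizers give strict intra-cluster bounds. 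Both proofs lean on Theorem~\ref{thm:agglomeration} at the critical step, but your version avoids the reduced centroid problem and any reference to the neighboring fusion value, and it yields a sharper byproduct: strict complementarity on a cluster $C$ holds if and only if $\lambda>\Phi_C$, so the exceptional parameters are exactly the intrinsic thresholds $\Phi_C$, which you identify as fusion values. What the paper's route buys in exchange is an explicit dual certificate expressed in previously computed quantities (the dual at $\lambda_1$ and the gradients of the reduced problem), whereas your dual exists only as the solution of an auxiliary minimization. Two small points to fix in a write-up: the centroid identity should read $\hat\x=\bar\a_C+\e_C$ rather than $\hat\x=\bar\a_C+\e_C/r'$ (multiplying \eqref{eq:KKTcond} by $r_i$ and summing gives $r'\hat\x=r'\bar\a_C+r'\e_C$; the slip does not affect the cancellation you need), and you should state explicitly that the assembled dual satisfies \eqref{eq:mwd_constr3} with equality by the choice $\gamma_i=\frac{1}{2}\left(1-\|\bbeta_i\|^2\right)$, which is what makes the node blocks boundary points of the cone.
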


To prove Theorem \ref{thm:strict_complementarity_socp}, we consider a new optimization problem and construct such a strictly complementary primal-dual optimal solution from the new problem. Let $\lambda_1, \lambda_2$ be the two successive fusion values such that $\lambda \in (\lambda_1, \lambda_2)$. Note that it is possible for $\lambda_1 = 0$ or $\lambda_2 = \infty$. Let $(\x', \y', \z', s', u', t', \bdelta', \bbeta', \gamma')$
denote a primal and dual optimal solution at $\lambda_1$. Let $C_1, C_2, ..., C_K$ denote the clusters identified by the optimal solution above. When $\lambda_1 = 0$, there are $n$ clusters, and each cluster is a singleton set. When $\lambda_1$ is the largest fusion value, there is only one cluster containing all $n$ points.

For each $k=1,\ldots,K$,
define $r_k'=\sum_{i\in C_k}r_i$ and $\bar\a_k=\frac{1}{r_k'}\sum_{i\in C_k}r_i\a_i$.
Consider the following optimization problem:
\begin{equation}
    \min_{\x_1, ..., \x_K\in\R^d} \frac{1}{2}\sum_{k=1}^K r_k' \norm{\x_k-\bar\a_k}^2 +\lambda\sum_{1 \le k<k' \le K} r_k'r_{k'}'\norm{\x_k-\x_{k'}}.
    \label{eq:son-clustering_multweight}
\end{equation}

Let $\x$ denote the optimal solution of \eqref{eq:son-clustering_multweight}.

\begin{lemma}
Vector $\x$ satisfies $\x_k \ne \x_{k'}$ for all $k, k' \in [K], k \ne k'$. \label{lemma_pkpk'}
\end{lemma}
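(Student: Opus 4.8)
The plan is to relate the aggregated problem \eqref{eq:son-clustering_multweight} to the original problem \eqref{eq:multweight} at the parameter value $\lambda$ through a ``collapse'' correspondence, and then to read off distinctness from the fact that $\lambda$ lies strictly between two consecutive fusion values. Both \eqref{eq:multweight} and \eqref{eq:son-clustering_multweight} have strictly convex data-fitting terms (recall $r_i>0$), so each has a unique minimizer; I will call the minimizer of \eqref{eq:multweight} at $\lambda$ the vector $\x^*$ and the minimizer of \eqref{eq:son-clustering_multweight} the vector $\x$, as in the statement.

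First I would record the collapse identity. Define the embedding $\iota:\R^{Kd}\to\R^{nd}$ that sends $\w\in\R^{Kd}$ to the vector whose $i$-th block equals $\w_k$ whenever $i\in C_k$. Substituting $\iota(\w)$ into the original objective $f'$, every within-cluster norm $\norm{\w_k-\w_k}$ vanishes, the cross-cluster penalties collapse via $\sum_{i\in C_k,\,j\in C_{k'}}r_ir_j=r_k'r_{k'}'$, and the data-fitting term splits by the parallel-axis identity $\sum_{i\in C_k}r_i\norm{\w_k-\a_i}^2=r_k'\norm{\w_k-\bar\a_k}^2+\sum_{i\in C_k}r_i\norm{\a_i-\bar\a_k}^2$. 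Hence $f'(\iota(\w))$ equals the objective of \eqref{eq:son-clustering_multweight} evaluated at $\w$ plus a constant independent of $\w$.

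Next I would use the agglomeration property (Theorem \ref{thm:agglomeration}) to show that $\x^*$ lies in the range of $\iota$. Since the points of each $C_k$ are fused at $\lambda_1\le\lambda$, agglomeration forces them to remain fused at $\lambda$, so $\x^*=\iota(\hat\w)$ for some $\hat\w\in\R^{Kd}$ whose $k$-th block $\hat\w_k$ is the common value of $\x_i^*$ over $i\in C_k$. The collapse identity then gives, for every $\w\in\R^{Kd}$, that the objective of \eqref{eq:son-clustering_multweight} at $\hat\w$ equals $f'(\x^*)-\mathrm{const}\le f'(\iota(\w))-\mathrm{const}$, which equals that objective at $\w$; thus $\hat\w$ minimizes \eqref{eq:son-clustering_multweight}, and by uniqueness $\x=\hat\w$. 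In particular $\x_k$ is exactly the fused value $\x_i^*$ for $i\in C_k$.

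Finally, distinctness follows from the choice of $\lambda$. Because $\lambda<\lambda_2$ and $\lambda_2$ is the next fusion value after $\lambda_1$, no merging of the clusters $C_1,\ldots,C_K$ has yet occurred at $\lambda$; that is, $\x^*$ still has $K$ distinct cluster values, so $\x_k=\hat\w_k\ne\hat\w_{k'}=\x_{k'}$ for all $k\ne k'$. The main obstacle is not the algebra of the collapse identity, which is routine, but the careful invocation of the fusion-value bookkeeping: I must use agglomeration to guarantee that no $C_k$ \emph{splits} on $(\lambda_1,\lambda_2)$, and the definition of $\lambda_2$ to guarantee that no two $C_k$'s \emph{merge} there, so that the clustering of $\x^*$ at $\lambda$ is genuinely the full partition $C_1,\ldots,C_K$ with all components distinct.
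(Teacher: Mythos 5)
Your proof is correct, but it takes a genuinely different route from the paper's. The paper argues by contradiction in the opposite direction: it lifts the aggregated minimizer $\x$ to a candidate $\x^*_i=\x_k$ for $i\in C_k$, and verifies that this lift satisfies the first-order condition \eqref{eq:KKTcond} of \eqref{eq:multweight} at $\lambda$ by explicitly assembling subgradient certificates --- the optimality condition of \eqref{eq:son-clustering_multweight} supplies the cross-cluster subgradients $\bdelta_{kk'}$, while the dual solution at $\lambda_1$ (Chiquet et al.'s conditions \eqref{eq:son-clustering_chiquet}, with $\|\bdelta_{ij}'\|\le\lambda_1\le\lambda$) supplies the within-cluster ones; a fused pair $\x_{\hat k}=\x_{\hat k'}$ then forces a fusion of $C_{\hat k},C_{\hat k'}$ at some $\lambda'\in(\lambda_1,\lambda]$, contradicting the choice of $\lambda_2$. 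You instead go downward: agglomeration (Theorem \ref{thm:agglomeration}) puts the unique minimizer $\x^*$ of \eqref{eq:multweight} in the range of the embedding $\iota$, your collapse identity shows $f'\circ\iota$ equals the aggregated objective up to an additive constant, and strict convexity identifies the collapsed $\x^*$ with $\x$; distinctness then follows from the same fusion-value bookkeeping both proofs share. Your route is more elementary --- no subgradient or KKT verification at all, just uniqueness, the parallel-axis identity, the agglomeration theorem, and the definition of consecutive fusion values. What the paper's heavier construction buys is reuse: the certificates $\bdelta_{ij}'$ and $\bdelta_{kk'}$ assembled inside its proof are precisely the ingredients of the strictly complementary primal-dual solution \eqref{eq:feasible_sol} and of Lemmas \ref{lemma_optimality_socp} and \ref{lemma_strict_complementarity} that immediately follow, whereas your argument, while cleaner for the lemma itself, would leave those dual objects still to be constructed.
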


\begin{proof}
For the purpose of contradiction, we may assume there exist $\hat k \ne \hat k'$ such that $\x_{\hat k} = \x_{\hat k'}$.

The first step is to construct an optimal solution $\x^*$ to the original problem \eqref{eq:multweight} at $\lambda$ using $\x$. Let $\x^*_i = \x_k$ for $i \in C_k, k \in [K]$. By the first-order optimality condition of \eqref{eq:son-clustering_multweight} at $\x$, there exist
$\bdelta_{kk'} \in -\lambda \partial \|\x_k - \x_{k'}\|$
with respect to $\x_k$ for all
$k < k'$
such that the equality below holds as analogous to \eqref{eq:KKTcond}. Note that
$\bdelta_{kk'} \in -\lambda \partial\|\x_i^* - \x_j^*\|$
is also a subgradient at $\x^*$
with respect to $\x_i$
for all $i \in C_k, j \in C_{k'}, i < j$.
Thus, for any $i\in C_k$,
\begin{equation}
  \begin{aligned}
\bz &= \x_k - \bar \a_k - \sum_{k'=1}^K r'_{k'} \bdelta_{\abrack{kk'}}\\
&=\x_k - \a_i + \a_i - \bar \a_k - \sum_{k' \ne k} r'_{k'} \bdelta_{\abrack{kk'}}.
  \end{aligned}
\label{eq:p_optimality}
\end{equation}
By the feasibility and complementary slackness in
Section~\ref{sec:SM_2},
the dual solution satisfies
\begin{equation}
    \a_i - \bar \a_k = -\sum_{j \in C_k, } r_j \bdelta_{\abrack{ij}}', \quad \forall i \in C_k, k \in [K], \qquad \text{and} \quad \norm{\bdelta_{ij}'} \le \lambda_1, \quad \forall
    i < j.
    \label{eq:son-clustering_chiquet}
\end{equation}
By construction,
for any $i, j \in C_k, i < j$,
there holds $\bdelta_{ij}' \in -\lambda \partial \norm{\x_i^* - \x_j^*}$
with respect to $\x_i$ since
$\x_i^* = \x_j^*$
and $\lambda_1\le \lambda$.
Substitute \eqref{eq:son-clustering_chiquet} to \eqref{eq:p_optimality} to obtain
\begin{align*}
  \bz 
&=  \x_k - \a_i - \sum_{j \in C_k} r_j \bdelta_{\abrack{ij}}' - \sum_{k'=1}^K \sum_{j \in C_{k'}} r_{j} \bdelta_{\abrack{kk'}}
\end{align*}
satisfying \eqref{eq:KKTcond} at $i$. As $i \in C_k, k \in[K]$ are chosen arbitrarily, the equality \eqref{eq:KKTcond} holds for all $i$ hence $\x^*$ is an optimal solution to \eqref{eq:multweight}. Since $\x_{\hat k} = \x_{\hat k'}$,
we have $\x^*_i = \x^*_j$ for all $i, j \in C_{\hat k} \cup C_{\hat k'}$. By the agglomerative properties of the clusterpath, cluster $C_{\hat k}, C_{\hat k'}$ merge at some $\lambda' \in (\lambda_1, \lambda]$, which contradicts our choice of $\lambda_2$. That concludes our proof.
\end{proof}

By Lemma \ref{lemma_pkpk'}, the objective function
of \eqref{eq:son-clustering_multweight}
is differentiable at $\x$. Hence, there holds
\begin{equation}
\x_k - \bar \a_k + \lambda \sum_{k' \ne k} r'_{k'}\cdot \frac{\x_k - \x_{k'}}{\|\x_k - \x_{k'}\|} = \bz,
\quad \forall k \in [K]. \label{eq:son-clustering_multweight_oc}
\end{equation}

Define the following primal-dual solution:
\begin{equation}
    \begin{aligned}
    \x^*_i &= \x_k, \quad \forall i \in C_k, k \in [K]\\
    \y_{ij}^* &= \x^*_i - \x^*_j, \quad \forall 1 \le i < j \le n\\
    \z^*_{i} &= \x^*_i - \a_i, \quad \forall i = 1, \dots, n, \\
    s_i^* &= \frac{1}{2} (1 + \|\z_{i}^*\|^2), \quad \forall i = 1, \dots, n\\
    u_i^* &= \frac{1}{2} (-1 + \|\z_{i}^*\|^2), \quad \forall i = 1, \dots, n\\
    t_{ij}^* &= \|\y_{ij}^*\|, \quad \forall 1 \le i < j \le n\\
    \bdelta_{ij}^* &= \left \{ \begin{array}{ll}
        \bdelta_{ij}', \quad &\mbox{if } i,j \in C_k\\
         \lambda \frac{\x^*_{j} - \x_j^*}{\|\x_j^* - \x_i^*\|}, & \mbox{otherwise}
    \end{array}
    \right., \quad \forall 1 \le i < j \le n\\
    \bbeta_i^* &= - \z^*_i, \quad \forall i = 1, \dots, n\\
    \gamma_i^* &= \frac{1}{2}(1 - \|\bbeta^*_i\|^2), \quad \forall i = 1, \dots, n
    \end{aligned}
    \label{eq:feasible_sol}
\end{equation}

\begin{lemma}
\label{lemma_optimality_socp}
The solution defined by \eqref{eq:feasible_sol} is optimal for SOCP \eqref{eq:mwsocp_primal} and \eqref{eq:mwsocp_dual} at $\lambda$.
\end{lemma}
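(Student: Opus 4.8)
The plan is to certify optimality of \eqref{eq:feasible_sol} through the three conditions that, since the Slater condition was established earlier, characterize an optimal primal--dual pair for the SOCP: primal feasibility of \eqref{eq:mwsocp_primal}, dual feasibility of \eqref{eq:mwsocp_dual}, and vanishing of all complementary-slackness residuals \eqref{eq:mwcs_a1}--\eqref{eq:mwcs_b3}. Each piece reduces to inspecting the defining formulas in \eqref{eq:feasible_sol}, so the bulk of the work is organized bookkeeping rather than new estimates.

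First I would dispose of primal feasibility. The linear constraints \eqref{eq:mwp_constr1}--\eqref{eq:mwp_constr3} hold directly from the definitions $\y_{ij}^*=\x_i^*-\x_j^*$, $\z_i^*=\x_i^*-\a_i$, and $s_i^*-u_i^*=1$. The two cone constraints \eqref{eq:mwp_constr4} and \eqref{eq:mwp_constr5} in fact hold with equality: \eqref{eq:mwp_constr4} because $t_{ij}^*=\|\y_{ij}^*\|$, and \eqref{eq:mwp_constr5} via the elementary identity $\bigl(\tfrac12(1+a)\bigr)^2=a+\bigl(\tfrac12(a-1)\bigr)^2$ with $a=\|\z_i^*\|^2$, which is exactly the relation tying $s_i^*$, $u_i^*$, and $\z_i^*$ together.

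Next comes dual feasibility, which carries the real content. The norm bound \eqref{eq:mwd_constr2} is immediate: within a cluster $\|\bdelta_{ij}^*\|=\|\bdelta_{ij}'\|\le\lambda_1\le\lambda$, and across clusters $\bdelta_{ij}^*$ is a unit vector scaled by $\lambda$, so $\|\bdelta_{ij}^*\|=\lambda$ (this is well defined precisely because Lemma \ref{lemma_pkpk'} guarantees $\x_i^*\neq\x_j^*$ for cross-cluster pairs). Constraint \eqref{eq:mwd_constr3} again holds with equality by the same quadratic identity, now applied to $\bbeta_i^*=-\z_i^*$ and $\gamma_i^*=\tfrac12(1-\|\bbeta_i^*\|^2)$. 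The crucial step is the stationarity equation \eqref{eq:mwd_constr1}: since $\bbeta_i^*=-\z_i^*=\a_i-\x_i^*$, it is equivalent to \eqref{eq:KKTcond}, and this is exactly the identity already assembled in the proof of Lemma \ref{lemma_pkpk'}, where the within-cluster subgradients $\bdelta_{ij}'$ (via \eqref{eq:son-clustering_chiquet} at $\lambda_1$) and the between-cluster gradients (via the reduced-problem optimality \eqref{eq:son-clustering_multweight_oc}) combine to $\bz$. I would therefore verify that the piecewise $\bdelta_{ij}^*$ of \eqref{eq:feasible_sol} coincides with the multipliers used there---in particular that $\lambda(\x_j^*-\x_i^*)/\|\x_j^*-\x_i^*\|$ equals the between-cluster term $\bdelta_{\abrack{kk'}}$ extracted from \eqref{eq:son-clustering_multweight_oc}---and then invoke that computation. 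This matching is the main obstacle: everything hinges on getting the $\abrack{\cdot}$ sign convention right so that the reduced-problem multiplier and the constructed $\bdelta_{ij}^*$ agree, but no new analytic estimate is needed.

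Finally I would check complementary slackness by showing every residual in \eqref{eq:mwcs_a1}--\eqref{eq:mwcs_b3} vanishes. For the $(\y,\bdelta)$ block I split into cases: when $i,j$ share a cluster, $\y_{ij}^*=\bz$ and $t_{ij}^*=0$ kill both residuals; when they lie in different clusters, $\bdelta_{ij}^*$ is anti-parallel to $\y_{ij}^*$ with norm $\lambda$ and $t_{ij}^*=\|\y_{ij}^*\|$, so $\epsilon_1^{ij}$ and $\bepsilon_2^{ij}$ cancel termwise. For the $(\z,u,\bbeta,\gamma)$ block the identities $1-\gamma_i^*=s_i^*$ and $\gamma_i^*=-u_i^*$, together with $\bbeta_i^*=-\z_i^*$, make $\bsigma_2^i$ and $\sigma_3^i$ vanish at once and reduce $\sigma_1^i$ to $(s_i^*)^2-\|\z_i^*\|^2-(u_i^*)^2$, which is zero by the quadratic identity above. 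Primal feasibility, dual feasibility, and these vanishing residuals together constitute the KKT optimality conditions for the SOCP, so \eqref{eq:feasible_sol} is optimal for both \eqref{eq:mwsocp_primal} and \eqref{eq:mwsocp_dual}.
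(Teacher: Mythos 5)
Your proposal is correct and follows essentially the same route as the paper: both verify the KKT conditions for \eqref{eq:mwsocp_primal}--\eqref{eq:mwsocp_dual}, with the only substantive step being the dual equality constraint \eqref{eq:mwd_constr1}, which both arguments settle by combining the $\lambda_1$-dual certificate \eqref{eq:son-clustering_chiquet} with the reduced-problem optimality condition \eqref{eq:son-clustering_multweight_oc}. The only difference is one of bookkeeping: the paper declares primal feasibility, the dual cone constraints, and complementary slackness ``automatically satisfied by construction'' and writes out the \eqref{eq:mwd_constr1} computation in place, whereas you spell out those routine verifications and defer the key computation to the identical one in the proof of Lemma \ref{lemma_pkpk'}.
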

\begin{lemma}
The solution defined by \eqref{eq:feasible_sol} is strictly complementary.
\label{lemma_strict_complementarity}
\end{lemma}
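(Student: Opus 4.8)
The plan is to verify the two strict complementarity inequalities \eqref{eq:strict_cs_a} and \eqref{eq:strict_cs_b} directly by substituting the closed-form values from \eqref{eq:feasible_sol}. The construction is engineered so that each inequality collapses to an elementary comparison, so no heavy machinery is needed; the only place where the hypothesis that $\lambda$ avoids fusion values enters is in guaranteeing \emph{strictness}, and tracking that is the real content.

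First I would dispatch the second-order condition \eqref{eq:strict_cs_b}, which is cleanest. Since $\bbeta_i^* = -\z_i^*$ by definition, the leading block $\z_i^* + \bbeta_i^*$ vanishes identically, so the right-hand norm reduces to $|u_i^* + \gamma_i^*|$. Plugging in $u_i^* = \tfrac{1}{2}(-1 + \norm{\z_i^*}^2)$ and $\gamma_i^* = \tfrac{1}{2}(1 - \norm{\bbeta_i^*}^2) = \tfrac{1}{2}(1 - \norm{\z_i^*}^2)$ gives $u_i^* + \gamma_i^* = 0$, so the right-hand side is exactly $0$. Meanwhile the left-hand side $s_i^* + 1 - \gamma_i^* = 1 + \norm{\z_i^*}^2 \ge 1$ is strictly positive. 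Hence \eqref{eq:strict_cs_b} holds for every $i$, with margin at least $1$.

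Next I would handle the first-order condition \eqref{eq:strict_cs_a}, splitting on whether $i,j$ lie in a common cluster $C_k$. If $i,j \in C_k$, then $\x_i^* = \x_j^* = \x_k$, so $\y_{ij}^* = \bz$ and $t_{ij}^* = 0$, while $\bdelta_{ij}^* = \bdelta_{ij}'$; the inequality becomes $\lambda > \norm{\bdelta_{ij}'}$, which follows from the dual-feasibility bound $\norm{\bdelta_{ij}'} \le \lambda_1$ in \eqref{eq:son-clustering_chiquet} together with $\lambda_1 < \lambda$. (This case is vacuous when $\lambda_1 = 0$, where the clusters are singletons.) If instead $i \in C_k$, $j \in C_{k'}$ with $k \ne k'$, Lemma \ref{lemma_pkpk'} gives $\x_k \ne \x_{k'}$, so $t_{ij}^* = \norm{\y_{ij}^*} > 0$; moreover $\bdelta_{ij}^* = \lambda(\x_j^* - \x_i^*)/\norm{\x_j^* - \x_i^*} = -\lambda\,\y_{ij}^*/\norm{\y_{ij}^*}$ is anti-parallel to $\y_{ij}^*$ with norm $\lambda$, so $\norm{\y_{ij}^* + \bdelta_{ij}^*} = |t_{ij}^* - \lambda|$. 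The claim then reduces to $t_{ij}^* + \lambda > |t_{ij}^* - \lambda|$, which holds strictly because $t_{ij}^* > 0$ and $\lambda > 0$ (equivalently $4\lambda t_{ij}^* > 0$).

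The step I expect to carry the weight is not any single computation but the bookkeeping of strictness across the two cases of \eqref{eq:strict_cs_a}. In the same-cluster case strictness rests on $\lambda > \lambda_1$, and in the different-cluster case it rests on $t_{ij}^* > 0$, i.e.\ on Lemma \ref{lemma_pkpk'}; both are precisely the consequences of $\lambda$ lying strictly between consecutive fusion values. If $\lambda$ were itself a fusion value, $\lambda = \lambda_1$ would make the same-cluster inequality degenerate to equality, or a merging pair would force $t_{ij}^* = 0$, in either case breaking strict complementarity --- which is consistent with the hypothesis of Theorem \ref{thm:strict_complementarity_socp} and explains why that hypothesis cannot be dropped.
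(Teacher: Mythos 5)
Your proof is correct and follows essentially the same route as the paper's: direct verification of \eqref{eq:strict_cs_a} and \eqref{eq:strict_cs_b} from the construction \eqref{eq:feasible_sol}, with the same case split on whether $i,j$ lie in a common cluster, the bound $\norm{\bdelta_{ij}'}\le\lambda_1<\lambda$ for same-cluster pairs, and Lemma \ref{lemma_pkpk'} for cross-cluster pairs. The only difference is cosmetic: where the paper invokes the general fact that two nonzero complementary points on the boundary of a second-order cone sum to an interior point, you carry out the arithmetic explicitly (reducing \eqref{eq:strict_cs_a} to $4\lambda t_{ij}^*>0$ and \eqref{eq:strict_cs_b} to $1+\norm{\z_i^*}^2>0$), which makes the argument more self-contained.
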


\begin{proof}
  The strict complementarity is equivalent to \eqref{eq:strict_cs_a} and \eqref{eq:strict_cs_b}, which can be easily checked as shown below\\
\textbf{Verification of \eqref{eq:strict_cs_a}:} Let $1 \le i<j \le n$. If $\y_{ij}^* = \bz$, then there exists some $k \in [K]$ such that $i, j \in C_k$. By definition, $t_{ij}^* = 0$ and $\bdelta_{ij}^* = \bdelta_{ij}$. Notice that $\bdelta_{ij}$ is the optimal dual solution to \eqref{eq:multweight} at $\lambda_1$, then it satisfies $\|\bdelta_{ij}\| \le \lambda_1 < \lambda$ by the definition of $\lambda$.
\[
t_{ij}^* + \lambda = \lambda > \|\bdelta_{ij}\| = \|\bdelta_{ij}^*\| = \|\y_{ij}^* + \bdelta_{ij}^*\|.
\]
If $\y_{ij}^*\ne \bz$, then complementarity immediately implies strict complementarity
since $[\lambda;\bdelta_{ij}^*]\ne \bz$ (since $\lambda >0$).

\textbf{Verification of \eqref{eq:strict_cs_b}:} Let $i \in [n]$. By construction, $\begin{pmatrix}
  s_i\\
  \z_i\\
  u_i
\end{pmatrix}$ and
$\begin{pmatrix}
  1-\gamma_i\\
  \bbeta_i\\
  \gamma_i
\end{pmatrix}$
are on the boundary of the second order cone
and neither is zero.
\end{proof}

The proof of Lemma \ref{lemma_optimality_socp} is deferred to supplementary material \ref{SM_three_lemmas}. With three lemmas presented in this section, there exists a strictly complementary optimal solution (as defined by \eqref{eq:feasible_sol}) to SOCP \eqref{eq:mwsocp_primal} and \eqref{eq:mwsocp_dual}.

\section{Test Guarantee}
\label{sec:guarantee}
In Section \ref{sec:test}, we validated our test theoretically in the sense that if the test succeeds, it is guaranteed that the correct clusters are found. In this section, we show that the test succeeds after a finite number of iterations of a certain interior point method, provided that $\lambda$ is not at any fusion value. Specifically, we prove that the two conditions in our test are guaranteed to hold for a primal-dual path following algorithm satisfying the assumptions of Luo et al.\ \cite{Luo} when the duality gap $\mu$ is sufficiently small.


\begin{theorem}
If $\lambda$ is not a fusion value, then there exists $\mu_0 > 0$ such that both CGR subgradient and separation conditions in the test are satisfied for any duality gap $\mu \le \mu_0$ for a primal-dual path following algorithm satisfying the assumptions of Luo et al. \cite{Luo}.
\label{thm:test_guarantee}
\end{theorem}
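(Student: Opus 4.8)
The plan is to run the proof through the two ingredients already assembled in the paper: the existence of a strictly complementary optimizer (Theorem~\ref{thm:strict_complementarity_socp}, available because $\lambda$ is not a fusion value) and the superlinear approach of the centered iterates to the analytic center (Theorem~\ref{thm:Luo}). Solving the SOCP as an SDP, strict complementarity makes the hypotheses of Theorem~\ref{thm:Luo} hold, so the $\mu'$-centered iterates satisfy $\norm{X(\mu')-X^a}=O(\mu')$ and $\norm{Z(\mu')-Z^a}=O(\mu')$. Translating back, every SOCP variable converges to its value at a fixed strictly complementary optimal solution; in particular $\norm{\x-\x^*}=O(\mu)$ (the gap $\mu$ being a fixed multiple of the central-path parameter), $\bdelta_{ij}\to\bdelta_{ij}^*$, and the complementarity residuals $\bsigma_2^i,\sigma_3^i,\bepsilon_2^{ij}$ tend to $\bz$ (they are $O(\sqrt\mu)$ by \eqref{eq:mw_bound_epsilon2}--\eqref{eq:mw_bound_sigma23}). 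Because the primal objective in \eqref{eq:multweight} is strongly convex in $\x$, the optimizer $\x^*$ and hence the true clusters $\{C_k\}$ are unique, so it suffices to prove that both test quantities have strictly feasible limits and then choose $\mu_0$ small enough to absorb the finite-accuracy perturbation.

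First I would check that the ball method returns exactly the true clusters for all small $\mu$. Writing $d_{\min}=\min\{\norm{\x_i^*-\x_j^*}:\x_i^*\ne\x_j^*\}>0$, two indices in the same true cluster satisfy $\norm{\x_i-\x_j}\le\norm{\x_i-\x_i^*}+\norm{\x_j^*-\x_j}=O(\mu)$, which drops below the radius $\mu^{3/4}$ once $\mu$ is small since $\mu=o(\mu^{3/4})$, while two indices in different clusters satisfy $\norm{\x_i-\x_j}\ge d_{\min}-O(\mu)>\mu^{3/4}$; hence each ball captures one true cluster and nothing more. This is exactly where the $O(\mu)$ primal rate of Theorem~\ref{thm:Luo} is essential, as the crude $O(\sqrt\mu)$ residual bound would not beat $\mu^{3/4}$. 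With the candidate clusters correct, the separation condition is immediate: for distinct $C_k,C_{k'}$ the weighted centroid $\bar\x$ converges to the centroid $\bar\x^*$ of the two distinct optimal centers, so $D_{k,k'}\to\sum_{l\in C_k\cup C_{k'}}r_l\norm{\x_l^*-\bar\x^*}^2>0$ while $2\mu\to0$, yielding $D_{k,k'}>2\mu$ for small $\mu$.

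The hard part will be the CGR subgradient condition, and I would prove it by showing $\q_{ij}\to-\bdelta_{ij}^*$ and invoking strict complementarity. Fix a true cluster $C$ and $i,j\in C$. In \eqref{eq:q_def} three things happen as $\mu\to0$: $\bomega_i=\tfrac{\sigma_3^i}{s_i}\z_i+\tfrac1{s_i}\bsigma_2^i\to\bz$ since the residuals vanish and $s_i\to s_i^*\ge\tfrac12$; the term $\x_i-\x_j\to\bz$ because $i,j$ share a cluster; and, crucially, the external correction $\tfrac1{r'}\sum_{k\notin C}r_k(\bdelta_{\abrack{ik}}-\bdelta_{\abrack{jk}})$ vanishes. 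The last point is the key computation: for $k\notin C$ the pair $(i,k)$ straddles two distinct clusters, so $\x_i^*\ne\x_k^*$, the norm is differentiable there, and the optimal dual $\bdelta_{\abrack{ik}}^*=-\lambda(\x_i^*-\x_k^*)/\norm{\x_i^*-\x_k^*}$ is uniquely determined and depends only on the two cluster centers; since $\x_i^*=\x_j^*$ for every $i,j\in C$, this value is identical for $i$ and $j$ and the difference cancels termwise. Hence $\q_{ij}\to-\bdelta_{ij}^*$, and strict complementarity \eqref{eq:strict_cs_a} at the within-cluster pair, where $\y_{ij}^*=\bz$ and $t_{ij}^*=0$, gives $\lambda>\norm{\bdelta_{ij}^*}=\lim\norm{\q_{ij}}$, so $\norm{\q_{ij}}\le\lambda$ for all small $\mu$.

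I would finish by taking $\mu_0$ to be the smallest of the three thresholds arising from correct cluster identification, the separation limit, and the CGR limit. The main obstacle is the cancellation of the external correction term in \eqref{eq:q_def}: it is what lets $\q_{ij}$ inherit the strict slack $\lambda-\norm{\bdelta_{ij}^*}>0$ furnished by strict complementarity, rather than only the nonstrict bound $\norm{\q_{ij}}\le\lambda$ of Lemma~\ref{cluster_lemma_subgradient}, which alone would be too weak to survive perturbation. A secondary point I would verify is that strict complementarity of the SOCP transfers to its SDP embedding, so that Theorem~\ref{thm:Luo} genuinely applies and the $O(\mu)$ primal rate persists after the reformulation and the rounding to a feasible SOCP pair.
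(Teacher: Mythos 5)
Your proposal is correct, and it rests on the same two pillars as the paper's proof: Theorem~\ref{thm:Luo} applied under the strict complementarity guaranteed by Theorem~\ref{thm:strict_complementarity_socp}, with the $O(\mu)$ convergence of the iterates to the analytic center converted into the two test conditions. Where you genuinely diverge is in the CGR step. The paper proceeds quantitatively: Lemma~\ref{lemma_bdelta} gives $\norm{\bdelta_{ij}}\le\lambda-r+p'\mu$ with $r>0$ supplied by strict complementarity of the analytic center, and the cross-cluster term is bounded in Lemma~\ref{lemma_g_diff} by manipulating the perturbed complementary-slackness identities \eqref{eq:mwcs_a1}--\eqref{eq:mwcs_a2} (dividing by $t_{ik}\ge q-2p\mu$), which yields an $O(\sqrt{\mu})$ bound and culminates in the fully explicit inequality of Lemma~\ref{lemma_main}. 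You instead observe that for $k\notin C$ complementarity pins down the optimal dual uniquely as $-\lambda(\x_i^*-\x_k^*)/\norm{\x_i^*-\x_k^*}$, which is identical for all $i$ in the same cluster, so the cross-cluster sum cancels up to $2p'\mu$ by the dual convergence rate alone; this is cleaner and even sharper for that term (an $O(\mu)$ rather than $O(\sqrt{\mu})$ bound, though the $\bomega_i$ terms keep the overall error at $O(\sqrt{\mu})$), at the cost of being qualitative --- no data-dependent constants or explicit $\mu_0$, which the paper's lemmas do provide. You also prove a step the paper leaves implicit: that the balls of radius $\mu^{3/4}$ in Algorithm~\ref{alg:find_clusterA} return exactly the true clusters for small $\mu$, and you correctly identify that the $O(\mu)$ primal rate (not $O(\sqrt{\mu})$) is what beats the $\mu^{3/4}$ threshold. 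Two minor imprecisions, neither fatal: the limit of the iterates is the analytic center $(\x^a,\bdelta^a)$, strictly complementary because it lies in the relative interior of the optimal face, so your ``$\bdelta_{ij}^*$'' should be read as $\bdelta_{ij}^a$ (the paper's Lemma~\ref{lemma_bdelta} makes the same identification); and your closing caveat about transferring strict complementarity through the SDP embedding is a real requirement, but it is the same point the paper itself delegates to the cited literature rather than proving.
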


Let $(\x, \y, \z, s, u, t, \bdelta, \bbeta, \gamma)$ denote a primal and dual feasible solution. Let $C_1, C_2, ..., C_K$ denote the clusters obtained at optimum. Let $\mu' \in (0,1)$ denote the central path parameter and let $\mu$ denote the duality gap at the feasible solution. By Theorem \ref{thm:Luo}, there hold
\[
\norm{\x(\mu') - \x^a} = O(\mu'), \quad \norm{\bdelta(\mu) - \bdelta^a} = O(\mu')
\]
where $\x(\mu'), \bdelta(\mu')$ are $\mu'$-centered solutions and $\x^a, \bdelta^a$ are the analytic centers of the primal and dual optimal sets respectively. Moreover, since the iterates converge tangentially to the central path, we may assume the size of the central path neighborhood to be as follows
\[
\norm{\x - \x(\mu')} = O(\mu'), \quad \norm{\bdelta - \bdelta(\mu')} = O(\mu').
\]
Luo et al.\ \cite{Luo} validated the assumption above for their interior point algorithm, which is a generalization of the Mizuno-Todd-Ye predictor-corrector method for linear programming. Combine the two sets of equations above and employ the triangle inequality to obtain
\[
\norm{\x - \x^a} = O(\mu'), \quad \norm{\bdelta - \bdelta^a} = O(\mu').
\]
As the duality gap $\mu$ is of a linear order of the central path parameter $\mu'$, the equalities above are rewritten as
\[
\norm{\x - \x^a} = O(\mu), \quad \norm{\bdelta - \bdelta^a} = O(\mu).
\]

Define $p, p' \ge 0$ such that $\norm{\x_i - \x_i^a} \le p \mu$ for all $i$ and $\norm{\bdelta_{ij} - \bdelta^a_{ij}} \le p' \mu$ for all distinct pairs $(i,j)$. Then, for all distinct pairs $(i, j)$ in any cluster $C_k$, there holds $\norm{\x_i - \x_j} \le 2p \mu$. Moreover, define $q > 0$ such that all $\x_i^a$'s in different clusters are at least $q$ apart, which implies that $\x_i$'s in different clusters are separated by a distance of at least $q - 2p \mu$. We may assume the duality gap satisfies $\mu < \frac{q}{2p}$. Notice that this assumption is guaranteed to be true after a finite number of iterations.

\subsection{Bound CGR subgradients}\label{sec:delta}
Let $C:= C_k$ for some $k \in [K]$.
Recall that
$\q_{ij}:= -\bdelta_{ij} + \frac{1}{r'} \cdot (\x_i - \x_j - \bomega_i + \bomega_j) - \frac{1}{r'} \sum_{k \notin C} r_k (\bdelta_{\abrack{ik}} - \bdelta_{\abrack{jk}})$ for $i,j \in C_k, i < j$.
To have
$\norm{\q_{ij}} \le \lambda$ as required in the CGR subgradient condition, we establish an upper bound on the norm of each term in the construction of $\q_{\abrack{ij}}$. The proof for the following upper bounds are attached in Section \ref{SM_C} of the supplementary material. The proof of the following lemmas relies on the key observations: both $\norm{\bepsilon_2^{ij}}$ and $\norm{\begin{pmatrix}
\bsigma_2^i\\
\sigma_3^i
\end{pmatrix}}$ have upper bounds of $O(\sqrt{\mu})$ as shown in \eqref{eq:mw_bound_epsilon2} and \eqref{eq:mw_bound_sigma23}.
\begin{lemma}
For all $i,j \in C, i < j$, the following inequality holds
\[
\norm{\bdelta_{ij}} \le \lambda - r + p' \mu
\]
where $r:= \min_{l \ne l', l, l' \in C_k, k \in [K]} (\lambda - \norm{\bdelta^a_{ll'}}) > 0$.
\label{lemma_bdelta}
\end{lemma}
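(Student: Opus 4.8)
The inequality is a triangle-inequality estimate around the analytic center $\bdelta^a$. Since the algorithm's iterate satisfies $\norm{\bdelta_{ij}-\bdelta^a_{ij}}\le p'\mu$ by the definition of $p'$ given just above, it suffices to show that the analytic center itself already obeys $\norm{\bdelta^a_{ij}}\le\lambda-r$ for every same-cluster pair $i,j$, where $r=\min_{l\ne l',\,l,l'\in C_k,\,k\in[K]}(\lambda-\norm{\bdelta^a_{ll'}})$. Thus the whole statement reduces to verifying that $r>0$, i.e.\ that $\norm{\bdelta^a_{ll'}}<\lambda$ \emph{strictly} for all pairs lying in a common cluster. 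This strict bound is exactly the content of the strict complementarity condition \eqref{eq:strict_cs_a} evaluated at the analytic center.

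\textbf{Key step: strict complementarity of the analytic center.} First I would note that the primal objective of \eqref{eq:multweight} is strictly convex in $\x$ (the data-fidelity term alone is), so the primal optimizer $\x^*$ is unique; hence $\x^a=\x^*$ and, for any $i,j$ in the same cluster $C_k$, the primal analytic-center data satisfy $\y^a_{ij}=\x_i^*-\x_j^*=\bz$ and $t^a_{ij}=\norm{\y^a_{ij}}=0$. Next, Theorem~\ref{thm:strict_complementarity_socp} guarantees that \emph{some} strictly complementary optimal pair exists when $\lambda$ is not a fusion value. Because strict complementarity of a conic optimal solution is preserved on the relative interior of the optimal face once it holds at a single point, and the analytic center lies in that relative interior, the analytic-center pair is itself strictly complementary. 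Applying \eqref{eq:strict_cs_a} to a same-cluster pair then gives
\[
\lambda = t^a_{ij}+\lambda > \norm{\y^a_{ij}+\bdelta^a_{ij}} = \norm{\bdelta^a_{ij}},
\]
so $\lambda-\norm{\bdelta^a_{ij}}>0$ for every such pair.

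\textbf{Conclusion.} As $r$ is a minimum of finitely many strictly positive quantities $\lambda-\norm{\bdelta^a_{ll'}}$, we obtain $r>0$, and by definition of the minimum $\norm{\bdelta^a_{ij}}\le\lambda-r$ for every same-cluster pair $i,j$. Combining with the convergence bound via the triangle inequality,
\[
\norm{\bdelta_{ij}} \le \norm{\bdelta^a_{ij}} + \norm{\bdelta_{ij}-\bdelta^a_{ij}} \le (\lambda-r) + p'\mu,
\]
which is the claimed estimate.

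\textbf{Main obstacle.} The only nontrivial point is the passage from ``a strictly complementary solution exists'' (Theorem~\ref{thm:strict_complementarity_socp}) to ``the analytic center is strictly complementary,'' which rests on the relative-interior characterization of strict complementarity for second-order cones. I would make sure to invoke the correct statement of this property (analogous to the linear- and semidefinite-programming cases) and to confirm that $\x^a$ coincides with the unique primal optimizer so that $\y^a_{ij}=\bz$ and $t^a_{ij}=0$ on same-cluster pairs; everything else is bookkeeping with the already-established $O(\mu)$ convergence bounds.
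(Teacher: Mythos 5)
Your proof is correct and follows essentially the same route as the paper's: strict complementarity at the analytic center gives $\norm{\bdelta^a_{ll'}}<\lambda$ for same-cluster pairs, hence $r>0$ and $\norm{\bdelta^a_{ij}}\le\lambda-r$, and the triangle inequality with $\norm{\bdelta_{ij}-\bdelta^a_{ij}}\le p'\mu$ finishes it. The only difference is that you explicitly justify the step the paper asserts in one line ("by the definition of analytic center and strict complementarity") via Theorem~\ref{thm:strict_complementarity_socp}, the relative-interior characterization, and uniqueness of the primal optimizer, which is a sound and indeed more complete rendering of the same argument.
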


\begin{lemma}
  For all $i,j \in C$, $k \notin C$
  the following inequality holds
\[
\norm{\bdelta_{\abrack{ik}} - \bdelta_{\abrack{jk}}} \le \frac{4 \lambda p \mu}{q - 2p \mu} +\frac{\left (\sqrt{ \frac{1}{r_i r_k}} + \sqrt{\frac{1}{r_j r_k}}\right ) \cdot \sqrt{\sum_{l=1}^n r_l \norm{\bar \a - \a_l}^2 \mu+ 2\mu^2 }} {q - 2p \mu} + \frac{\mu}{q - 2p \mu}
\]
\label{lemma_g_diff}
\end{lemma}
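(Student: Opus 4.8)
The plan is to express each of $\bdelta_{\abrack{ik}}$ and $\bdelta_{\abrack{jk}}$ through the complementarity residual \eqref{eq:mwcs_a2} and then bound their difference in three pieces, one matching each summand of the claimed inequality. I treat the case $i<k$, $j<k$; the remaining sign cases are identical after inserting the signs dictated by \eqref{eq:abrack_notation}, which do not change any of the norms below. Since $i,j\in C$ and $k\notin C$, the pairs $i,k$ and $j,k$ straddle two different clusters, so the separation estimate at the start of Section~\ref{sec:guarantee} gives $\norm{\y_{ik}}=\norm{\x_i-\x_k}\ge q-2p\mu$, hence $t_{ik}\ge\norm{\y_{ik}}\ge q-2p\mu>0$ once $\mu<q/(2p)$ (and likewise for $jk$); while $i,j$ share a cluster, so $\norm{\y_{ik}-\y_{jk}}=\norm{\x_i-\x_j}\le 2p\mu$. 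From \eqref{eq:mwp_constr1} we have $\y_{ik}=\x_i-\x_k$, and \eqref{eq:mwcs_a2} yields the exact formula $\bdelta_{ik}=(\bepsilon_2^{ik}-\lambda\y_{ik})/t_{ik}$, which is well defined since $t_{ik}>0$.

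Subtracting the analogous expression for $\bdelta_{jk}$ gives the central decomposition
\[
\bdelta_{ik}-\bdelta_{jk}=\left(\frac{\bepsilon_2^{ik}}{t_{ik}}-\frac{\bepsilon_2^{jk}}{t_{jk}}\right)-\lambda\left(\frac{\y_{ik}}{t_{ik}}-\frac{\y_{jk}}{t_{jk}}\right).
\]
The first bracket is handled directly: by the triangle inequality and $t_{ik},t_{jk}\ge q-2p\mu$ its norm is at most $(\norm{\bepsilon_2^{ik}}+\norm{\bepsilon_2^{jk}})/(q-2p\mu)$, and substituting the residual bound \eqref{eq:mw_bound_epsilon2} for each of $\bepsilon_2^{ik}$ and $\bepsilon_2^{jk}$ produces exactly the middle term $\frac{(\sqrt{1/(r_ir_k)}+\sqrt{1/(r_jr_k)})\sqrt{\sum_l r_l\norm{\bar\a-\a_l}^2\mu+2\mu^2}}{q-2p\mu}$.

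For the second bracket I would factor each vector into direction and magnitude: writing $\hat\y_{ik}=\y_{ik}/\norm{\y_{ik}}$ and $\theta_{ik}=\norm{\y_{ik}}/t_{ik}\in(0,1]$, so that $\y_{ik}/t_{ik}=\theta_{ik}\hat\y_{ik}$, and splitting
\[
\frac{\y_{ik}}{t_{ik}}-\frac{\y_{jk}}{t_{jk}}=\theta_{ik}(\hat\y_{ik}-\hat\y_{jk})+(\theta_{ik}-\theta_{jk})\hat\y_{jk}.
\]
The directional part is controlled by the normalized-vector inequality $\norm{\hat\y_{ik}-\hat\y_{jk}}\le 2\norm{\y_{ik}-\y_{jk}}/\norm{\y_{ik}}\le 4p\mu/(q-2p\mu)$; using $\theta_{ik}\le 1$ and multiplying by $\lambda$ gives the first term $\frac{4\lambda p\mu}{q-2p\mu}$. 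For the magnitude part, \eqref{eq:mwcs_a1} together with Cauchy--Schwarz and $\norm{\bdelta_{ik}}\le\lambda$ (dual feasibility \eqref{eq:mwd_constr2}) gives $t_{ik}-\norm{\y_{ik}}\le\epsilon_1^{ik}/\lambda\le\mu/\lambda$, so $1-\theta_{ik}=(t_{ik}-\norm{\y_{ik}})/t_{ik}\le(\mu/\lambda)/(q-2p\mu)$; since both $\theta_{ik}$ and $\theta_{jk}$ then lie in the common interval $[1-(\mu/\lambda)/(q-2p\mu),\,1]$, we get $|\theta_{ik}-\theta_{jk}|\le(\mu/\lambda)/(q-2p\mu)$, and multiplying by $\lambda$ gives precisely the last term $\frac{\mu}{q-2p\mu}$. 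Summing the three contributions yields the stated bound.

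The two places that require care are the source of the difficulty. First, complementarity couples $\bdelta_{ik}$ to $t_{ik}$ rather than to $\norm{\y_{ik}}$, so one cannot treat $\y_{ik}/t_{ik}$ as a pure unit vector; the detour through \eqref{eq:mwcs_a1} to estimate $t_{ik}-\norm{\y_{ik}}$ is what repairs this. Second, and the main obstacle to matching the exact constant, is obtaining a single factor $\mu$ in the final term: a naive estimate would bound $(1-\theta_{ik})+(1-\theta_{jk})$ and produce $2\mu$, whereas the sharp bound requires recognizing that $\theta_{ik}$ and $\theta_{jk}$ occupy the same narrow interval and bounding their difference by the interval's width, so that only one factor of $\mu$ survives.
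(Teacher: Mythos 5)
Your proof is correct, and every constant lands exactly on the stated bound, but your decomposition differs from the paper's. You write $\bdelta_{ik}=(\bepsilon_2^{ik}-\lambda\y_{ik})/t_{ik}$ and split the difference symmetrically into an $\bepsilon_2$ part plus a direction/magnitude split of $\y_{ik}/t_{ik}-\y_{jk}/t_{jk}$, getting the $\frac{4\lambda p\mu}{q-2p\mu}$ term entirely from the normalized-vector inequality (factor $2$ times $\norm{\x_i-\x_j}\le 2p\mu$) and the $\frac{\mu}{q-2p\mu}$ term from the observation that both ratios $\theta_{ik},\theta_{jk}$ lie in a common interval of width $(\mu/\lambda)/(q-2p\mu)$. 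The paper instead makes the asymmetric manipulation: starting from $t_{ik}\bdelta_{ik}-t_{jk}\bdelta_{jk}=-\lambda(\x_i-\x_j)+\bepsilon_2^{ik}-\bepsilon_2^{jk}$, it adds $(t_{jk}-t_{ik})\bdelta_{jk}$ to both sides and divides by $t_{ik}$, so that
\[
\bdelta_{ik}-\bdelta_{jk}=\frac{t_{jk}-t_{ik}}{t_{ik}}\bdelta_{jk}-\frac{\lambda(\x_i-\x_j)}{t_{ik}}+\frac{\bepsilon_2^{ik}-\bepsilon_2^{jk}}{t_{ik}},
\]
then bounds $|t_{ik}-t_{jk}|\le\norm{\x_i-\x_j}+\epsilon_1/\lambda$ (derived from \eqref{eq:mwcs_a1} exactly as in your $\theta$ estimate) and uses $\norm{\bdelta_{jk}}\le\lambda$; there the $4\lambda p\mu$ arises as $2\lambda p\mu+2\lambda p\mu$ from two separate terms, and the single $\mu$ comes from the single $\epsilon_1$ appearing in the $t$-difference bound. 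The two arguments rest on identical ingredients (complementarity residuals \eqref{eq:mwcs_a1}--\eqref{eq:mwcs_a2}, dual feasibility, inter-cluster separation $t\ge q-2p\mu$, intra-cluster proximity $2p\mu$), so neither is stronger; the paper's route is marginally more elementary since it needs only triangle inequalities, while yours is more geometric and makes the provenance of each constant transparent — in particular your interval trick explains structurally why only one factor of $\mu$ survives, which in the paper is a consequence of the asymmetry of the decomposition rather than an explicit observation.
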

\begin{lemma}
For all $i\in C$, it holds
\[
\norm{\bomega_i} \le 2 \sqrt{ 2 \cdot \left (\frac{1}{r_i}\sum_{l=1}^n r_l \norm{\bar \a - \a_l}^2 + \frac{2 \mu}{r_i} + 1 \right) \cdot \left (\frac{1}{2} + \frac{1}{r_i} \left (\sum_{l=1}^n r_l (r' - r_i) \lambda \norm{\a_l} + \mu \right) \right ) \cdot \mu}.
\].
\label{lemma_bomega}
\end{lemma}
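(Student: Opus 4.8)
The plan is to reduce this lemma to the already-stated bound \eqref{eq:mw_bound_sigma23} on the complementarity residual, by first establishing a purely geometric estimate $\norm{\bomega_i}\le 2\sqrt2\,M$, where $M:=\norm{\begin{pmatrix}\bsigma_2^i\\\sigma_3^i\end{pmatrix}}$, with the constant $2\sqrt2$ coming entirely from primal feasibility of the SOCP. Observe that the right-hand side of the claimed inequality is exactly $2\sqrt2$ times the bound in \eqref{eq:mw_bound_sigma23}: writing that bound as $\sqrt{A B\mu}$ with $A$ and $B$ the two parenthesized factors, one has $2\sqrt2\,\sqrt{AB\mu}=2\sqrt{2AB\mu}$, matching the statement verbatim. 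So it suffices to prove $\norm{\bomega_i}\le 2\sqrt2\,M$ and then substitute.

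First I would apply the triangle inequality to the definition \eqref{eq:subgrad_omega_def}, obtaining $\norm{\bomega_i}\le \tfrac{1}{s_i}\big(|\sigma_3^i|\,\norm{\z_i}+\norm{\bsigma_2^i}\big)$, and bound each of $|\sigma_3^i|$ and $\norm{\bsigma_2^i}$ by $M$, since each is a subvector of the stacked residual. This yields $\norm{\bomega_i}\le \tfrac{M}{s_i}\big(\norm{\z_i}+1\big)$.

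The crux is then to show $\tfrac{\norm{\z_i}+1}{s_i}\le 2\sqrt2$. For this I would exploit the two primal constraints attached to index $i$: constraint \eqref{eq:mwp_constr3} (with $r_i>0$) forces $u_i=s_i-1$, while squaring the cone constraint \eqref{eq:mwp_constr5} gives $s_i^2\ge \norm{\z_i}^2+u_i^2$. Substituting $u_i=s_i-1$ and simplifying produces the key relation $s_i\ge \tfrac12(\norm{\z_i}^2+1)$. Combining this with the elementary inequality $\norm{\z_i}+1\le \sqrt2\,\sqrt{\norm{\z_i}^2+1}$ and using $\norm{\z_i}^2+1\ge 1$ gives
\[
\frac{\norm{\z_i}+1}{s_i}\le \sqrt2\cdot\frac{\sqrt{\norm{\z_i}^2+1}}{\tfrac12(\norm{\z_i}^2+1)}=\sqrt2\cdot\frac{2}{\sqrt{\norm{\z_i}^2+1}}\le 2\sqrt2,
\]
hence $\norm{\bomega_i}\le 2\sqrt2\,M$.

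Finally I would substitute \eqref{eq:mw_bound_sigma23} for $M$ to recover the stated right-hand side. I expect essentially no obstacle internal to this lemma: every step is an elementary inequality, and the only mildly nonobvious move is recovering $s_i\ge\tfrac12(\norm{\z_i}^2+1)$ from feasibility. The genuine analytical work lives upstream in establishing \eqref{eq:mw_bound_sigma23} itself (deferred to Section~\ref{sec:SM_B_mw}), which already supplies the $O(\sqrt\mu)$ decay that this lemma inherits unchanged.
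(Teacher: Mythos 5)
Your proof is correct and takes essentially the same route as the paper's: both apply the triangle inequality to \eqref{eq:subgrad_omega_def}, use primal feasibility (\eqref{eq:mwp_constr3} and \eqref{eq:mwp_constr5}, which give $s_i \ge \tfrac{1}{2}(1+\norm{\z_i}^2)$) to control the factors $\norm{\z_i}/s_i$ and $1/s_i$, and then invoke the residual bound \eqref{eq:mw_bound_sigma23}. The only difference is bookkeeping: the paper bounds $\norm{\z_i}/s_i \le 2$ and $1/s_i \le 2$ separately and picks up the $\sqrt{2}$ from $(\sigma_3^i+\norm{\bsigma_2^i})^2 \le 2\bigl((\sigma_3^i)^2+\norm{\bsigma_2^i}^2\bigr)$, whereas you bound both residual blocks by the stacked norm first and fold the $\sqrt{2}$ into the geometric factor $(\norm{\z_i}+1)/s_i \le 2\sqrt{2}$ --- the same inequalities, rearranged.
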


\begin{lemma}
For all $i, j \in C$ and $i < j$, there holds
\begin{equation}
    \begin{aligned}
    &\norm{\q_{ij}}\\
     \le & \frac{2}{r'} \sqrt{ 2 \cdot \left (\frac{1}{r_i}\sum_{l=1}^n r_l \norm{\bar \a - \a_l}^2 + \frac{2 \mu}{r_i} + 1 \right) \cdot \left (\frac{1}{2} + \frac{1}{r_i} \left (\sum_{l=1}^n r_l (r' - r_i) \lambda \norm{\a_l} + \mu \right) \right ) \cdot \mu} \\
    + & \frac{2}{r'} \sqrt{ 2 \cdot \left (\frac{1}{r_j}\sum_{l=1}^n r_l \norm{\bar \a - \a_l}^2 + \frac{2 \mu}{r_j} + 1 \right) \cdot \left (\frac{1}{2} + \frac{1}{r_j} \left (\sum_{l=1}^n r_l (r' - r_j) \lambda \norm{\a_l} + \mu \right) \right ) \cdot \mu} \\
    +& \frac{1}{r'} \sum_{k \notin C} r_k \left ( \frac{4 \lambda p \mu}{q - 2p \mu} + \frac{2 \sqrt{\sum_{l=1}^n \norm{\bar \a - \a_l}^2 \mu + 2\mu^2}}{q - 2p \mu} + \frac{\mu}{q - 2p \mu} \right) + \lambda - r + p' \mu + \frac{2 p \mu}{r'}
    \end{aligned}
    \label{eq:cgr_subgradient_norm}
\end{equation}
%
\label{lemma_main}
\end{lemma}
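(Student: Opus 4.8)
The plan is to apply the triangle inequality directly to the definition \eqref{eq:q_def} of $\q_{ij}$ and then substitute the term-by-term bounds already established in Lemmas \ref{lemma_bdelta}, \ref{lemma_g_diff}, and \ref{lemma_bomega}, together with the elementary estimate $\norm{\x_i - \x_j} \le 2p\mu$ valid for indices $i,j$ in a common cluster, recorded at the beginning of Section~\ref{sec:guarantee}. The lemma is essentially an assembly step: its only real content is verifying that the five resulting pieces add up to the right-hand side of \eqref{eq:cgr_subgradient_norm}.

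First I would split $\q_{ij}$ into its five constituent groups. From \eqref{eq:q_def}, since $1/r'$ and each $r_k$ are nonnegative, the triangle inequality gives
\[
\norm{\q_{ij}} \le \norm{\bdelta_{ij}} + \frac{1}{r'}\norm{\x_i - \x_j} + \frac{1}{r'}\norm{\bomega_i} + \frac{1}{r'}\norm{\bomega_j} + \frac{1}{r'}\sum_{k \notin C} r_k \norm{\bdelta_{\abrack{ik}} - \bdelta_{\abrack{jk}}}.
\]
Each summand on the right now matches exactly one of the available bounds, so no new estimation is needed.

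Second, I would substitute. The term $\norm{\bdelta_{ij}}$ is replaced by $\lambda - r + p'\mu$ from Lemma \ref{lemma_bdelta}; the term $\frac{1}{r'}\norm{\x_i-\x_j}$ by $\frac{2p\mu}{r'}$; and each of $\frac{1}{r'}\norm{\bomega_i}$ and $\frac{1}{r'}\norm{\bomega_j}$ by the $O(\sqrt{\mu})$ expression of Lemma \ref{lemma_bomega} evaluated at $i$ and at $j$, respectively. Finally the off-cluster sum $\frac{1}{r'}\sum_{k\notin C} r_k \norm{\bdelta_{\abrack{ik}} - \bdelta_{\abrack{jk}}}$ is controlled by passing the per-$k$ estimate of Lemma \ref{lemma_g_diff} through the nonnegative weighted sum. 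Collecting these five contributions produces precisely \eqref{eq:cgr_subgradient_norm}.

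The one routine-but-delicate point is the last substitution. The per-$k$ bound of Lemma \ref{lemma_g_diff} carries the index-dependent factors $\sqrt{1/(r_i r_k)} + \sqrt{1/(r_j r_k)}$ inside its middle term, whereas \eqref{eq:cgr_subgradient_norm} records the consolidated coefficient $2$ over the common denominator $q - 2p\mu$; reconciling the two requires invoking the standing hypothesis $\mu < q/(2p)$ (so that $q - 2p\mu > 0$ and every denominator is well defined) and absorbing the weight factors into the stated estimate. Beyond this bookkeeping I expect no genuine obstacle, since every individual bound has already been proven in the preceding lemmas and the argument is purely additive.
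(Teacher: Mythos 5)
Your proposal matches the paper's proof essentially verbatim: the paper likewise applies the triangle inequality to \eqref{eq:q_def}, invokes $\norm{\x_i-\x_j}\le 2p\mu$, and then substitutes Lemmas \ref{lemma_bdelta}, \ref{lemma_g_diff} and \ref{lemma_bomega} term by term. The weight-factor mismatch you flag (the factors $\sqrt{1/(r_ir_k)}+\sqrt{1/(r_jr_k)}$ from Lemma \ref{lemma_g_diff} versus the bare coefficient $2$ and the missing $r_l$ inside the square root in \eqref{eq:cgr_subgradient_norm}) is a genuine imprecision, but it lies in the paper's statement itself, whose own proof passes over the reconciliation without comment, so your treatment is no less rigorous than the original.
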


\subsection{Proof of Theorem \ref{thm:test_guarantee}}
\label{sec:main_proof}
\begin{proof}
  We rewrite \eqref{eq:cgr_subgradient_norm} with $O(\cdot)$ notation to obtain the following inequality
\[
\norm{\q_{\abrack{ij}}} \le \lambda - r + O(\sqrt{\mu}), \quad \forall i, j \in C_k, k \in [K],
\]
since $C=C_k$ is an arbitrarily cluster. As $r>0$ by Lemma \ref{lemma_bdelta}, there exists $\mu_1 > 0$ such that for all $\mu \le \mu_1$, $\norm{\q_{\abrack{ij}}} \le \lambda$ holds for all $i, j \in C_k, k \in [K]$. Here concludes the proof of the CGR subgradient condition.


Let $C_k, C_{k'}$ denote a pair of distinct clusters. Since $q>0$,
there exist $j \in C_k \cup C_{k'}$ such that $\norm{\x_j - \bar \x}^2 \ge \frac{1}{4}q^2$, where
$\bar \x:= \frac{1}{\sum_{i\in C_k\cup C_{k'}}r_i}\sum_{i \in C_k \cup C_{k'}} r_i\x_i$.
Recalling that $D_{k,k'}=\sum_{i \in C_k \cup C_{k'}}r_i\norm{\x_i - \bar \x}^2$,
we conclude that
$D_{k,k'} \ge \frac{1}{4}r_{\min}q^2$, where $r_{\min}=\min\{r_l:l\in C_k\cup C_{k'}\}$.
Pick $0<\mu_2<\frac{1}{8}q^2r_{\min}$. Then  $D_k> 2 \mu_2$. Since $C_k, C_{k'}$ is an arbitrary pair of distinct clusters, $D_{k,k'} > 2 \mu_2$ is true for all pairs of distinct clusters. Here concludes the separation condition.
Let $\mu_0 = \min \{\mu_1, \mu_2\}$, then both CGR subgradient and separation conditions are satisfied for any $\mu \le \mu_0$.
\end{proof}

\subsection{Complexity}
We omit the complexity analysis of our test for the following reasons. To get complexity bounds for our test, we would have to rederive all the complexity bounds from the work of Luo et al.\ \cite{Luo}.
Luo et al.\ adopt the $O(\cdot)$ notation from the early stage of their proof. The constants in the bounds $O(\mu')$ in Theorem \ref{thm:Luo} are omitted from their paper. However, we present our bounds in both the $O(\cdot)$ form and the exact form with data dependencies. Hence, the bottleneck for a complexity analysis lies in the rederivation of
\cite{Luo}.

\section{Computational experiments}
\label{sec:exper}
In this section, we examine the performance of our clustering test for sum-of-norms clustering with multiplicative weights \eqref{eq:multweight}.
Our experiments indicate that correctly chosen multiplicative weights
increase the recovery power of sum-of-norms clustering, similar to
the increase observed in the literature for exponentially decaying weights
mentioned in Section \ref{sec:related_work}.
However, multiplicative weights have
the advantage over exponential weights
of preserving most of the known strong properties of unit weights including the agglomeration property. Besides the strong theoretical properties, use of multiplicative weights also improves the computational complexity. The ADMM update for the multiplicative weights can be computed using a low-rank matrix update as observed by Chi and Lange \cite{ChiLange}. However, ADMM update of general exponentially decaying weights involves solving a full dimensional dense linear system to find the solution. Yuan et al. \cite{dsun1} observe that the vanilla version of ADMM proposed by \cite{ChiLange} is not scalable. Hence, they design a semismooth Newton-CG to solve their subproblem. For the reasons mentioned above, the method of multiplicative weights is our primary method of interest for experiments.


It is important to stress that we do not have a systematic way to produce
multiplicative weights with high recovery power, and indeed, our constructions
of multiplicative weights in this section assume prior knowledge of the clustering
solution.  We suspect that a systematic way exists, but we postpone consideration
of this question to future work since the issue of improved
recovery power is outside the scope of the
certification question considered herein.


We implement the experiment in which the ADMM solver, the cluster-finding algorithm and our clustering test are applied to \eqref{eq:multweight} on both datasets of two half moons and a mixture of Gaussians.
We remark that we have not established for ADMM a theorem analogous to
Theorem~\ref{thm:test_guarantee}, and therefore, it is possible that ADMM
  may fail to certify a clustering even after an arbitrary number
  of iterations.
We intend to answer the following three questions:
(1) Is the certification successful after a reasonable number
of iterations? and
(2) How closely do the certified clusters match the
generative model's ground-truth clusters?
(3) Does the SON clustering with
multiplicative weights (currently chosen using prior knowledge of the solution)
outperform the SON clustering with unit weights?
Our algorithm is implemented in Julia \cite{Julia2017} as shown below.
\begin{algorithm}
\caption{An ADMM algorithm with our clustering test}
\label{alg:admm_wtest}
\begin{algorithmic}
\STATE{Initialize $(\x, \bdelta)$}
\WHILE{clustering test fails or maximum number of iterations is not reached}
\FOR {$l = 1,2,\ldots,t$}
\STATE {ADMM updates by Chi and Lange \cite{ChiLange}}
\ENDFOR
\STATE{Construct a feasible solution for SOCP by \eqref{eq:feasible_sol} from the current ADMM iterate}
\STATE{Compute the duality gap $\mu$}
\STATE{Run Algorithm \ref{alg:find_clusterA} or \ref{alg:find_clusterB} (the parameter $\nu$ is the same augmented Lagrangian parameter from ADMM) to find clusters $\{R_1, R_2, \ldots, R_{K'}\}$}
\STATE{Compute CGR subgradients from dual variables for $\{R_1, R_2, \ldots, R_{K'}\}$}
\STATE{Check the CGR subgradient condition; Check that the separation condition}
 \STATE{ Mark the clustering test `success' if both conditions pass and mark it `failure' otherwise}
\ENDWHILE
\RETURN candidate clusters $\{R_1, R_2, \ldots, R_{K'}\}$
\end{algorithmic}
\end{algorithm}

\begin{algorithm}
\caption{Find clusters (A)}
\label{alg:find_clusterA}
\begin{algorithmic}
\STATE{Define $C \gets \{1,\ldots, n\}$,  $k \gets 1$}
\WHILE{$C \ne \emptyset$}
\STATE{Choose $i\in C$ arbitrarily}
\STATE{Create a cluster $R_k \gets \{j:\norm{\x_i-\x_j}\le\mu^{3/4}\}$ (including $i$ itself)}
\STATE{Delete all these points in $R_k$ from $C$}
\STATE{$k \gets k +1$}
\ENDWHILE
\RETURN candidate clusters $\{R_1, R_2, \ldots, R_{K'}\}$
\end{algorithmic}
\end{algorithm}
\begin{algorithm}
\caption{Find clusters (B)}
\label{alg:find_clusterB}
\begin{algorithmic}
\STATE{Define node set $V \gets \{1,\ldots, n\}$, edge set $E \gets \emptyset$, graph $G \gets (V, E)$}
\FOR{$i=1,2,\ldots,n, j = i+1, i+2, \ldots n$}
\STATE{Compute $\v_{ij} = \mathrm{prox}_{\lambda/\nu \norm{\cdot}} (\x_i - \x_j - \nu^{-1} \bdelta_{ij})$ for all $i \ne j$}
\STATE{If $\v_{ij} = \bz$, add $(i,j)$ to the edge set $E$}
\ENDFOR
\STATE{Find all connected components $\{R_1, R_2, \ldots, R_{K'}\}$ of $G$}
\RETURN candidate clusters $\{R_1, R_2, \ldots, R_{K'}\}$
\end{algorithmic}
\end{algorithm}

Our algorithm terminates if the clustering test succeeds, or if the maximum number of iterations is reached. In the algorithm, the code tests for clustering every $t$ iterations of the ADMM solver. The value of $t$ is taken to be 8 in our experiment. At the end of every $t$ iterations, the solver yields a primal solution and a dual solution, from which our algorithm constructs a primal and dual feasible pair for the SOCP formulation by \eqref{eq:updated_feasible_sol}. With the feasible solution, the algorithm then creates candidate clusters, computes the duality gap and constructs the CGR subgradients. The code checks for the CGR subgradient condition and separation condition. If both conditions hold, the clustering test reports `success'. Otherwise, the code runs $t$ more iterations of the ADMM solver and repeats the clustering test. Each iteration of the ADMM solver is of complexity $O(n^2d)$.

To assess the performance of recovery, we employ
a modification of
the Rand index \cite{Rand}.
Let $\chi_i\in \{1,\ldots,K\}$ denote the cluster assignment of node $i$, $i=1,\ldots,n$ in the first clustering and $\chi_i'$ in the second.
A pair of points $1\le i<j\le n$ scores 1 if
either $\chi_i=\chi_j$ and $\chi'_i=\chi'_j$, or  $\chi_i\ne\chi_j$ and $\chi'_i\ne\chi'_j$.  If not (i.e., $\chi_i=\chi_j$ and $\chi'_i\ne \chi'_j$ or vice versa), the pair scores 0.
Then Rand index is defined as this score divided by $n(n-1)/2$, so that 1 means perfect agreement between clusterings. Recall that it is possible for our algorithm to terminate when the clustering test fails. We label the candidate clusters that fail CGR or separation conditions as inconclusive clusters. Points of inconclusive clusters are
scored as 0 in all of their pairs. In contrast to the ordinary Rand index, for which a score of 0.5 means complete failure in the case of two clusters (i.e., no better than random guessing), our modified Rand index can be as low as 0 if all the points are marked as inconclusive.

In the first experiment, we apply the sum-of-norms clustering with multiplicative weights \eqref{eq:son-clustering} to a simulated dataset of two half moons with 500 instances. The angle $\theta$ of each half moon follows a uniform distribution from the interval $[-\frac{\pi}{2}, \frac{\pi}{2}]$. The weight of each point is assigned to be the pdf of a Gaussian distribution $N(0, \frac{\pi}{5})$ at $\theta$.
A random noise which follows a two-dimensional Gaussian distribution with a mean of 0 and a standard deviation of 0.05 displaces the instances from the moons. Twenty-four linearly spaced values of $\lambda$ are taken from the range $[0.0005, 0.0013]$. The range is determined empirically. Furthermore, the maximum number of iterations is chosen to be 50,000. It took approximately 5.7 hours total on an AMD Opteron Processor 6376 to complete the experiment.

\begin{figure}[h]
    \centering
    \begin{subfigure}{.49 \textwidth}
    \centering
    \includegraphics[scale=0.4]{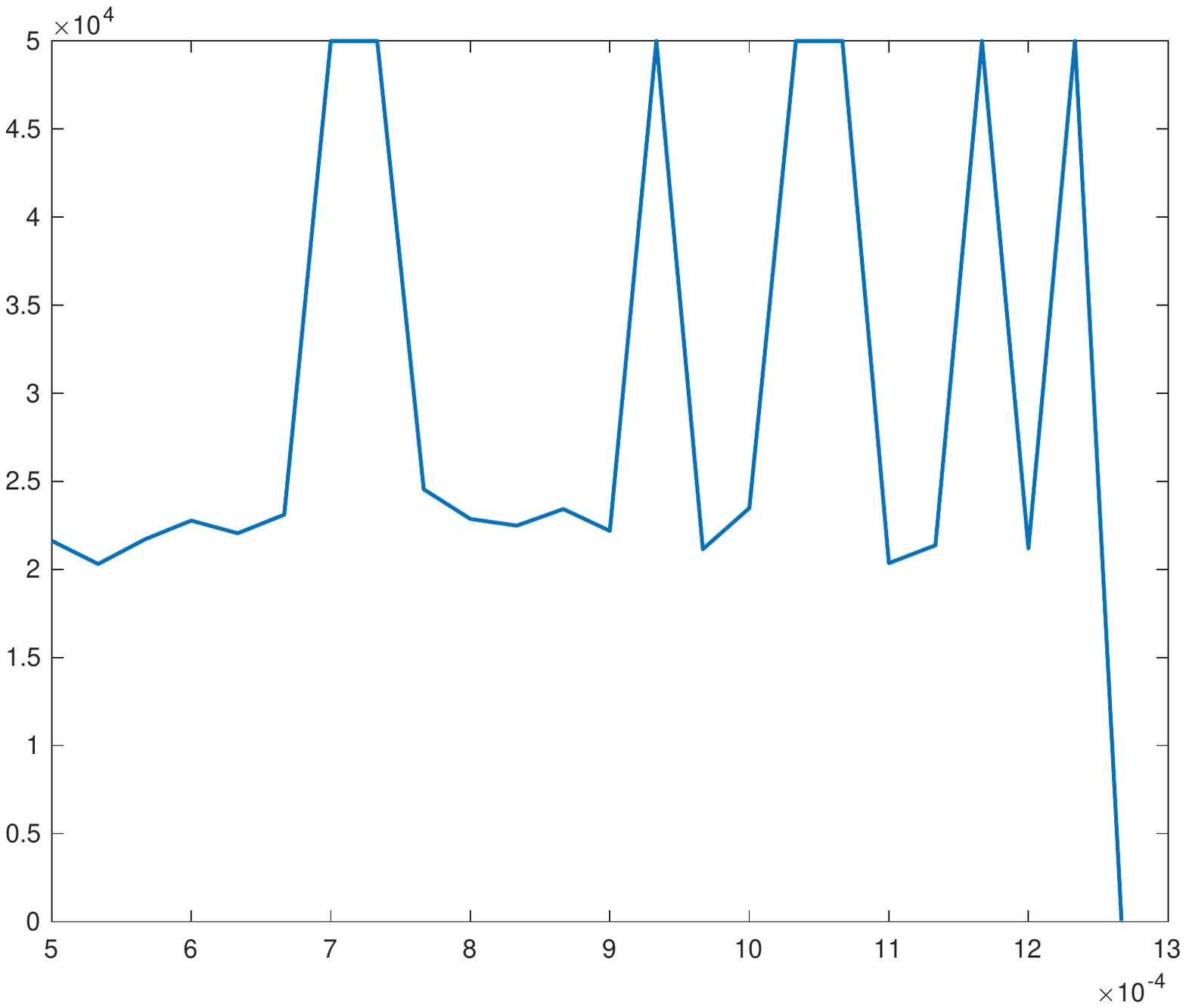}
    \caption{Number of iterations versus $\lambda$}
    \label{fig:iter_hm}
    \end{subfigure}
    \begin{subfigure}{.49 \textwidth}
    \centering
  \includegraphics[scale=0.4]{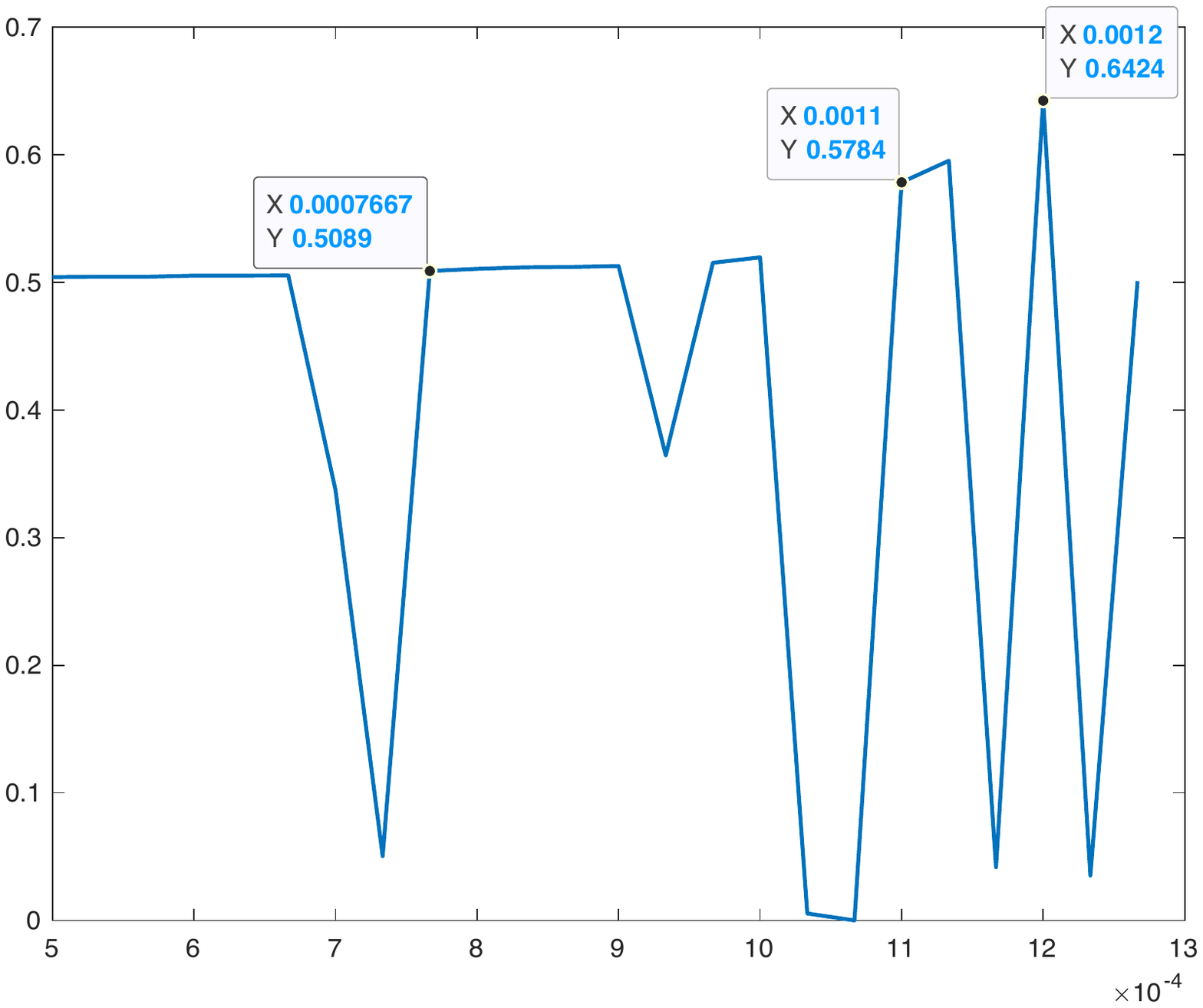}
    \caption{Rand index versus $\lambda$}
    \label{fig:rand_index_hm}
    \end{subfigure}
\end{figure}

Our first objective is to evaluate the performance of our clustering test. At 17 out of 24 values of $\lambda$, the clustering test succeeds before the maximum number of iterations is reached. When $\lambda$ is in the range between $\lambda = 0.001033$ and $\lambda = 0.001067$, the algorithm repeatedly reaches the iteration threshold before the test succeeds as shown in Figure \ref{fig:iter_hm}. The performance is interpretable with theories discussed earlier. The clustering test is not guaranteed to succeed when $\lambda$ is at a fusion value, and the test performs poorly near a fusion value as shown in Figure \ref{fig:iter_hm}. When $n = 500$, there are at most 500 fusion values. All fusion values are in the range between the chosen range of $\lambda$ as observed in the experiment. Hence, fusion occurs frequently, and massive fusion values are located densely in a small region. Thus, in our experiment, it is very likely that the $\lambda$ we pick is near or at a fusion value, which leads to the poor performance of our clustering test at some values of $\lambda$.



The experiment also attempts to explore the relationship between $\lambda$ value and the recovery of half moons. To evaluates the recovery, we compute the Rand index with the recovered clustering and the generative clustering. Figure \ref{fig:rand_index_hm} shows Rand index against $\lambda$ values. The value of Rand index increases and peaks at $\lambda = 0.0012$, where the clustering test succeeds and the Rand index achieves a value of 0.6424.
A visual inspection of the result
(see Section~\ref{sec:SM_Z}) shows that all the points in the middle
of both half-moons are correctly labeled.


In the second experiment, we solve the sum-of-norms clustering problem with multiplicative weights \eqref{eq:multweight} on a mixture of Gaussians.
Multiple researchers have studied the recovery of a mixture of Gaussians using sum-of-norms clustering. Tan and Witten \cite{TanWitten} and Jiang, Vavasis and Zhai \cite{jiangvavasiszhai} performed experiments with unit-weight sum-of-norms clustering \eqref{eq:son-clustering}. They found that \eqref{eq:son-clustering} fails to recover a mixture of Gaussians when the means are close. With carefully chosen weights, the following experiment illustrates that multiplicative weights recover a mixture of Gaussians even when the means are close.

\begin{figure}[h]
    \centering
    \begin{subfigure}{.49 \textwidth}
    \centering
    \includegraphics[scale=0.4]{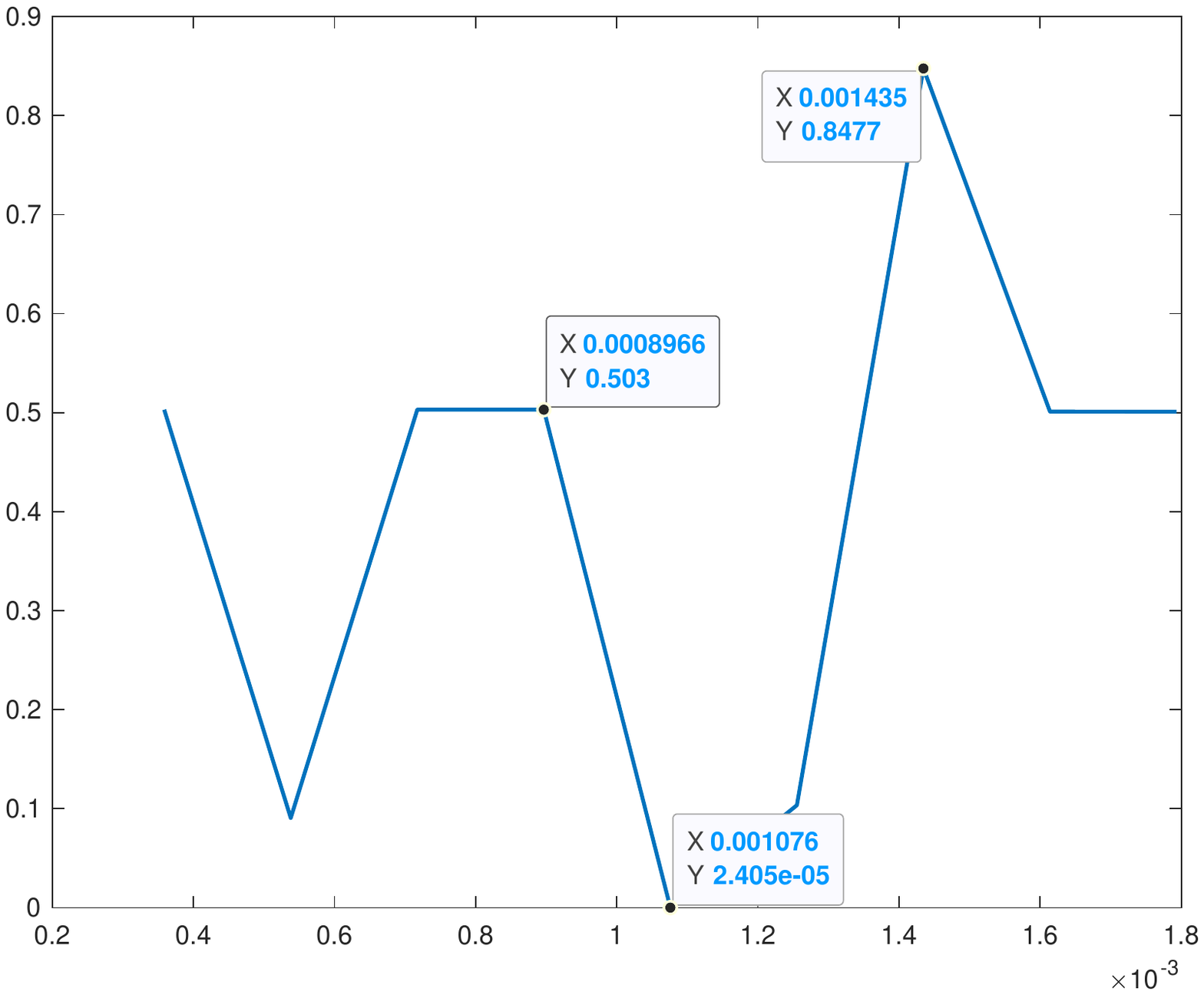}
    \caption{Rand index versus $\lambda$ (all 500 samples)}
    \label{fig:rand_index_500}
    \end{subfigure}
    \begin{subfigure}{.49 \textwidth}
    \centering
  \includegraphics[scale=0.408]{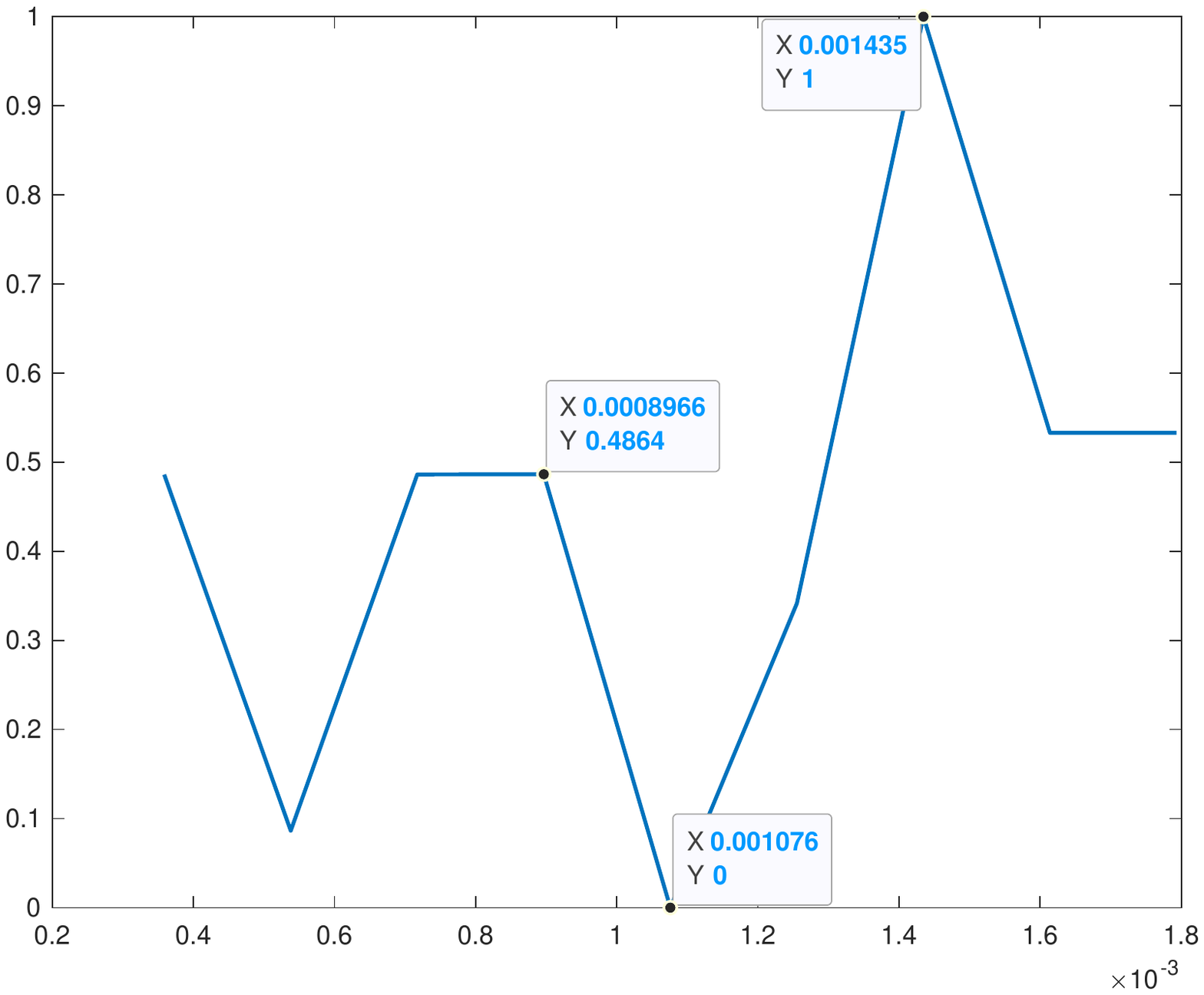}
    \caption{Rand index versus $\lambda$ (105 samples)}
    \label{fig:rand_index_105}
    \end{subfigure}
\end{figure}

In our experiment, we draw 500 samples from a mixture of two Gaussians in $\mathbb R^2$ with equal probability. The two Gaussian means are placed 4 standard deviations apart. We assign the Gaussian pdf value at each sample to be its weight.

We then implement the method described above to solve \eqref{eq:multweight}. From Figure \ref{fig:rand_index_500}, the Rand index peaks at $\lambda = 0.001435$, achieving a value of $0.8477$. The Rand index is at its lowest when $\lambda = 0.001074$. The low Rand index is due to the failure of clustering test for many candidate clusters near fusion values. At other values of $\lambda$, our clustering test succeeds after a finite number of iterations.

Since we are interested in the dataset of a mixture of Gaussians with close means, many points associated with one mean could be placed arbitrarily closer to another mean. Thus, we are more concerned about the recovery of samples that are not far away from their respective centroids. Hence, we also compute the Rand index for samples that are located within 0.7 standard deviations from their respective means. According to Figure \ref{fig:rand_index_105}, there are 105 such samples and the Rand index peaks at $\lambda = 0.001434$ with a value of $1.0$. It took approximately 21.7 hours total on an AMD Opteron Processor 6376 to complete the experiment.

Our first experiment used multiplicative weights to find clusters for a half-moon dataset with uniformly distributed angles.
Our construction of multiplicative weights recovered only the center portion of
each half-moon.
We hypothesize that it was because the distribution does not have a single peak but rather a distributed peak.  Unit weights were also tested but failed to find clusters. It is an open question whether a
more
successful multiplicative-weight formulation exists for this data. Our second experiment adopted multiplicative weights for a mixture of Gaussians dataset. SON with carefully chosen weights successfully recover a mixture of Gaussians even when the means of Gaussians were close. In contrast, SON with unit weights fail to identify clusters when the means are not well separated  \cite{jiangvavasiszhai}\cite{TanWitten}. Sun et al.\ \cite{dsun1} succeed for both half-moon and a mixture of Gaussians datasets with exponentially decaying weights.


Assigning exponentially decaying weights
implicitly imposes a prior hypothesis that the nearest-neighbor structure corresponds to true clustering, which is certainly the case for the standard half-moon dataset. Chi and Lange \cite{ChiLange} assess the effect of the number of nearest neighbors $k$ and the parameter $\phi$ on SON clustering with numerical experiments on a half-moon dataset of 100 points. Setting $k=10$ and $\phi = 0.5$ yields the best clustering. Choosing $k=50$ and $\phi = 0$ results in a similar clustering pattern to our experiment: clusters only form until late then all points quickly coalesce to one cluster. At any value of $\lambda$, SON clustering could not identify two half moons with high accuracy. When $k=10$ and $\phi=0$, or $k=50$ and $\phi = 0.5$, SON clustering correctly identifies clusters for the easier points but fails to cluster points located at the lower tip of the right moon and the upper tip of the left moon.






\section{Discussion}
We proposed a test to certify all clusters obtained from an approximate solution yielded by any primal-dual type method. If the test reports `success', then the clusters are correctly identified. Moreover, if a primal-dual path following method that maintains close proximity to the central path is used, the test is guaranteed to report `success' after a finite number of iterations at non-fusion values of $\lambda$, where strict complementarity holds. A few natural questions concerning strict complementarity and the test itself are (1) Is there a rigorous test that works when strict complementarity fails? (2) What is the complexity of our clustering test since it depends on the choice of $\lambda$ values? (3) Is the test guaranteed to work for a general primal-dual algorithm? (4) Can one 
certify
clusters 
from a primal-only algorithm?

We have also shown the power of sum-of-norms clustering with multiplicative weights. Recall that \eqref{eq:multweight} is understood as a version of \eqref{eq:son-clustering} in which data points are repeated, possibly fractionally many times. Hence, multiplicative weights
inherit
strong theoretical properties from unit weights and guarantee low computational complexity of ADMM. Numerical experiments have demonstrated the strong recovery power of multiplicative weights. We reweighted nodes based on the true distribution of the data. SON with multiplicative weights successfully recovered the mixture of Gaussians.
When such prior information about data distribution is absent, the weight generation remains an open question.

\bibliographystyle{plain}
\bibliography{references}
\appendix
\section{Constructing an SOCP feasible solution}
\label{sec:SM_A}
In this section, we first derive the Lagrangian dual of \eqref{eq:multweight}.
The dual variables are $\bdelta_{ij}\in \R^d$ for
$1\le i<j\le n$.   Then we show
how to define additional primal and dual variables to come up with
feasible points for
\eqref{eq:mwsocp_primal} and \eqref{eq:mwsocp_dual}
given $(\x,\bdelta)$.
Rewrite the original problem \eqref{eq:multweight}
with constraints as
$$
\begin{array}{ll}
  \min_{\x,\y} & \frac{1}{2}\sum_{i=1}^nr_i\Vert\x_i-\a_i\Vert^2
  + \lambda\sum_{i<j}r_ir_j\Vert \y_{ij}\Vert \\
  \mbox{s.t.} & r_ir_j(\x_i-\x_j-\y_{ij})=\bz \quad \forall i,j \mbox{ s.t. }1\le i < j\le n.
\end{array}
$$
Then we introduce Lagrange multipliers $\bdelta_{ij}\in\R^d$
for all $1\le i<j\le n$
to rewrite the constrained problem in saddle-point form:
$$
\min_{\x,\y}\max_{\bdelta}
\frac{1}{2}\sum_{i=1}^n r_i\Vert \x_i-\a_i\Vert^2 +
\lambda\sum_{1\le i<j\le n}r_ir_j\Vert\y_{ij}\Vert -
\sum_{1\le i<j\le n}r_ir_j\bdelta_{ij}^T(\x_i-\x_j-\y_{ij}).
$$
The Lagrangian dual is therefore
$$\max_{\bdelta}
\min_{\x,\y}
\frac{1}{2}\sum_{i=1}^n r_i\Vert \x_i-\a_i\Vert^2 +
\lambda\sum_{1\le i<j\le n}r_ir_j\Vert\y_{ij}\Vert -
\sum_{1\le i<j\le n}r_ir_j\bdelta_{ij}^T(\x_i-\x_j-\y_{ij}),
$$
which we now proceed to simplify.
Observe that if $\Vert\bdelta_{ij}\Vert>\lambda$ for any $1\le i<j\le n$ then
the inner min problem is unbounded (taking the corresponding
$\y_{ij}=\mu\bdelta_{ij}$ and letting $\mu\rightarrow-\infty$), so
we obtain the hidden constraint $\Vert\bdelta_{ij}\Vert\le \lambda$.
If $\Vert\bdelta_{ij}\Vert<\lambda$, then the optimal choice for $\y_{ij}$
is $\y_{ij}=\bz$.  If $\Vert\bdelta_{ij}\Vert=\lambda$, then any $\y_{ij}$
of the form $\mu\bdelta_{ij}$, $\mu\le 0$, is optimal.  In either case, at
the inner optimizer the terms involving $\y_{ij}$ cancel out, thus
leaving:
$$\max_{\bdelta: \Vert{\bdelta_{ij}}\Vert\le\lambda}
\min_{\x}
\frac{1}{2}\sum_{i=1}^n r_i\Vert \x_i-\a_i\Vert^2
-
\sum_{1\le i<j\le n}r_ir_j\bdelta_{ij}^T(\x_i-\x_j)
$$
This objective is separable in the $\x_i$'s; in particular, it is
rewritten as $S_1+\cdots+S_n+{\rm const}$ where
$$S_i=\frac{r_i}{2}\left[\x_i^T\x_i-2\left(\a_i+\sum_j r_j\bdelta_{\abrack{ij}}\right)^T\x_i\right],$$
where we have used subscript notation introduced in
\eqref{eq:abrack_notation}.  Clearly the optimizing choice of $\x_i$ is the quantity in
parentheses, thus yielding the dual
problem
$$\max_{\bdelta}\{h'(\bdelta):
\Vert\bdelta_{ij}\Vert\le \lambda\>\forall i,j\mbox{ s.t. }1\le i<j \le n\}$$ where
\begin{align*}
 h'(\bdelta) &:=
\frac{1}{2}\sum_{i=1}^n
r_i\left\Vert\sum_{j=1}^nr_j\bdelta_{\abrack{ij}}\right\Vert^2-
\sum_{1\le i<j\le n}r_ir_j\bdelta_{ij}^T\left(
\a_i - \a_j +
\sum_{k=1}^nr_k\bdelta_{\abrack{ik}}-\sum_{k=1}^nr_k\bdelta_{\abrack{jk}}\right)\\
&=
\frac{1}{2}\sum_{i=1}^n
r_i\left\Vert\sum_{j=1}^nr_j\bdelta_{\abrack{ij}}\right\Vert^2-
\frac{1}{2}\sum_{i,j=1}^nr_ir_j\bdelta_{\abrack{ij}}^T\left(
\a_i - \a_j +
\sum_{k=1}^nr_k\bdelta_{\abrack{ik}}-\sum_{k=1}^nr_k\bdelta_{\abrack{jk}}\right),
\end{align*}
which we write as $T_1+T_2+T_3+T_4$ and analyze the terms
separately.

First,
$$T_1=\frac{1}{2}\sum_{i=1}^n
r_i\left\Vert\sum_{j=1}^nr_j\bdelta_{\abrack{ij}}\right\Vert^2.$$
Next,
\begin{align*}
  T_2 &= -\frac{1}{2} \sum_{i,j=1}^nr_ir_j\bdelta_{\abrack{ij}}^T(\a_i-\a_j) \\
  &= -\frac{1}{2}\sum_{i,j=1}^nr_ir_j\bdelta_{\abrack{ij}}^T\a_i+
  \frac{1}{2}\sum_{i,j=1}^nr_ir_j\bdelta_{\abrack{ij}}^T\a_j  \\
  &= \frac{1}{2}\sum_{i,j=1}^nr_ir_j\bdelta_{\abrack{ji}}^T\a_i+
  \frac{1}{2}\sum_{i,j=1}^nr_ir_j\bdelta_{\abrack{ij}}^T\a_j  \\
  &= \frac{1}{2}\sum_{i,j=1}^nr_ir_j\bdelta_{\abrack{ij}}^T\a_j+
  \frac{1}{2}\sum_{i,j=1}^nr_ir_j\bdelta_{\abrack{ij}}^T\a_j  \\
  &=\sum_{i,j=1}^nr_ir_j\bdelta_{\abrack{ij}}^T\a_j.
\end{align*}

Next,
\begin{align*}
  T_3 &=-\frac{1}{2}\sum_{i,j=1}^nr_ir_j
  \bdelta_{\abrack{ij}}^T\sum_{k=1}^nr_k\bdelta_{\abrack{ik}} \\
  &=-\frac{1}{2}\sum_{i=1}^nr_i\left(\sum_{j=1}^nr_j
  \bdelta_{\abrack{ij}}^T\right)
  \left(\sum_{k=1}^nr_k\bdelta_{\abrack{ik}}\right) \\
  &=-\frac{1}{2}\sum_{i=1}^nr_i \left\Vert \sum_{j=1}^nr_j
  \bdelta_{\abrack{ij}}\right\Vert^2 \\
  &=-T_1.
\end{align*}
Thus, $T_1$ and $T_3$ cancel, leaving only $T_2$ and $T_4$.
Finally,
\begin{align*}
  T_4 &=\frac{1}{2}\sum_{i,j=1}^nr_ir_j
  \bdelta_{\abrack{ij}}^T\sum_{k=1}^nr_k\bdelta_{\abrack{jk}} \\
  &=\frac{1}{2}\sum_{j=1}^nr_j
  \left(\sum_{i=1}^nr_i\bdelta_{\abrack{ij}}\right)
  \left(\sum_{i=1}^nr_k\bdelta_{\abrack{jk}}\right) \\
  &=-\frac{1}{2}\sum_{j=1}^nr_j
  \left(\sum_{i=1}^nr_i\bdelta_{\abrack{ij}}\right)
  \left(\sum_{i=1}^nr_k\bdelta_{\abrack{kj}}\right) \\
  &=-\frac{1}{2}\sum_{j=1}^nr_j
  \left\Vert\sum_{i=1}^n r_i\bdelta_{\abrack{ij}}\right\Vert^2.
\end{align*}
Thus,
\begin{align*}
  h'(\bdelta) &= T_4+T_2 \\
  &=
  -\frac{1}{2}\sum_{j=1}^nr_j
  \left\Vert\sum_{i=1}^n r_i\bdelta_{\abrack{ij}}\right\Vert^2
  +\sum_{i,j=1}^nr_ir_j\bdelta_{\abrack{ij}}^T\a_j
\end{align*}
arriving
at the following Lagrangian
dual of \eqref{eq:multweight}:
\begin{subequations}
\label{eq:son_dual}
\begin{align}
  \underset{\bdelta}{\text{max}}
       & \quad h'(\bdelta) =  -\frac{1}{2}\sum_{j=1}^nr_j
  \left\Vert\sum_{i=1}^n r_i\bdelta_{\abrack{ij}}\right\Vert^2
  +\sum_{i,j=1}^nr_ir_j\bdelta_{\abrack{ij}}^T\a_j\label{eq:d_obj_admm} \\
   \text{s.t}
       & \quad \norm{\bdelta_{ij}} \le \lambda \;,\quad \forall 1 \le i<j \le n \;. \label{eq:d_constr1_admm}
\end{align}
\end{subequations}


Let $(\x, \bdelta)$ denote the output yielded by the primal-dual algorithm solving for
\eqref{eq:multweight}. To construct a feasible solution from $(\x, \bdelta)$, we first update $\bdelta$ as follows
\[
\bdelta_{ij} \leftarrow
\left\{
\begin{array}{ll}
  \frac{\lambda \bdelta_{ij}}{\norm{\bdelta_{ij}}}, \quad & \mbox{if } \norm{\bdelta_{ij}} > \lambda, \\
  \bdelta_{ij}, & \mbox{otherwise}.
\end{array}
\right.
\]
The updated $\bdelta_{ij}$ has norm no more than $\lambda$. Notice that the perturbation is small provided that the dual solution was already close to the feasible set.

 Next, define the following variables:
\begin{equation}
\begin{aligned}
    \y_{ij} = \x_i - \x_j, \quad &\forall 1 \le i < j \le n,\\
    \z_{i} = \x_i - \a_i, \quad &\forall i = 1, \dots, n,\\
    s_i = \frac{1}{2} (1 + \|\z_{i}\|^2), \quad &\forall i = 1, \dots, n,\\
    u_i = \frac{1}{2} (-1 + \|\z_{i}\|^2), \quad &\forall i= 1, \dots, n,\\
    t_{ij} = \|\y_{ij}\|, \quad &\forall 1 \le i < j \le n,\\
    \bbeta_i = \sum_{j=1}^nr_i\bdelta_{\abrack{ij}}, \quad &\forall i = 1, \dots, n,\\
    \gamma_i = \frac{1}{2}(1 - \|\bbeta_i\|^2), \quad &\forall i = 1, \dots, n.
\end{aligned}
\label{eq:updated_feasible_sol}
\end{equation}
It can be easily verified that these newly defined variables \eqref{eq:updated_feasible_sol} are feasible for \eqref{eq:mwsocp_primal} and \eqref{eq:mwsocp_dual}.
Furthermore, one checks that the objectives for
\eqref{eq:mwsocp_primal} and \eqref{eq:mwsocp_dual} are exactly the constant factor
$\frac{1}{2}\sum_{i=1}^nr_i$ larger than $f'(\x)$, $h'(\bdelta)$ respectively.
Therefore, the duality gap and hence the nearness to optimality is preserved.

%

\section{Finding a bound on residuals of complementary slackness}
\label{sec:SM_B_mw}
At the optimizer, there hold $\bepsilon = \bz, \bsigma = \bz$ by KKT conditions. The system of equalities above becomes the complementary slackness condition. At an approximate solution, the right-hand sides $\bepsilon, \bsigma$ are non-zero. If $\bepsilon^{ij} = \begin{pmatrix}
\mu'\\
\bz
\end{pmatrix}, \bsigma^i = \begin{pmatrix}
\mu'\\
\bz\\
0
\end{pmatrix}$ for all $i = 1, \dots, n$ and for all $1 \le i < j \le n$, we refer the corresponding solution as a $\mu'$-centered solution. Otherwise, an upper bound on general right-hand sides $\bepsilon, \bsigma$ can be derived from the duality gap:

\begin{align*}
& f'(\x) - h'(\bdelta)\\
= &f(\x, \y, \z, s, u, t) - h(\bdelta, \bbeta, \gamma)\\
= &\sum_{i=1}^n r_i s_i - \sum_{i=1}^n r_i \gamma_i + \sum_{i=1}^n r_i \langle \x_i - \a_i, \bbeta_i \rangle + \lambda \sum_{1 \le i < j \le n} r_i r_j t_{ij} - \sum_{i=1}^n r_i \langle \x_i, \bbeta_i \rangle \\
&\text{(By adding and subtracting} \sum_{i=1}^n  r_i \langle \x_i, \bbeta_i \rangle)\\
= &\sum_{i=1}^n r_i s_i - \sum_{i=1}^n r_i \gamma_i + \sum_{i=1}^n r_i \langle \x_i - \a_i, \bbeta_i \rangle + \lambda \sum_{1 \le i < j \le n} r_i r_j t_{ij} - \sum_{i=1}^n  r_i \left\langle \x_i, \sum_{j=1}^n r_i \bdelta_{\abrack{ij}} \right\rangle \\
& \text{(By \eqref{eq:mwd_constr1})}\\
= &\sum_{i=1}^n r_i s_i - \sum_{i=1}^n r_i \gamma_i + \sum_{i=1}^n r_i \langle \x_i - \a_i, \bbeta_i \rangle + \lambda \sum_{1 \le i < j \le n} r_i r_j t_{ij} - \sum_{1 \le i < j \le n} r_i r_j \langle \x_j - \x_i, \bdelta_{ij} \rangle\\
&\text{(By expanding the summation)}\\
= & \sum_{i=1}^n r_i (s_i - \gamma_i + \langle \x_i - \a_i, \bbeta_i \rangle) + \sum_{1 \le i < j \le n} r_i r_j ( \lambda t_{ij} + \langle \y_{ij}, \bdelta_{ij} \rangle) \quad \text{(By \eqref{eq:mwp_constr1})}\\
= & \sum_{i=1}^n r_i (s_i (1 - \gamma_i) + \langle \z_i, \bbeta_i \rangle + u_i \gamma_i) + \sum_{1 \le i < j \le n} r_i r_j ( \lambda t_{ij} + \langle \y_{ij}, \bdelta_{ij} \rangle) \quad \text{(By \eqref{eq:mwp_constr2}, \eqref{eq:mwp_constr3})}\\
=& \sum_{i=1}^n r_i \sigma_1^i + \sum_{1 \le i < j \le n} r_i r_j \epsilon_1^{ij}
\end{align*}
Each term in the both summations is non-negative as shown below:
\[
\sigma_1^i = s_i (1 - \gamma_i) + \langle \z_i, \bbeta_i \rangle + u_i \gamma_i = \frac{1}{2} (\norm{\z_i}^2 + 2 \langle \z_i, \bbeta_i \rangle + \norm{\bbeta_i}^2) \ge 0, \quad \forall i = 1, \dots, n,
\]
\[
\epsilon_1^{ij} = \lambda t_{ij} + \langle \y_{ij}, \bdelta_{ij} \rangle \ge \lambda t_{ij} - \norm{\y_{ij}} \norm{\bdelta_{ij}} \ge \lambda t_{ij} - \lambda \norm{\y_{ij}} \ge 0, \quad \forall 1 \le i < j \le n.
\]
Define $\mu:= f'(\x) - h'(\bdelta)$ to be the duality gap at the feasible solution. Combined with the non-negativity condition, $\sigma_1^i, \epsilon_1^{ij}$ satisfy $\sigma_1^i \le \mu$ for all $i = 1, \dots, n$ and $\epsilon_1^{ij} \le \mu$ for all $1\le i<j \le n$. At termination, the duality gap $\mu$ at the feasible solution is small, which implies the right-hand sides $\sigma_1^i, \epsilon_1^{ij}$ are also well bounded.

We now have $\sigma_1^i, \epsilon_1^{ij}$ upper bounded in terms of $\mu$, and the remainder of the section is to establish upper bounds on $\norm{\bepsilon^{ij}_2}$ and $\norm{\begin{pmatrix}
\bsigma_2^i\\
\sigma_3^i
\end{pmatrix}}$. In fact, in \eqref{eq:mw_bound_epsilon2S} and \eqref{eq:mw_bound_sigma23S} below, we show that both are upper bounded by $O(\sqrt{\mu})$. Consider a general setting of second-order cone programming.

\begin{lemma}
Let $(\x, \z)$ denote a primal and dual feasible solution for a second-order cone program where $\x = \begin{pmatrix}
x_0\\
\bar \x
\end{pmatrix}, \z = \begin{pmatrix}
z_0\\
\bar \z
\end{pmatrix}$. If $\x^T \z \le \mu$, then $\norm{z_0 \bar \x + x_0 \bar \z} \le \sqrt{2x_0 z_0 \mu}$.
\label{lemma:O(sqrt(mu))}
\end{lemma}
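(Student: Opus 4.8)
The plan is to expand the squared Euclidean norm $\norm{z_0\bar\x + x_0\bar\z}^2$ and control each of the three resulting terms using the two cone-membership constraints supplied by feasibility, namely $x_0\ge\norm{\bar\x}$ and $z_0\ge\norm{\bar\z}$ (in particular $x_0,z_0\ge0$), together with the hypothesis $\x^T\z\le\mu$. First I would write
\[
\norm{z_0\bar\x+x_0\bar\z}^2 = z_0^2\norm{\bar\x}^2 + 2x_0z_0\,\bar\x^T\bar\z + x_0^2\norm{\bar\z}^2 .
\]

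Next I would bound the two outer terms: since $\norm{\bar\x}\le x_0$ and $z_0\ge0$ we get $z_0^2\norm{\bar\x}^2\le x_0^2z_0^2$, and symmetrically $x_0^2\norm{\bar\z}^2\le x_0^2z_0^2$. For the cross term I would substitute $\bar\x^T\bar\z = \x^T\z - x_0z_0 \le \mu - x_0z_0$; because the multiplier $2x_0z_0$ is nonnegative, multiplying preserves the inequality and yields $2x_0z_0\,\bar\x^T\bar\z \le 2x_0z_0(\mu - x_0z_0)$.

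Finally I would add the three bounds. The two copies of $x_0^2z_0^2$ together with the $-2x_0^2z_0^2$ produced by the cross-term bound cancel exactly, leaving $\norm{z_0\bar\x+x_0\bar\z}^2 \le 2x_0z_0\mu$; taking square roots then gives the claim, the right-hand side being well defined since $x_0z_0\ge0$.

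There is essentially no hard step here: the argument is a one-line completion-of-squares once the cone constraints are invoked. The only point requiring care is the sign bookkeeping, i.e.\ verifying that $x_0,z_0\ge0$ so that multiplying the cross-term inequality by $2x_0z_0$ does not reverse it. This nonnegativity is exactly what membership of $\x$ and $\z$ in the second-order cone provides, so the lemma follows with no further assumptions.
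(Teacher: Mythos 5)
Your proof is correct and is essentially the paper's argument in homogeneous form: both expand a squared norm and combine $\norm{\bar \x}\le x_0$, $\norm{\bar \z}\le z_0$ with the duality-gap bound on the cross term $\bar \x^T\bar \z$. The only difference is that the paper divides through by $x_0z_0$ (working with $\norm{\frac{\bar \x}{x_0}+\frac{\bar \z}{z_0}}^2$) and therefore must treat the degenerate cases $x_0=0$ and $z_0=0$ separately, whereas your multiplied form covers them uniformly---a small tidying, not a genuinely different route.
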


\begin{proof}
If $x_0 = 0$, then $\norm{\bar \x} \le x_0 = 0$ by feasibility assumption. Hence, $\bar \x = \bz$, which implies $\norm{z_0 \bar \x + x_0 \bar \z} = 0 \le \sqrt{2x_0 z_0 \mu}$. Similarly, if $z_0 = 0$, then $\z$ satisfies $\norm{z_0 \bar \x + x_0 \bar \z} = 0 \le \sqrt{2x_0 z_0 \mu}$ by the same argument.

Otherwise, $x_0 > 0, z_0 > 0$, and we derive the following inequalities
\begin{align*}
    \x^T \z &= x_0 z_0 + \bar \x^T \bar \z \le \mu\\
    \Rightarrow 1 + \frac{\bar \x^T}{x_0} \frac{\bar \z}{z_0} &\le \frac{\mu}{x_0 z_0} \quad \text{(Since $x_0 > 0, z_0 > 0$)}\\
    \Rightarrow \norm{\frac{\bar \x}{x_0} + \frac{\bar \z}{z_0}}^2 = \norm{\frac{\bar \x}{x_0}}^2 + \norm{\frac{\bar \z}{z_0}}^2 + 2 \frac{\bar \x^T}{x_0} \frac{\bar \z}{z_0} &\le 2 - 2 + \frac{2\mu}{x_0 z_0} \quad \text{(Since $x_0 \ge \norm{\bar \x}, z_0 \ge \norm{\bar \z}$)}\\
    \Rightarrow \norm{\frac{\bar \x}{x_0} + \frac{\bar \z}{z_0}} &\le \sqrt{\frac{2\mu}{x_0 z_0}}\\
    \Rightarrow \norm{z_0 \bar \x + x_0 \bar \z} &\le \sqrt{2x_0 z_0 \mu}.
\end{align*}
\end{proof}

Let $r':= \sum_{l=1}^n r_l$ and $\bar \a := \frac{1}{r'} \sum_{i=1}^n r_i \a_i$ denote the weighted centroid of all data points. Let $\x_1' := \x_2' := ... := \x_n' := \bar \a$. Then the primal objective value of the original sum-of-norms formulation at $\x'$ is
\[
f'(\x') = \frac{1}{2} \sum_{l=1}^n r_l \norm{\bar \a - \a_l}^2.
\]
Let $\bdelta_{ij}' = \bz$ for all $1 \le i < j \le n$. Then $\bdelta'$ is a feasible solution to the dual problem of the original formulation and the dual objective value at $\bdelta'$ is
\[
h'(\bdelta') = 0.
\]
Let $f^*$ and $h^*$ denote the primal and dual optimal values of the SOCP respectively, which must satisfy the following inequality by strong duality:
\[
\frac{r'}{2} = h'(\bdelta') + \frac{r'}{2} \le h^* = f^* \le f'(\x') + \frac{r'}{2} = \frac{1}{2} \sum_{l=1}^n r_l \norm{\bar \a - \a_l}^2 + \frac{r'}{2}.
\]
At the feasible solution $(\x, \y, \z, s, u, t, \bdelta, \bbeta, \gamma)$, the objective value is at a distance of at most $\mu$ away from the optimal value, which implies
\[
\sum_{i=1}^n r_i s_i + \lambda \sum_{1\le i<j \le n}r_i r_j t_{ij} \le f^* + \mu \le \frac{1}{2} \sum_{l=1}^n r_l \norm{\bar \a - \a_l}^2 + \frac{r'}{2} + \mu,
\]
which is rearranged to
\[
\sum_{i=1}^n r_i \left (s_i - \frac{1}{2} \right) + \lambda \sum_{1\le i<j \le n} r_i r_j t_{ij} \le \frac{1}{2} \sum_{l=1}^n r_l \norm{\bar \a - \a_l}^2 + \mu.
\]
Moreover, by feasibility, $s_i \ge \frac{1}{2}$ holds for all $i = 1, \dots, n$ and $t_{ij} \ge 0$ holds for all $1 \le i < j \le n$. Hence,
\[
s_i \le \frac{1}{r_i} \left (\frac{1}{2}\sum_{l=1}^n r_l \norm{\bar \a - \a_l}^2 + \mu \right)+ \frac{1}{2} , \quad t_{ij} \le \frac{1}{\lambda r_i r_j} \left (\frac{1}{2} \sum_{l=1}^n r_l \norm{\bar \a - \a_l}^2 + \mu \right).
\]
As $t_{ij} \lambda + \y_{ij}^T \bdelta_{ij} = \epsilon_1^{ij} \le \mu$, $\norm{\bepsilon_2^{ij}}$ has the following upper bound by Lemma \ref{lemma:O(sqrt(mu))}
\begin{equation}
    \norm{\bepsilon_2^{ij}} = \norm{t_{ij} \bdelta_{ij} + \lambda \y_{ij}}
\le \sqrt{2 t_{ij} \lambda \mu} \le \sqrt{\frac{1}{r_i r_j} \left (\sum_{l=1}^n r_l \norm{\bar \a - \a_l}^2 \mu+ 2\mu^2 \right)}.
\label{eq:mw_bound_epsilon2S}
\end{equation}
Similarly, at the feasible solution, the dual objective value is at a distance of at most $\mu$ away from the optimal value, which implies
\[
\sum_{i=1}^n r_i \a_i^T \bbeta_i + \sum_{i=1}^n r_i \gamma_i \ge h^* - \mu \ge \frac{r'}{2}- \mu,
\]
which is rearranged to
\[
 \sum_{i=1}^n r_i \left (\frac{1}{2} - \gamma_i \right) \le \sum_{i=1}^n r_i \a_i^T \bbeta_i + \mu.
\]
By feasibility, $\frac{1}{2} - \gamma_i \ge 0$, which implies
\[
1 - \gamma_i \le \frac{1}{2} + \frac{1}{r_i} \left (\sum_{l=1}^n r_l \a_l^T \bbeta_l + \mu \right ).
\]
Since $\lambda \ge \norm{\bdelta_{ij}}$, $\norm{\bbeta_i}$ satisfies
\[
\norm{\bbeta_i} = \norm{\sum_{j \ne i} r_j \bdelta_{\abrack{ik}}} \le (r' - r_i) \lambda.
\]
By Cauchy-Schwartz inequality,
\[
\a_i^T \bbeta_i \le \norm{\a_i} \cdot \norm{\bbeta_i} \le (r' - r_i) \lambda \norm{\a_i}.
\]
Therefore, $1 - \gamma_i$ satisfies
\[
1 - \gamma_i \le \frac{1}{2} + \frac{1}{r_i} \left (\sum_{l=1}^n r_l (r' - r_i) \lambda \norm{\a_l} + \mu \right ).
\]
Since $s_{i} (1 - \gamma_i) + \z_{i}^T \bbeta_{i} + u_i \gamma_i = \sigma_1^{i}$,
$\norm{\begin{pmatrix}
\bsigma_2^i\\
\sigma_3^i
\end{pmatrix}}$ has the following upper bound by Lemma \ref{lemma:O(sqrt(mu))}
\begin{equation}
\begin{aligned}
  &  \norm{\begin{pmatrix}
\bsigma_2^i\\
\sigma_3^i
\end{pmatrix}} \\
=& \norm{\begin{pmatrix}
s_{i} \bbeta_{i} + (1 - \gamma_{i}) \z_{i}\\
s_{i} \gamma_i + (1 - \gamma_{i}) u_i
\end{pmatrix}}\\
\le& \sqrt{2 s_i (1 - \gamma_i) \mu} \\
=& \sqrt{2 \cdot \left (\frac{1}{r_i} \left (\frac{1}{2}\sum_{l=1}^n r_l \norm{\bar \a - \a_l}^2 + \mu \right)+ \frac{1}{2} \right ) \cdot \left (\frac{1}{2} + \frac{1}{r_i} \left (\sum_{l=1}^n r_l (r' - r_i) \lambda \norm{\a_l} + \mu \right ) \right ) \cdot \mu}\\
=& \sqrt{\left (\frac{1}{r_i}\sum_{l=1}^n r_l \norm{\bar \a - \a_l}^2 + \frac{2 \mu}{r_i} + 1 \right) \cdot \left (\frac{1}{2} + \frac{1}{r_i} \left (\sum_{l=1}^n r_l (r' - r_i) \lambda \norm{\a_l} + \mu \right ) \right ) \cdot \mu}.
\end{aligned}
\label{eq:mw_bound_sigma23S}
\end{equation}

\section{Proof of Lemma \ref{cluster_lemma_subgradient}}
\label{sec:SM_2}

We restate Lemma \ref{cluster_lemma_subgradient} as follows:
\begin{lemma}
For all $i, j \in C, i < j$,
$\q_{ij}$
as defined in \eqref{eq:q_def} satisfies
\begin{align}
  \a_i-\frac{1}{r'} \sum_{l\in C}r_l \a_l &= \sum_{j\in C} r_j \q_{\abrack{ij}}, \quad \forall i \in C
\end{align}
where $r' = \sum_{i \in C} r_i$.
\end{lemma}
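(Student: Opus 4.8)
The plan is to verify the identity by direct substitution, with the crucial preliminary step being a simplification of the auxiliary vector $\bomega_i$. Substituting the complementarity residuals \eqref{eq:mwcs_b2}--\eqref{eq:mwcs_b3}, namely $\bsigma_2^i = s_i\bbeta_i + (1-\gamma_i)\z_i$ and $\sigma_3^i = s_i\gamma_i + (1-\gamma_i)u_i$, into the definition \eqref{eq:subgrad_omega_def}, and using the feasibility relation $s_i = u_i + 1$ from \eqref{eq:mwp_constr3} (the division is legitimate since feasibility forces $s_i\ge \tfrac12>0$), I expect the expression to collapse to
\[
\bomega_i = \gamma_i\z_i + \frac{(1-\gamma_i)(u_i+1)}{s_i}\z_i + \bbeta_i = \z_i + \bbeta_i.
\]
Invoking $\z_i = \x_i - \a_i$ from \eqref{eq:mwp_constr2} and $\bbeta_i = -\sum_{m=1}^n r_m \bdelta_{\abrack{im}}$ from \eqref{eq:mwd_constr1} then yields the clean form $\bomega_i = \x_i - \a_i - \sum_{m=1}^n r_m \bdelta_{\abrack{im}}$, which is what I will carry into the main computation.

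Next I would observe that the formula \eqref{eq:q_def} for $\q_{ij}$ extends consistently to $\q_{\abrack{ij}}$ for all $i,j\in C$, not merely $i<j$. Since $\bdelta_{\abrack{ij}}$ is antisymmetric and both bracketed pieces $\x_i-\x_j-\bomega_i+\bomega_j$ and $\sum_{k\notin C} r_k(\bdelta_{\abrack{ik}}-\bdelta_{\abrack{jk}})$ are antisymmetric under swapping $i,j$, the formula respects the same sign convention as \eqref{eq:abrack_notation} and returns $\bz$ when $i=j$. This lets me write a single expression for $r_j\q_{\abrack{ij}}$ valid over the whole index range of the sum.

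With these reductions in hand, the heart of the argument is to expand $\sum_{j\in C} r_j\q_{\abrack{ij}}$ into three groups: the $-\bdelta_{\abrack{ij}}$ terms, the $\frac{1}{r'}(\x_i-\x_j-\bomega_i+\bomega_j)$ terms, and the $-\frac{1}{r'}\sum_{k\notin C}r_k(\cdots)$ terms. After substituting the simplified $\bomega$ into the middle group, the $\x$-contributions cancel and the target $\a_i - \frac{1}{r'}\sum_{l\in C}r_l\a_l$ emerges directly. The remaining task is to show all surviving $\bdelta$ terms vanish: I would split each ambient sum as $\sum_{m=1}^n = \sum_{m\in C}+\sum_{m\notin C}$, so that the single-index ($i$-only) $\bdelta$ contributions telescope to zero, while the doubly-summed contributions reduce to $-\frac{1}{r'}\sum_{j\in C}\sum_{m\in C} r_j r_m \bdelta_{\abrack{jm}}$, which is zero by the antisymmetry $\bdelta_{\abrack{jm}} = -\bdelta_{\abrack{mj}}$.

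The main obstacle I anticipate is purely the bookkeeping: keeping the $C$-versus-complement splitting of the summation indices straight and tracking the $\abrack{\cdot}$ sign conventions so that the ``inside'' and ``outside'' $\bdelta$ contributions pair up and cancel correctly. Once the reduction $\bomega_i = \z_i + \bbeta_i$ is established, the cancellations are essentially forced and there is no genuine analytic difficulty remaining.
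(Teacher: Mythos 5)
Your proposal is correct and follows essentially the same route as the paper: the paper likewise first reduces $\bomega_i$ via \eqref{eq:mwcs_b2}--\eqref{eq:mwcs_b3} and feasibility to $\bbeta_i = -\z_i + \bomega_i$, i.e.\ $\bomega_i = \x_i - \a_i - \sum_{m=1}^n r_m\bdelta_{\abrack{im}}$, and then obtains the identity by the same sum-splitting over $C$ versus its complement together with antisymmetry of $\bdelta_{\abrack{\cdot\cdot}}$ and $\q_{\abrack{\cdot\cdot}}$. The only difference is direction of the algebra (the paper averages the identity over $C$ and subtracts, whereas you expand $\sum_{j\in C} r_j\q_{\abrack{ij}}$ directly), which is the same computation read in reverse.
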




\begin{proof}
Substitute the primal constraint \eqref{eq:mwp_constr3} into the perturbed complementary slackness \eqref{eq:mwcs_b3} to obtain the following equality of $\gamma_i$ and $s_i$
\[
1 - \gamma_i = s_i - \sigma_3^i, \quad \forall i = 1, \dots, n.
\]
Substitute the equality above into \eqref{eq:mwcs_b2} and divide both sides by $s_i$ to obtain the following equation of $\bbeta_i$ in terms of $\z_i$
\begin{equation}
  \bbeta_{i} = - \z_i + \bomega_i, \quad \forall i = 1, \dots, n.
  \label{eq:mwopt_condp1}
\end{equation}
Notice that the operation is valid because $s_i \ge \frac{1}{2}$ by the primal constraint \eqref{eq:mwp_constr3} and \eqref{eq:mwp_constr5}. Substitute the primal constraint \eqref{eq:mwp_constr2} and the equality above into the dual constraint \eqref{eq:mwd_constr1} yielding the following equality
with the definition of $\bdelta_{\abrack{ij}}$, the equality \eqref{eq:mwopt_condp1} is rewritten as
\begin{equation}
- \x_i + \a_i + \bomega_i + \sum_{j=1}^nr_j \bdelta_{\abrack{ij}} = \bz, \quad \forall i = 1, \dots, n.
\label{eq:mwopt_condp2}
\end{equation}
Multiply \eqref{eq:mwopt_condp2} by $r_i$ and sum them over all $i \in C$ and divide the new equality by
$r'$ to obtain
\begin{equation}
    - \frac{1}{r'} \sum_{i \in C} r_i \x_i + \frac{1}{r'}\sum_{i\in C} r_i \a_i + \frac{1}{r'} \sum_{i \in C} r_i \bomega_i  + \frac{1}{r'} \sum_{i \in C} \sum_{k \notin C} r_i r_k \bdelta_{\abrack{ik}} = \bz.
    \label{eq:mwopt_condp3}
\end{equation}
Change the index in \eqref{eq:mwopt_condp3} from $i$ to $j$. Subtract \eqref{eq:mwopt_condp3} from \eqref{eq:mwopt_condp2} to obtain
\begin{align*}
  &- \x_i + \frac{1}{r'} \sum_{j \in C} r_j \x_j + \a_i - \frac{1}{r'}\sum_{l\in C} r_l \a_l + \bomega_i - \frac{1}{r'}  \sum_{j \in C} r_j \bomega_j\\
  &+ \sum_{j \in C} r_j \bdelta_{\abrack{ij}} + \sum_{k \notin C} r_k \bdelta_{\abrack{ik}} - \frac{1}{r'} \sum_{k \notin C} r_k  \sum_{j \in C} r_j \bdelta_{\abrack{jk}} = \bz, \quad \forall i \in C,
\end{align*}
which is rearranged to
\begin{align*}
     &\a_i - \frac{1}{r'}\sum_{l\in C} r_l \a_l \\
     =& \x_i - \frac{1}{r'} \sum_{j \in C} r_j \x_j - \bomega_i + \frac{1}{r'} \sum_{j \in C} r_j \bomega_j - \sum_{j \in C} r_j \bdelta_{\abrack{ij}} - \frac{1}{r'} \sum_{j \in C} \sum_{k \notin C} r_k r_j (\bdelta_{\abrack{ik}} - \bdelta_{\abrack{jk}})\\
     =& \sum_{j \in C } \left [\frac{r_j}{r'} (\x_i - \x_j - \bomega_i + \bomega_j) - r_j \bdelta_{\abrack{ij}} - \frac{r_j}{r'} \sum_{k \notin C}  r_k (\bdelta_{\abrack{ik}} - \bdelta_{\abrack{jk}})\right]\\
     &= \sum_{j \in C} r_j \q_{\abrack{ij}} \quad (\text{By definition}), \quad \forall i \in C. \label{eq:opt_cond5}
\end{align*}
Moreover, by the definition of $\q_{ij}$, we observe the following property for all $i, j \in C, i \ne j$
\begin{align*}
    \q_{\abrack{ij}} &= -\bdelta_{\abrack{ij}} + \frac{1}{r'} \cdot (\x_i - \x_j - \bomega_i + \bomega_j) - \frac{1}{r'} \sum_{k \notin C} r_k (\bdelta_{\abrack{ik}} - \bdelta_{\abrack{jk}})\\
    &= \bdelta_{\abrack{ji}} - \frac{1}{r'} \cdot (\x_j - \x_i - \bomega_j + \bomega_i) + \frac{1}{r'} \sum_{k \notin C} r_k (\bdelta_{\abrack{jk}} - \bdelta_{\abrack{ik}}) \\
    &= - \q_{\abrack{ji}}
\end{align*}
\end{proof}

\section{Proof of Lemma \ref{lemma_optimality_socp}}
\label{SM_three_lemmas}

We restate Lemma \ref{lemma_optimality_socp} as follows:
\begin{lemma}
The solution defined by \eqref{eq:feasible_sol} is optimal for SOCP \eqref{eq:mwsocp_primal} and \eqref{eq:mwsocp_dual} at $\lambda$.
\end{lemma}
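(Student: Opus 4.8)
The plan is to establish optimality by the standard route for a convex conic program satisfying Slater's condition (which holds for both \eqref{eq:mwsocp_primal} and \eqref{eq:mwsocp_dual}, as noted after the dual is stated): verify that the candidate \eqref{eq:feasible_sol} is primal feasible, dual feasible, and that the pair obeys complementary slackness (equivalently, has zero duality gap). These three facts together certify optimality. Primal feasibility is the routine part. The equality constraints \eqref{eq:mwp_constr1}--\eqref{eq:mwp_constr3} hold by the very definitions of $\y_{ij}^*,\z_i^*$ and of $s_i^*,u_i^*$ (note $s_i^*-u_i^*=1$), while the conic constraints \eqref{eq:mwp_constr4} and \eqref{eq:mwp_constr5} hold with equality: $t_{ij}^*=\norm{\y_{ij}^*}$, and from $s_i^*=\tfrac12(1+\norm{\z_i^*}^2)$, $u_i^*=\tfrac12(-1+\norm{\z_i^*}^2)$ one gets $s_i^{*2}-u_i^{*2}=\norm{\z_i^*}^2$, placing $(s_i^*;\z_i^*,u_i^*)$ on the boundary of the cone.

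The crux is dual feasibility, specifically the linear constraint \eqref{eq:mwd_constr1}, which requires $\sum_{j=1}^n r_j\bdelta_{\abrack{ij}}^*=-\bbeta_i^*=\z_i^*=\x_k-\a_i$ for each $i\in C_k$. I would split the sum into the within-cluster part ($j\in C_k$) and the between-cluster part. For the within-cluster part $\bdelta_{\abrack{ij}}^*=\bdelta_{\abrack{ij}}'$, and relation \eqref{eq:son-clustering_chiquet} inherited from the $\lambda_1$-optimal solution gives $\sum_{j\in C_k}r_j\bdelta_{\abrack{ij}}'=\bar\a_k-\a_i$. For the between-cluster part, for $i\in C_k$ and $j\in C_{k'}$ with $k'\ne k$ the off-diagonal $\bdelta_{\abrack{ij}}^*$ equals $\lambda(\x_{k'}-\x_k)/\norm{\x_k-\x_{k'}}$, a vector common to all $j\in C_{k'}$; summing over $j\in C_{k'}$ and then over $k'\ne k$ and invoking the reduced first-order condition \eqref{eq:son-clustering_multweight_oc} yields $\x_k-\bar\a_k$. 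Adding the two pieces produces exactly $\x_k-\a_i$, as needed. The two norm constraints are then immediate: within clusters $\norm{\bdelta_{ij}'}\le\lambda_1<\lambda$ and between clusters $\norm{\bdelta_{ij}^*}=\lambda$, giving \eqref{eq:mwd_constr2}; and \eqref{eq:mwd_constr3} holds with equality by the same cone-boundary algebra used for \eqref{eq:mwp_constr5}, since $\gamma_i^*=\tfrac12(1-\norm{\bbeta_i^*}^2)$ forces $1-\gamma_i^*=\norm{(\bbeta_i^*;\gamma_i^*)}\ge 0$.

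Finally I would verify complementary slackness by showing every residual in \eqref{eq:mwcs_a1}--\eqref{eq:mwcs_b3} vanishes. For the norm-cone pair, within a cluster $\y_{ij}^*=\bz$ and $t_{ij}^*=0$ kill \eqref{eq:mwcs_a1}--\eqref{eq:mwcs_a2}, while between clusters the collinearity $\bdelta_{ij}^*=-\lambda\y_{ij}^*/\norm{\y_{ij}^*}$ makes $\epsilon_1^{ij}$ and $\bepsilon_2^{ij}$ vanish by direct substitution. For the $s$-cone pair, the identity $\bbeta_i^*=-\z_i^*$ together with $1-\gamma_i^*=s_i^*$ makes $\bsigma_2^i=\bz$, and the same algebraic identities collapse $\sigma_1^i$ and $\sigma_3^i$ to $0$. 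Since the duality gap equals $\sum_i r_i\sigma_1^i+\sum_{i<j}r_ir_j\epsilon_1^{ij}$ (the expansion derived in Section~\ref{sec:SM_B_mw}), it is zero, so both points are optimal. I expect the dual linear constraint \eqref{eq:mwd_constr1} to be the main obstacle, since it is the one place where the two distinct optimality conditions---the within-cluster certificate from the $\lambda_1$ solution and the between-cluster stationarity from the reduced problem \eqref{eq:son-clustering_multweight}---must be combined, and it demands care with the $\abrack{\cdot}$ sign convention and with the normalization of the off-diagonal $\bdelta_{ij}^*$ (whose formula in \eqref{eq:feasible_sol} should be read as $\lambda(\x_j^*-\x_i^*)/\norm{\x_i^*-\x_j^*}$).
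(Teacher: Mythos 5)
Your proposal is correct and takes essentially the same route as the paper's proof: every condition other than the dual linear constraint \eqref{eq:mwd_constr1} holds by construction (your cone-boundary algebra and residual checks simply make explicit what the paper asserts in one line), and the crux --- verifying \eqref{eq:mwd_constr1} --- is handled exactly as the paper does, by splitting the sum into the within-cluster part resolved via \eqref{eq:son-clustering_chiquet} and the between-cluster part resolved via \eqref{eq:son-clustering_multweight_oc}. Your reading of the typo in the formula for $\bdelta_{ij}^*$ in \eqref{eq:feasible_sol}, namely $\lambda(\x_j^*-\x_i^*)/\norm{\x_i^*-\x_j^*}$, is also the intended one.
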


\begin{proof}
By construction, the primal constraints \eqref{eq:mwp_constr1}, \eqref{eq:mwp_constr2}, \eqref{eq:mwp_constr3}, \eqref{eq:mwp_constr4}, \eqref{eq:mwp_constr5}, the dual constraints \eqref{eq:mwd_constr2}, \eqref{eq:mwd_constr3}, and the complementary slackness conditions \eqref{eq:mwcs_a1}, \eqref{eq:mwcs_a2}, \eqref{eq:mwcs_b1}, \eqref{eq:mwcs_b2} and \eqref{eq:mwcs_b3} with $\bepsilon = \bz, \bsigma = \bz$ are automatically satisfied. It remains to check if the solution satisfies \eqref{eq:mwd_constr1}.\\

\textbf{Verification for \eqref{eq:mwd_constr1}:} For any $i \in C_k$ with some $k \in [K]$, \eqref{eq:mwd_constr1} is rewritten as follows due to \eqref{eq:son-clustering_chiquet} and \eqref{eq:son-clustering_multweight_oc}
\begin{align*}
    &\sum_{j =1}^n r_j \bdelta^*_{\abrack{ij}}+\bbeta_i^* \\
    = &\sum_{j \in C_k}r_j \bdelta'_{\abrack{ij}} + \lambda \sum_{k \ne k'}r_{k'}' \frac{\x_{k'} - \x_k}{\|\x_{k} - \x_{k'}\|}+\a_i-\x_i^*\\
    =& \bar \a_k - \a_i + \lambda \sum_{k \ne k'}r_{k'}' \frac{\x_{k'} - \x_k}{\|\x_{k} - \x_{k'}\|}+\a_i-\x_k\\
    =& \bar \a_k + \lambda \sum_{k \ne k'}r_{k'}' \frac{\x_{k'} - \x_k}{\|\x_{k} - \x_{k'}\|}-\x_k\\
    =& \bz.
\end{align*}

By KKT conditions, the solution defined above forms an optimal primal-dual pair.

\end{proof}

\section{Proof of Lemma \ref{lemma_bdelta}, \ref{lemma_g_diff}, \ref{lemma_bomega} and \ref{lemma_main}}
\label{SM_C}
\underline{Bound $\norm{\bdelta_{ij}}$}

We restate Lemma \ref{lemma_bdelta} as follows:
\begin{lemma}
For all $i,j \in C, i < j$, the following inequality holds
\[
\norm{\bdelta_{ij}} \le \lambda - r + p' \mu
\]
where $r:= \min_{l \ne l', l, l' \in C_k, k \in [K]} (\lambda - \norm{\bdelta^a_{ll'}}) > 0$.
\end{lemma}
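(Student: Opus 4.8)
The plan is to reduce the bound to a one-line triangle-inequality estimate and concentrate the real work on showing $r>0$. Recall that in Section~\ref{sec:guarantee} we fixed $p'\ge 0$ so that $\norm{\bdelta_{ij}-\bdelta^a_{ij}}\le p'\mu$ holds for every distinct pair $(i,j)$, where $\bdelta^a$ denotes the dual analytic center. Fixing $i,j\in C=C_k$ with $i<j$, the triangle inequality gives
\[
\norm{\bdelta_{ij}}\le\norm{\bdelta^a_{ij}}+\norm{\bdelta_{ij}-\bdelta^a_{ij}}\le\norm{\bdelta^a_{ij}}+p'\mu.
\]
Since $i$ and $j$ belong to the same cluster $C_k$, the pair $(i,j)$ is one of the pairs over which $r$ is minimized, so $r\le\lambda-\norm{\bdelta^a_{ij}}$, i.e.\ $\norm{\bdelta^a_{ij}}\le\lambda-r$. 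Substituting this into the previous display yields the asserted bound $\norm{\bdelta_{ij}}\le\lambda-r+p'\mu$. Thus the arithmetic is immediate once $r$ is shown to be well defined and strictly positive.

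The substance of the lemma is therefore the claim $r>0$, which I would derive from strict complementarity at the analytic center. Because $\lambda$ is not a fusion value, Theorem~\ref{thm:strict_complementarity_socp} supplies a strictly complementary optimal pair. Strict complementarity is a property of the relative interior of the optimal set (the optimal partition is constant there), and the analytic centers lie in that relative interior; hence the analytic-center pair is itself strictly complementary and satisfies \eqref{eq:strict_cs_a}. For a within-cluster pair $l,l'\in C_k$, the primal optimizer—unique by strong convexity of the quadratic term in \eqref{eq:multweight}—satisfies $\x^*_l=\x^*_{l'}$, so its analytic-center coordinates obey $\y^a_{ll'}=\x^a_l-\x^a_{l'}=\bz$ and $t^a_{ll'}=\norm{\y^a_{ll'}}=0$. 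Plugging $\y^a_{ll'}=\bz$ and $t^a_{ll'}=0$ into \eqref{eq:strict_cs_a} collapses it to $\lambda>\norm{\bdelta^a_{ll'}}$, i.e.\ $\lambda-\norm{\bdelta^a_{ll'}}>0$.

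Finally, $r$ is the minimum of $\lambda-\norm{\bdelta^a_{ll'}}$ over the finite collection of within-cluster pairs, and each such term is strictly positive by the previous step; the minimum of finitely many positive numbers is positive, so $r>0$. I expect the main obstacle to be this middle paragraph: transferring strict complementarity from the merely \emph{existing} strictly complementary solution of Theorem~\ref{thm:strict_complementarity_socp} to the analytic center, and verifying that the within-cluster primal coordinates $\y^a_{ll'}$ and $t^a_{ll'}$ genuinely vanish so that \eqref{eq:strict_cs_a} reduces to the strict inequality $\lambda>\norm{\bdelta^a_{ll'}}$. The perturbation estimate and the reduction to $r$ are then routine.
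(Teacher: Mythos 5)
Your proof is correct and takes essentially the same route as the paper's: the triangle inequality combined with the analytic-center proximity bound $\norm{\bdelta_{ij}-\bdelta^a_{ij}}\le p'\mu$, plus strict complementarity at the analytic center to obtain $\norm{\bdelta^a_{ll'}}<\lambda$ for within-cluster pairs and hence $r>0$. The paper compresses your entire middle paragraph into the phrase ``by the definition of analytic center and strict complementarity''; your expansion---that the analytic center lies in the relative interior of the optimal set where strict complementarity holds, that uniqueness of $\x^*$ forces $\y^a_{ll'}=\bz$ and $t^a_{ll'}=0$ for within-cluster pairs, so that \eqref{eq:strict_cs_a} collapses to $\lambda>\norm{\bdelta^a_{ll'}}$---is precisely the argument the paper leaves implicit.
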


\begin{proof}
Let $i, j \in C$ and $i < j$. By the definition of analytic center and strict complementarity,
\[
\norm{\bdelta^a_{ll'}} < \lambda,
\]
holds for all $l < l', l, l' \in C_k, k \in [K]$. Hence, $r >0$ by definition. Moreover, $r$ also satisfies
\[
\norm{\bdelta^a_{ij}} \le \lambda - r, \quad \forall i,j \in C, i < j.
\]
Since $\norm{\bdelta_{ij} - \bdelta^a_{ij}} \le p' \mu$, we obtain
\[
\norm{\bdelta_{ij}} \le \lambda - r + p' \mu, \quad \forall i,j \in C, i < j.
\]
\end{proof}

\underline{Bound $\norm{\bdelta_{\abrack{ik}} - \bdelta_{\abrack{jk}}}$}
\label{sec:g_diff}

We restate Lemma \ref{lemma_g_diff} as follows:
\begin{lemma}
For all $i,j \in C$ and $k \notin C$, the following inequality holds
\[
\norm{\bdelta_{\abrack{ik}} - \bdelta_{\abrack{jk}}} \le \frac{4 \lambda p \mu}{q - 2p \mu} +\frac{\left (\sqrt{ \frac{1}{r_i r_k}} + \sqrt{\frac{1}{r_j r_k}}\right ) \cdot \sqrt{\sum_{l=1}^n r_l \norm{\bar \a - \a_l}^2 \mu+ 2\mu^2 }} {q - 2p \mu} + \frac{\mu}{q - 2p \mu}
\]
\end{lemma}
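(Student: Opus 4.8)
The plan is to solve for each relevant dual variable from the second complementary-slackness residual \eqref{eq:mwcs_a2} and then estimate the difference term by term. Writing the bracketed variables via \eqref{eq:abrack_notation}, we have $\y_{\abrack{ik}}=\x_i-\x_k$, and since the sign flips cancel inside norms it suffices to treat the case $i,j<k$ and drop the brackets, the other orderings being identical after harmless sign changes. Equation \eqref{eq:mwcs_a2} gives $t_{ik}\bdelta_{ik}=\bepsilon_2^{ik}-\lambda\y_{ik}$, and likewise for the pair $(j,k)$, so that $\bdelta_{ik}=(\bepsilon_2^{ik}-\lambda\y_{ik})/t_{ik}$ is well defined once I show $t_{ik}>0$.

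First I would record the denominators and lengths. Because $i,j\in C$ lie in one cluster while $k\notin C$ lies in another, the analytic-center separation $q$ together with the running assumption $\mu<q/(2p)$ gives $t_{jk}\ge\norm{\y_{jk}}=\norm{\x_j-\x_k}\ge q-2p\mu>0$ by \eqref{eq:mwp_constr4} and \eqref{eq:mwp_constr1} (and similarly for $t_{ik}$). Since $i,j$ lie in the same cluster, $\norm{\y_{ik}-\y_{jk}}=\norm{\x_j-\x_i}\le 2p\mu$. Dual feasibility \eqref{eq:mwd_constr2} supplies $\norm{\bdelta_{ik}}\le\lambda$. Crucially, $t_{ik}$ need not equal $\norm{\y_{ik}}$ for a generic feasible point: from the first residual \eqref{eq:mwcs_a1}, $\lambda t_{ik}+\y_{ik}^T\bdelta_{ik}=\epsilon_1^{ik}\le\mu$, and combining with $\y_{ik}^T\bdelta_{ik}\ge-\lambda\norm{\y_{ik}}$ yields $0\le t_{ik}-\norm{\y_{ik}}\le\mu/\lambda$; hence $|t_{ik}-t_{jk}|\le\bigl|\norm{\y_{ik}}-\norm{\y_{jk}}\bigr|+\mu/\lambda\le 2p\mu+\mu/\lambda$.

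Next I would split the difference through a common denominator,
\[
\bdelta_{ik}-\bdelta_{jk}=\frac{t_{jk}-t_{ik}}{t_{jk}}\,\bdelta_{ik}+\frac{1}{t_{jk}}\bigl((\bepsilon_2^{ik}-\bepsilon_2^{jk})-\lambda(\y_{ik}-\y_{jk})\bigr),
\]
which is verified by substituting $t_{ik}\bdelta_{ik}=\bepsilon_2^{ik}-\lambda\y_{ik}$. Taking norms and using the estimates above, the first summand is at most $\frac{|t_{jk}-t_{ik}|}{t_{jk}}\lambda\le\frac{(2p\mu+\mu/\lambda)\lambda}{q-2p\mu}=\frac{2\lambda p\mu+\mu}{q-2p\mu}$, while the second is at most $\frac{1}{q-2p\mu}\bigl(\norm{\bepsilon_2^{ik}}+\norm{\bepsilon_2^{jk}}+2\lambda p\mu\bigr)$. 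Finally I would insert the residual bound \eqref{eq:mw_bound_epsilon2}, namely $\norm{\bepsilon_2^{ik}}\le\sqrt{1/(r_ir_k)}\cdot\sqrt{\sum_{l}r_l\norm{\bar\a-\a_l}^2\mu+2\mu^2}$ and analogously for $\norm{\bepsilon_2^{jk}}$. Collecting the $\lambda p\mu$ contributions (two from each summand, giving $4\lambda p\mu$), the residual contributions, and the leftover $\mu$ reproduces exactly the three terms in the stated bound.

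I expect the main obstacle to be the third term $\mu/(q-2p\mu)$: it is tempting to assume $t_{ik}=\norm{\y_{ik}}$ (which holds for the particular feasible point \eqref{eq:updated_feasible_sol} and would make this term vanish), and one must instead track the $O(\mu)$ gap $t_{ik}-\norm{\y_{ik}}$ controlled by \eqref{eq:mwcs_a1} in order to obtain a bound valid for an arbitrary feasible pair. The only other care needed is the cluster-separation step $t_{jk}\ge q-2p\mu$, which relies on the standing assumption $\mu<q/(2p)$ and on $i,j$ and $k$ belonging to different clusters, together with the sign bookkeeping hidden in the $\abrack{\cdot}$ notation.
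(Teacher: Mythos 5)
Your proof is correct and follows essentially the same route as the paper's: both isolate $t\,\bdelta$ from the residual identity \eqref{eq:mwcs_a2}, bound $|t_{ik}-t_{jk}|$ via \eqref{eq:mwcs_a1} together with dual feasibility $\norm{\bdelta}\le\lambda$, lower-bound the denominator by the cluster separation $q-2p\mu$, and combine by the triangle inequality. The only difference is cosmetic (you divide by $t_{jk}$ and attach $\bdelta_{ik}$ to the $|t_{jk}-t_{ik}|$ term, while the paper divides by $t_{ik}$ and attaches $\bdelta_{jk}$), and both yield exactly the stated bound.
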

\begin{proof}
Let $i, j \in C$ and $k \notin C$. Without loss of generality, we may assume $i<j<k$. Hence, $\bdelta_{\abrack{ik}}=\bdelta_{ik}, \bdelta_{\abrack{jk}}=\bdelta_{jk}$. By $\eqref{eq:mwcs_a2}$, we derive
\begin{equation}
t_{ik} \bdelta_{ik} - t_{jk} \bdelta_{jk} = - \lambda \y_{ik} +  \lambda \y_{jk}+\bepsilon_2^{ik} - \bepsilon_2^{jk} = - \lambda (\x_i - \x_j) +\bepsilon_2^{ik} - \bepsilon_2^{jk}.
\label{eq:tdelta_diff}
\end{equation}
Adding the term $(t_{jk} - t_{ik}) \bdelta_{jk}$ to both sides of the equality to obtain
\[
t_{ik}(\bdelta_{ik} - \bdelta_{jk}) =  (t_{jk} - t_{ik}) \bdelta_{jk} - \lambda (\x_i - \x_j) +\bepsilon_2^{ik} - \bepsilon_2^{jk}.
\]
Notice that $t_{ik} \ge \norm{\y_{ik}} = \norm{\x_i - \x_k} \ge q - 2p \mu > 0$ by the primal constraint \eqref{eq:mwp_constr4} and our assumption on the duality gap. Divide the equality above by $t_{ik}$ to obtain
\begin{equation}
\bdelta_{ik} - \bdelta_{jk} =  \frac{t_{jk} - t_{ik}}{t_{ik}} \bdelta_{jk} - \frac{\lambda (\x_i - \x_j)}{t_{ik}} +\frac{\bepsilon_2^{ik} - \bepsilon_2^{jk}}{t_{ik}}.
\label{eq:delta_diff}
\end{equation}
By the perturbed complementary slackness \eqref{eq:mwcs_a1}, the primal constraint \eqref{eq:mwd_constr2} and the Cauchy-Schwarz inequality, we derive the following inequality
\[
\epsilon_1^{ik} = t_{ik} \lambda + \y_{ik}^T \bdelta_{ik} \ge t_{ik} \lambda - \norm{\y_{ik}} \cdot \norm{\bdelta_{ik}} \ge t_{ik} \lambda - \norm{\y_{ik}} \cdot \lambda,
\]
 which yields an upper bound on $t_{ik}$
 \[
 t_{ik} \le \norm{\y_{ik}} + \frac{\epsilon_1^{ik}}{\lambda}.
 \]
Combined with the primal constraint \eqref{eq:mwp_constr4} at $t_{jk}$ and the triangle inequality, we obtain the following
\begin{equation}
t_{ik} - t_{jk} \le \norm{\y_{ik}} + \frac{\epsilon_1^{ik}}{\lambda} - \norm{\y_{jk}} \le \norm{\y_{ik} - \y_{jk}} + \frac{\epsilon_1^{ik}}{\lambda} = \norm{\x_i - \x_j} + \frac{\epsilon_1^{ik}}{\lambda}.
\label{eq:t_diff}
\end{equation}
The same inequality holds for $t_{jk} - t_{ik}$ due to the symmetry of \eqref{eq:t_diff}. By \eqref{eq:delta_diff}, \eqref{eq:t_diff} and triangle inequality, the norm bound of $\bdelta_{\abrack{ik}} - \bdelta_{\abrack{jk}}$ is as follows
\begin{align*}
\norm{\bdelta_{\abrack{ik}} - \bdelta_{\abrack{jk}}}
&\le \frac{|t_{ik} - t_{jk}| \cdot \norm{\bdelta_{jk}}}{t_{ik}} + \frac{\lambda \norm{\x_i - \x_j}}{t_{ik}} +\frac{\norm{\bepsilon_2^{ik}} + \norm{\bepsilon_2^{jk}}}{t_{ik}} \quad \text{(By triangle inequality)}\\
&\le \frac{\norm{\x_i - \x_j} + \frac{\epsilon_1^{ik}}{\lambda}}{t_{ik}} \norm{\bdelta_{jk}} + \frac{\lambda \norm{\x_i - \x_j}}{t_{ik}} +\frac{\norm{\bepsilon_2^{ik}} + \norm{\bepsilon_2^{jk}}}{t_{ik}} \quad \text{(By \eqref{eq:t_diff})} \\
&\le \frac{2\lambda \norm{\x_i - \x_j}}{t_{ik}} + \frac{\norm{\bepsilon_2^{ik}} + \norm{\bepsilon_2^{jk}}}{t_{ik}} + \frac{\epsilon_1^{ik}}{t_{ik}} \quad \text{(By \eqref{eq:mwd_constr2} and \eqref{eq:tdelta_diff})}.
\end{align*}
Since $i,j \in C$ and $k \notin C$, there hold $t_{ik} \ge \norm{\y_{ik}} = \norm{\x_i - \x_k} \ge q - 2p\mu$ and $\norm{\x_i - \x_j} \le 2p \mu$. Moreover, there also hold $\epsilon_1^{ik} \le \mu$,
$\norm{\bepsilon_2^{ik}} \le \sqrt{\frac{1}{r_i r_k} \left (\sum_{l=1}^n r_l \norm{\bar \a - \a_l}^2 \mu+ 2\mu^2 \right)}$ and $\norm{\bepsilon_2^{jk}} \le \sqrt{\frac{1}{r_j r_k} \left (\sum_{l=1}^n r_l \norm{\bar \a - \a_l}^2 \mu+ 2\mu^2 \right)}$.
Hence, $\norm{\bdelta_{\abrack{ik}} - \bdelta_{\abrack{jk}}}$ is further upper bounded as follows
\begin{equation}
    \norm{\bdelta_{\abrack{ik}} - \bdelta_{\abrack{jk}}} \le \frac{4 \lambda p \mu}{q - 2p \mu} +\frac{\left (\sqrt{ \frac{1}{r_i r_k}} + \sqrt{\frac{1}{r_j r_k}}\right ) \cdot \sqrt{\sum_{l=1}^n r_l \norm{\bar \a - \a_l}^2 \mu+ 2\mu^2 }} {q - 2p \mu} + \frac{\mu}{q - 2p \mu}
    \label{eq:norm_delta_diff}
\end{equation}

\end{proof}

\underline{Bound $\norm{\bomega_i}$}
\label{sec:omega}

We restate Lemma \ref{lemma_bomega} as follows:
\begin{lemma}
For all $i\in C$, it holds
\[
\norm{\bomega_i} \le 2 \sqrt{ 2 \cdot \left (\frac{1}{r_i}\sum_{l=1}^n r_l \norm{\bar \a - \a_l}^2 + \frac{2 \mu}{r_i} + 1 \right) \cdot \left (\frac{1}{2} + \frac{1}{r_i} \left (\sum_{l=1}^n r_l (r' - r_i) \lambda \norm{\a_l} + \mu \right) \right ) \cdot \mu}.
\]
\end{lemma}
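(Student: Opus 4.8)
The plan is to read off the bound directly from the definition \eqref{eq:subgrad_omega_def}, $\bomega_i=\frac{\sigma_3^i}{s_i}\z_i+\frac{1}{s_i}\bsigma_2^i$, and then feed in the estimate \eqref{eq:mw_bound_sigma23} already proved for $\norm{\begin{pmatrix}\bsigma_2^i\\\sigma_3^i\end{pmatrix}}$. Observe that the right-hand side of the lemma is exactly $2\sqrt{2}$ times the right-hand side of \eqref{eq:mw_bound_sigma23} (the outer factor $2$ and the factor $2$ under the radical combine as $2\sqrt{2}$). Hence it suffices to establish the dimensionless intermediate inequality
\[
\norm{\bomega_i}\le 2\sqrt{2}\,\norm{\begin{pmatrix}\bsigma_2^i\\\sigma_3^i\end{pmatrix}},
\]
after which substituting \eqref{eq:mw_bound_sigma23} finishes the proof.

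First I would apply the triangle inequality to the definition, giving $\norm{\bomega_i}\le \frac{|\sigma_3^i|}{s_i}\norm{\z_i}+\frac{1}{s_i}\norm{\bsigma_2^i}$. Two elementary feasibility facts handle the coefficients. From the primal constraint \eqref{eq:mwp_constr5} we have $s_i\ge\norm{\begin{pmatrix}\z_i\\u_i\end{pmatrix}}\ge\norm{\z_i}$, so $\norm{\z_i}/s_i\le1$ and the first term is at most $|\sigma_3^i|$. Moreover, as already noted in the proof of Lemma~\ref{cluster_lemma_subgradient}, constraints \eqref{eq:mwp_constr3} and \eqref{eq:mwp_constr5} force $s_i\ge\tfrac12$, so $1/s_i\le2$ and the second term is at most $2\norm{\bsigma_2^i}$. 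Combining yields $\norm{\bomega_i}\le |\sigma_3^i|+2\norm{\bsigma_2^i}$.

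It then remains to pass to the joint norm. Since $|\sigma_3^i|\ge0$ I would enlarge its coefficient to write $|\sigma_3^i|+2\norm{\bsigma_2^i}\le 2\bigl(|\sigma_3^i|+\norm{\bsigma_2^i}\bigr)$, and then invoke the scalar inequality $a+b\le\sqrt{2}\sqrt{a^2+b^2}$ for $a,b\ge0$ (immediate from $2ab\le a^2+b^2$) to get $2\bigl(|\sigma_3^i|+\norm{\bsigma_2^i}\bigr)\le 2\sqrt{2}\sqrt{(\sigma_3^i)^2+\norm{\bsigma_2^i}^2}=2\sqrt{2}\,\norm{\begin{pmatrix}\bsigma_2^i\\\sigma_3^i\end{pmatrix}}$. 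This is the intermediate inequality, and \eqref{eq:mw_bound_sigma23} converts it into the stated bound.

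I do not expect a real obstacle: the lemma is essentially a repackaging of \eqref{eq:mw_bound_sigma23} through the definition of $\bomega_i$, the only genuine content being the constant $2\sqrt{2}$. The one point needing care is that the two feasibility inequalities $\norm{\z_i}\le s_i$ and $s_i\ge\tfrac12$ be derived from the primal constraints themselves, so that the bound holds for an arbitrary feasible iterate rather than only for the particular construction \eqref{eq:updated_feasible_sol}. There is some slack in the constant (using $s_i\ge\tfrac12(1+\norm{\z_i}^2)$, which also follows from \eqref{eq:mwp_constr3} and \eqref{eq:mwp_constr5}, one can sharpen $2\sqrt{2}$ to $2$), but the looser constant $2\sqrt{2}$ is all that Lemma~\ref{lemma_main} requires.
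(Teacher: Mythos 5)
Your proposal is correct and follows essentially the same route as the paper's proof: triangle inequality on the definition \eqref{eq:subgrad_omega_def}, feasibility bounds on $\norm{\z_i}/s_i$ and $1/s_i$, the elementary inequality $a+b\le\sqrt{2}\sqrt{a^2+b^2}$, and substitution of \eqref{eq:mw_bound_sigma23}. The only (harmless) differences are that you bound $\norm{\z_i}/s_i\le 1$ directly from \eqref{eq:mwp_constr5} where the paper uses $\norm{\z_i}\le\sqrt{2s_i}$ to get the coefficient $2$, and that you are slightly more careful in writing $|\sigma_3^i|$ rather than $\sigma_3^i$.
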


\begin{proof}
Let $i \in C$. By definition,
\[
\bomega_i = \frac{\sigma_3^i}{s_{i}} \z_{i} + \frac{1}{s_i} \bsigma_2^i.
\]
By the primal constraint \eqref{eq:mwp_constr5}, we have
\[
\norm{\z_{i}} \le \sqrt{2s_i -1} \le \sqrt{2s_i}, \quad s_i \ge \frac{1}{2},
\]
which implies
\[
\frac{\norm{\z_i}}{s_i} \le \sqrt{\frac{2}{s_i}} \le \sqrt{4} = 2, \quad \frac{1}{s_i} \le 2.
\]
Coupled with triangle inequality, these two inequalities yield
\[
\norm{\bomega_i} \le \frac{\norm{\z_{i}}}{s_{i}} \sigma_3^i + \frac{1}{s_i} \norm{\bsigma_2^i} \le 2 \sigma_3^i + 2 \norm{\bsigma_2^i}.
\]
Moreover, since
\[
\norm{\begin{pmatrix}
\bsigma_2^i\\
\sigma_3^i
\end{pmatrix}} \le \sqrt{\left (\frac{1}{r_i}\sum_{l=1}^n r_l \norm{\bar \a - \a_l}^2 + \frac{2 \mu}{r_i} + 1 \right) \cdot \left (\frac{1}{2} + \frac{1}{r_i} \left (\sum_{l=1}^n r_l (r' - r_i) \lambda \norm{\a_l} + \mu \right ) \right ) \cdot \mu}
\]
holds for any $i \in [n]$ by Lemma \ref{lemma:O(sqrt(mu))} and the duality gap,
\[
(\sigma_3^i)^2 + \norm{\bsigma_2^{i}}^2 \le \left (\frac{1}{r_i}\sum_{l=1}^n r_l \norm{\bar \a - \a_l}^2 + \frac{2 \mu}{r_i} + 1 \right) \cdot \left (\frac{1}{2} + \frac{1}{r_i} \left (\sum_{l=1}^n r_l (r' - r_i) \lambda \norm{\a_l} + \mu \right) \right ) \cdot \mu.
\]
which implies the following inequality since $(a + b)^2 \le 2 a^2 + 2b^2$
\[
(\sigma_3^i + \norm{\bsigma_2^{i}})^2  \le 2 \cdot \left (\frac{1}{r_i}\sum_{l=1}^n r_l \norm{\bar \a - \a_l}^2 + \frac{2 \mu}{r_i} + 1 \right) \cdot \left (\frac{1}{2} + \frac{1}{r_i} \left (\sum_{l=1}^n r_l (r' - r_i) \lambda \norm{\a_l} + \mu \right) \right ) \cdot \mu.
\]
Therefore, the following holds as $i \in C$ is chosen arbitrarily:
\begin{align*}
\norm{\bomega_i} &\le 2 \sigma_3^i + 2 \norm{\bsigma_2^i}\\
&\le 2 \sqrt{ 2 \cdot \left (\frac{1}{r_i}\sum_{l=1}^n r_l \norm{\bar \a - \a_l}^2 + \frac{2 \mu}{r_i} + 1 \right) \cdot \left (\frac{1}{2} + \frac{1}{r_i} \left (\sum_{l=1}^n r_l (r' - r_i) \lambda \norm{\a_l} + \mu \right) \right ) \cdot \mu}.
\end{align*}

\end{proof}

\underline{Bound CGR subgradients}
\label{sec:cgr_bound}

We restate Lemma \ref{lemma_main} as follows:
\begin{lemma}
For all $i, j \in C$ and $i < j$, there holds
    \begin{align*}
    &\norm{\q_{ij}}\\
     \le & \frac{2}{r'} \sqrt{ 2 \cdot \left (\frac{1}{r_i}\sum_{l=1}^n r_l \norm{\bar \a - \a_l}^2 + \frac{2 \mu}{r_i} + 1 \right) \cdot \left (\frac{1}{2} + \frac{1}{r_i} \left (\sum_{l=1}^n r_l (r' - r_i) \lambda \norm{\a_l} + \mu \right) \right ) \cdot \mu} \\
    + & \frac{2}{r'} \sqrt{ 2 \cdot \left (\frac{1}{r_j}\sum_{l=1}^n r_l \norm{\bar \a - \a_l}^2 + \frac{2 \mu}{r_j} + 1 \right) \cdot \left (\frac{1}{2} + \frac{1}{r_j} \left (\sum_{l=1}^n r_l (r' - r_j) \lambda \norm{\a_l} + \mu \right) \right ) \cdot \mu} \\
    +& \frac{1}{r'} \sum_{k \notin C} r_k \left ( \frac{4 \lambda p \mu}{q - 2p \mu} + \frac{2 \sqrt{\sum_{l=1}^n \norm{\bar \a - \a_l}^2 \mu + 2\mu^2}}{q - 2p \mu} + \frac{\mu}{q - 2p \mu} \right) + \lambda - r + p' \mu + \frac{2 p \mu}{r'}
  \end{align*}
\end{lemma}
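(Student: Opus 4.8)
The plan is to bound $\norm{\q_{ij}}$ directly from its definition \eqref{eq:q_def} via a single application of the triangle inequality, and then substitute the three preceding lemmas termwise. Writing
\[
\q_{ij} = -\bdelta_{ij} + \tfrac{1}{r'}(\x_i - \x_j) - \tfrac{1}{r'}\bomega_i + \tfrac{1}{r'}\bomega_j - \tfrac{1}{r'}\sum_{k \notin C} r_k(\bdelta_{\abrack{ik}} - \bdelta_{\abrack{jk}}),
\]
the triangle inequality immediately yields
\[
\norm{\q_{ij}} \le \norm{\bdelta_{ij}} + \tfrac{1}{r'}\norm{\x_i - \x_j} + \tfrac{1}{r'}\norm{\bomega_i} + \tfrac{1}{r'}\norm{\bomega_j} + \tfrac{1}{r'}\sum_{k \notin C} r_k \norm{\bdelta_{\abrack{ik}} - \bdelta_{\abrack{jk}}}.
\]
The whole lemma then follows by matching each of these five groups to an estimate that has already been established.

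First I would treat the $\norm{\bdelta_{ij}}$ term using Lemma \ref{lemma_bdelta}, which gives $\norm{\bdelta_{ij}} \le \lambda - r + p'\mu$ and accounts for the $\lambda - r + p'\mu$ summand in the claimed bound. Next, since $C = C_k$ is a single candidate cluster and both $i,j \in C$, the central-path proximity estimates recorded in Section~\ref{sec:guarantee} give $\norm{\x_i - \x_j} \le 2p\mu$ (the two points share a common analytic-center coordinate, each iterate lying within $p\mu$ of it); dividing by $r'$ produces the $\tfrac{2p\mu}{r'}$ summand. I would then apply Lemma \ref{lemma_bomega} to $\norm{\bomega_i}$ and to $\norm{\bomega_j}$ separately; each bound is $2\sqrt{\cdots}$, and after division by $r'$ these become the two leading $\tfrac{2}{r'}\sqrt{\cdots}$ summands, one carrying $r_i$ and the other $r_j$. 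Finally, Lemma \ref{lemma_g_diff} bounds each $\norm{\bdelta_{\abrack{ik}} - \bdelta_{\abrack{jk}}}$ for $k \notin C$, yielding the $\tfrac{1}{r'}\sum_{k\notin C} r_k(\cdots)$ summand. Summing the five contributions reproduces \eqref{eq:cgr_subgradient_norm} exactly.

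There is no genuine analytic obstacle here: all of the real work has been discharged in the three auxiliary lemmas, and the proof is a direct triangle-inequality assembly. The only points requiring care are bookkeeping ones—keeping the $i$- and $j$-indexed $\bomega$ bounds distinct, since they depend on the different weights $r_i$ and $r_j$, and justifying the within-cluster proximity $\norm{\x_i - \x_j} \le 2p\mu$, which is where the hypothesis that $C$ is a \emph{single} cluster (so that $i$ and $j$ share the same analytic-center position) is used. Once these are in place, collecting terms gives the stated inequality.
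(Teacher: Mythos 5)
Your proposal is correct and is essentially identical to the paper's own proof: the paper also expands \eqref{eq:q_def}, applies the triangle inequality once, bounds $\norm{\x_i-\x_j}\le 2p\mu$ via the within-cluster analytic-center proximity from Section~\ref{sec:guarantee}, and then substitutes Lemmas \ref{lemma_bdelta}, \ref{lemma_g_diff}, and \ref{lemma_bomega} termwise. No difference in approach or substance.
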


\begin{proof}
Let $i, j \in C$ and $i < j$. By triangle inequality,
\[
\norm{\q_{ij}}
\le \norm{\bdelta_{ij}} + \frac{1}{r'} \cdot (\norm{\x_i - \x_j} + \norm{\bomega_i} + \norm{\bomega_j}) + \frac{1}{r'} \sum_{k \notin C} r_k\norm{\bdelta_{\abrack{ik}} - \bdelta_{\abrack{jk}}}.
\]
With the assumptions on the distance between points,
\[
\norm{\x_i - \x_j} \le 2 p \mu.
\]
By Lemma \ref{lemma_bdelta}, \ref{lemma_g_diff}, Lemma \ref{lemma_bomega} and the inequality above, we obtain
\begin{equation}
    \begin{aligned}
    &\norm{\q_{ij}}\\
     \le & \frac{2}{r'} \sqrt{ 2 \cdot \left (\frac{1}{r_i}\sum_{l=1}^n r_l \norm{\bar \a - \a_l}^2 + \frac{2 \mu}{r_i} + 1 \right) \cdot \left (\frac{1}{2} + \frac{1}{r_i} \left (\sum_{l=1}^n r_l (r' - r_i) \lambda \norm{\a_l} + \mu \right) \right ) \cdot \mu} \\
    + & \frac{2}{r'} \sqrt{ 2 \cdot \left (\frac{1}{r_j}\sum_{l=1}^n r_l \norm{\bar \a - \a_l}^2 + \frac{2 \mu}{r_j} + 1 \right) \cdot \left (\frac{1}{2} + \frac{1}{r_j} \left (\sum_{l=1}^n r_l (r' - r_j) \lambda \norm{\a_l} + \mu \right) \right ) \cdot \mu} \\
    +& \frac{1}{r'} \sum_{k \notin C} r_k \left ( \frac{4 \lambda p \mu}{q - 2p \mu} + \frac{2 \sqrt{\sum_{l=1}^n \norm{\bar \a - \a_l}^2 \mu + 2\mu^2}}{q - 2p \mu} + \frac{\mu}{q - 2p \mu} \right) + \lambda - r + p' \mu + \frac{2 p \mu}{r'}
    \end{aligned}
\end{equation}
as desired.
\end{proof}

\section{Half-moons}
\label{sec:SM_Z}
To illustrate the clustering at $\lambda = 0.0012$, we also plot the two half moons and color the clusters as shown in Figure \ref{fig:halfmoons}. Red instances belong to one cluster, and blue instances belong to another cluster. Yellow instances are assigned to clusters of singleton points.

\begin{figure}[h]
    \centering
    \includegraphics[scale=0.6]{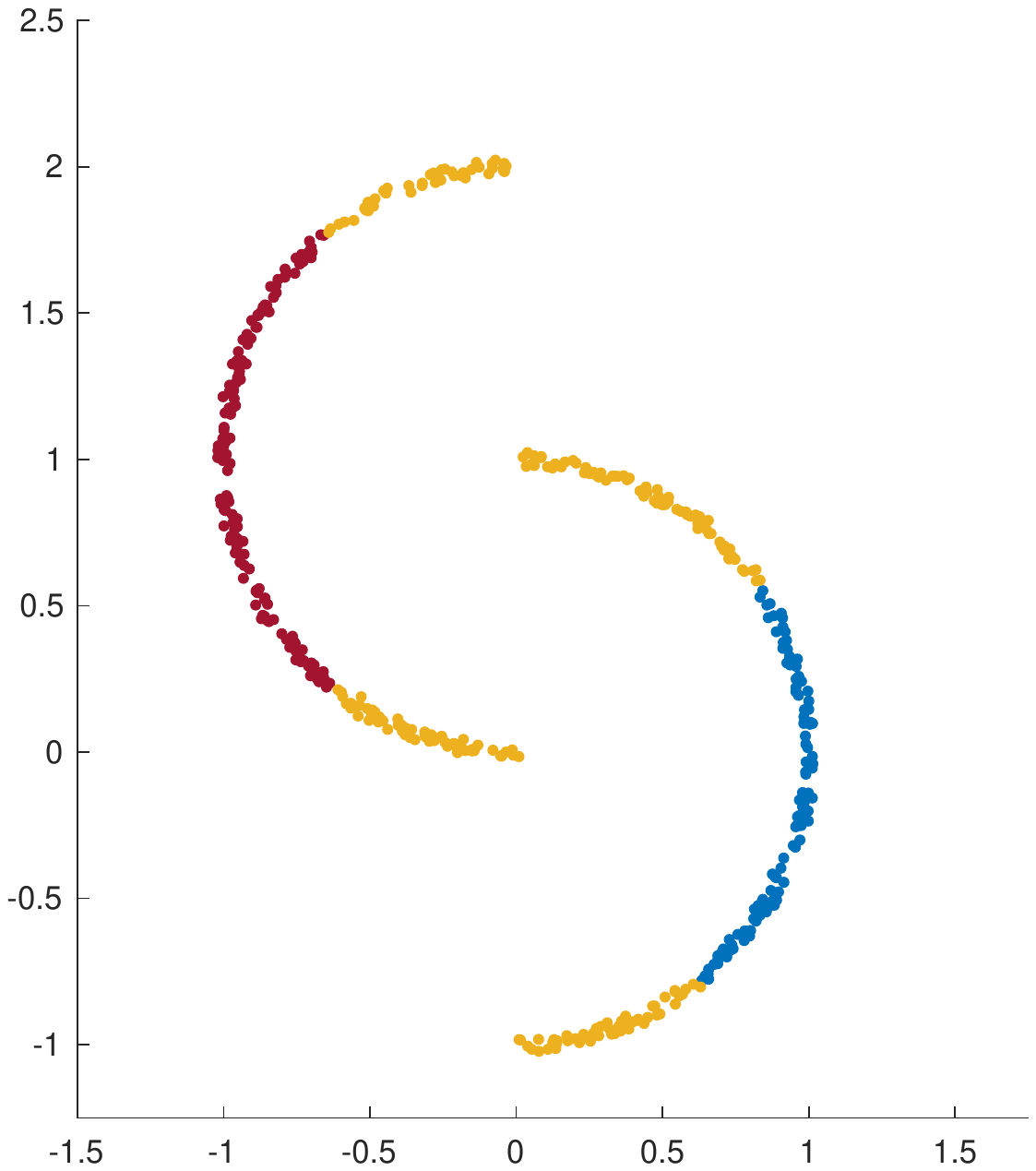}
    \caption{Labeled points with clustering at $\lambda = 0.0012$}
    \label{fig:halfmoons}
\end{figure}

\end{document}